\theoremstyle{plain}
\newtheorem{theorem}{Theorem}[section]
\newtheorem{proposition}[theorem]{Proposition}
\newtheorem{lemma}[theorem]{Lemma}
\newtheorem{corollary}[theorem]{Corollary}
\theoremstyle{definition}
\newtheorem{definition}[theorem]{Definition}
\theoremstyle{remark}
\begin{document}

\title{On the Spectral Bias of Convolutional Neural Tangent and Gaussian Process Kernels}
\date{} 
\author{Amnon Geifman$^1$ ~~~ Meirav Galun$^1$ ~~~ David Jacobs$^2$~~~ Ronen Basri$^1$\\~\\
$^1$Weizmann Institute of Science, Rehovot, Israel \\
$^2$University of Maryland at College Park, MD, USA}
\maketitle






\vskip 0.3in








\newcommand{\Real}{\mathbb{R}}
\newcommand{\N}{\mathbb{N}}
\newcommand{\Sphere}{\mathbb{S}}
\newcommand{\Ind}{\mathds{I}}
\newcommand{\norm}[1]{\left\lVert#1\right\rVert}
\newcommand{\abs}[1]{\left\vert#1\right\rvert}

\newcommand{\aw}{\mathbf{a}}
\newcommand{\bias}{\mathbf{b}}
\newcommand{\f}{\mathbf{f}}
\newcommand{\g}{\mathbf{g}}
\newcommand{\h}{\mathbf{h}}
\newcommand{\ii}{\mathbf{i}}
\newcommand{\jj}{\mathbf{j}}
\newcommand{\kk}{\mathbf{k}}
\newcommand{\m}{\mathbf{m}}
\newcommand{\n}{\mathbf{n}}
\newcommand{\p}{\mathbf{p}}
\newcommand{\uu}{\mathbf{u}}
\newcommand{\vv}{\mathbf{v}}
\newcommand{\w}{\mathbf{w}}
\newcommand{\x}{\mathbf{x}}
\newcommand{\X}{\mathbf{X}}
\newcommand{\y}{\mathbf{y}}
\newcommand{\z}{\mathbf{z}}
\newcommand{\s}{\mathbf{s}}
\newcommand{\tbf}{\mathbf{t}}
\newcommand{\R}{\mathcal{R}}
\newcommand{\feqn}{f^{\mathrm{Eq}}}
\newcommand{\ftrace}{f^{\mathrm{Tr}}}
\newcommand{\fgap}{f^{\mathrm{GAP}}}
\newcommand{\kr}{\mathrlap{^-}\boldsymbol{k}}
\newcommand{\kreqn}{\boldsymbol{k}^{\mathrm{EqNet}}}
\newcommand{\krtrace}{\boldsymbol{k}^{\mathrm{Tr}}}
\newcommand{\krgap}{\boldsymbol{k}^{\mathrm{GAP}}}
\newcommand{\lambeqn}{\lambda^{\mathrm{EqNet}}}
\newcommand{\lambtr}{\lambda^{\mathrm{Tr}}}
\newcommand{\lambgap}{\lambda^{\mathrm{GAP}}}
\newcommand{\sgn}{{\mathrm{sign}}}
\newcommand{\cnt}{{\mathrm{cnt}}}
\newcommand{\Unif}{\mathrm{Unif}}

\newcommand{\zero}{\mathbf{0}}
\newcommand{\one}{\mathbf{1}}
\newcommand{\dbar}{{d}}
\newcommand{\qbar}{\zeta}
\newcommand{\ms}{\mathbb{MS}(\qbar,d)}
\newcommand{\allones}{\mathbf{1}}

\newcommand{\rb}[1]{\textcolor{blue}{[Ronen: #1]}}
\newcommand{\ag}[1]{\textcolor{red}{[Amnon: #1]}}
\newcommand{\mg}[1]{\textcolor{cyan}{[Meirav: #1]}}
\newcommand{\dwj}[1]{\textcolor{violet}{[David: #1]}}

\begin{abstract}
We study the properties of various over-parametrized convolutional neural  architectures through their respective Gaussian process and neural tangent kernels. We prove that, with normalized multi-channel input and ReLU activation, the eigenfunctions of these kernels with the uniform measure are formed by products of spherical harmonics, defined over the channels of the different pixels. We next use hierarchical facotorizable kernels to bound their respective eigenvalues. We show that the eigenvalues decay polynomially, quantify the rate of decay, and derive measures that reflect the composition of hierarchical features in these networks. Our results provide concrete quantitative characterization of over-parameterized convolutional network architectures. 
\end{abstract}

\section{Introduction}

Convolutional Neural Networks (CNNs) \cite{lecun1998gradient} have produced dramatic improvements over past machine learning approaches \cite{krizhevsky2012imagenet,simonyan2015very,he2016deep}.  
Two key properties that distinguish CNNs are their ability to encode geometric properties of the data, by incorporating multiscale analysis and invariance or equivariance. Shift invariant networks produce the same output when the input is shifted, which can be valuable, for example, in classification tasks in which objects are not well aligned.  Shift equivariant networks produce shifted output when the input is shifted, and are important in image-to-image networks that, for example, denoise or segment the input \cite{kim2016accurate,ledig2017photo,ulyanov2018deep}. Multiscale feature representations naturally arise in these networks through their depth. 
 
However, we still lack a theoretical analysis of CNNs that can quantitatively predict their behavior. Our analysis builds on the Gaussian Process and Neural Tangent kernels (resp.\ GPK and NTK).  It has been shown theoretically that massively overparameterized networks can be well approximated by a linearization about their initialization \cite{allen-zhu2019,arora2019exact,du2019gradient,jacot2018neural}.  With this linearization, neural networks become kernel regressors, with training dynamics and smoothness properties determined by the eigenfunctions and eigenvalues of their kernel, which determine their Reproducing Kernel Hilbert Space (RKHS). 

A series of interesting works has derived the spectrum of NTK for fully connected networks (denoted FC-NTK). This eigen-analysis tells us which functions a network learns most rapidly, since the speed of learning an eigenfunction with gradient descent (GD) is inversely proportional to the corresponding eigenvalue. For example, it allows us to determine that FC-NTK learns low frequency components of a function faster than high frequency components, and characterize the rates at which this happens \cite{bach2017breaking,Basri2019convergence,basri2020Nonuniform,bietti2020deep,bietti2019inductive}.
So this eigen-analysis allows us to characterize the inductive bias of over-parametrized neural networks. Convolutional GPKs and NTKs (resp.\ CGPKs and CNTKs) have been derived for convolutional networks \cite{arora2019exact,novak2018bayesian}, but a characterization of their spectral bias  is still missing.

In this paper we investigate the Gaussian process and neural tangent kernels associated with three deep convolutional architectures. In particular, we consider kernels associated with a shift equivariant architecture, a convnet in which the last layer is fully connected, and a convnet with a final global average pooling step. The former network is applicable to various image processing tasks. The second network is similar in architecture to AlexNet and VGG \cite{krizhevsky2012imagenet,simonyan2015very}. The latter network resembles a residual network \cite{he2016deep}, without skip connections. All models we consider use ReLU activation. While we do not explicitly account for intermediate pooling, our work can readily be extended to handle such layers as well. 

We assume our networks receive multi-channel input signals, with the channels for each pixel normalized to unit norm. 
Our results establish that:
\begin{enumerate}
    \item The eigenfunctions of the three kernels include either products of spherical harmonics (SHs) or their shift invariant sums, with each harmonic term defined over the channels of one pixel.
    \item The corresponding eigenvalues decay polynomially with the frequency of the eigenfunctions, at a rate that depends on the number of input channels.
    \item The eigenvalues include a multiplicative factor that reflects the hierarchical structure of the features in the corresponding network. With the equivariant architecture, this factor is large for pixels at the center of the receptive field and small in the periphery. For the other two kernels this multiplicative factor is large for pixels close to each other and becomes very small for pixels far from each other.
\end{enumerate}

Our results show that CNNs, like FC-networks, are biased toward learning low frequency functions. However, point (2) tells us that CNNs can learn high frequency functions, when these are localized in a subset of the pixels, much more rapidly than FC networks.  It is important to keep in mind that we are referring to the frequency of the function the network has learned, which is a function over the space of all images; this does not refer to the power spectrum of individual images. Put differently, high frequency reflects high variability of the target function for similar input images. Interestingly, the rate of decay does not depend directly on image size or the size of the convolution filter. Point (3) tells us that CNNs learn spatially localized functions more rapidly than functions with global dependence on an entire image. This shows that CNNs can change their output significantly based on relatively small, spatially localized features; FC networks will take much longer to learn these spatially localized features.  In both cases we quantify this bias.

\section{Preliminaries and notations} \label{sec:preliminaries}

We consider a multi-channel 1-D input signal $\x$ of length $d$ with $\qbar$ channels, represented by a $\qbar \times d$ matrix. We further set $D=\qbar d$ and refer to the columns of $\x$ as \emph{pixels}, denoted $\x^{(i)} \in \Real^{\qbar}$, $i \in [d]$. We note that our results can readily be applied also to 2-D, multi-channel signals. We assume further that the entries of each pixel are normalized to unit norm, i.e., $\left\|\left(x_{1}^{(i)},...,x_{\qbar }^{(i)}\right)\right\|=1$. The input space, therefore, is a Cartesian product of spheres, which we call \emph{multisphere}, i.e., $\x\in\ms=\underbrace{\Sphere^{\qbar-1}\times ...\times \Sphere^{\qbar-1}}_{d} \subset \Sphere^{D-1}$ (with radius $\sqrt{d}$). We denote by $s_i\x = (x_{i+1}, ... x_d, x_1, ... x_{i})$ the cyclic shift of $\x$ to the left by $i$ pixels. 

We use multi-index notations, i.e., $\n=(n_1,...,n_d),\kk=(k_1,...,k_d)\in\N^d$ to denote vectors of polynomial orders or frequencies. $\N$ denotes the natural numbers including zero, and $b_{\n}, \lambda_{\kk} \in \Real$ are scalars that depend on vectors of indices $\n$ or $\kk$. We denote monomials by $\tbf^{\n}=t_1^{n_1}t_2^{n_2}\cdot\ldots\cdot t_{d}^{n_{d}}$ with $\tbf \in \Real^d$, and allow also for a scalar exponent, i.e., $\tbf^n=(t_1\cdot...\cdot t_d)^n$. For $\uu,\vv\in\Real^d$ we say that $\uu \ge \vv$ if $u_i \ge v_i$ for all $i \in [d]$. Therefore, the power series $\sum_{\n \ge \zero} b_\n \tbf^\n$ should read $\sum_{n_1 \ge 0,n_2 \ge 0,...} b_{n_1,n_2,...} t_1^{n_1} t_2^{n_2}...$  

We denote the uniform distribution in a domain $\Omega$ by $\Unif(\Omega)$. We write $f(x) \sim g(x)$ when $\lim_{x \rightarrow \infty} f(x)/g(x)=1$. Throughout the paper we assume all kernels are differentiable at zero infinitely many times and their power series converge in the hypercube $[-1,1]^d$. Our theorems and lemmas are proved in the appendix.


\subsection{The network model}  \label{sec:network}

We consider convolutional neural network architectures (Figure~\ref{fig:architecture}) of the following form. Given a multi-channel 1-D input signal $\x \in \ms$ arranged in a $\qbar \times d$ matrix, we use a shift equivariant backbone and three heads to produce scalar features. The network, defined  in Table~\ref{tab:network_model}, begins with a $1 \times 1$ convolution layer, followed by $L-1 \ge 1$ stride-1 convolutional layers with filters of size $q$, producing at each layer the same number of feature channels $m$ in each of the $d$ locations.

\begin{figure}[tb]
    \centering
    \includegraphics[width=0.55\textwidth]{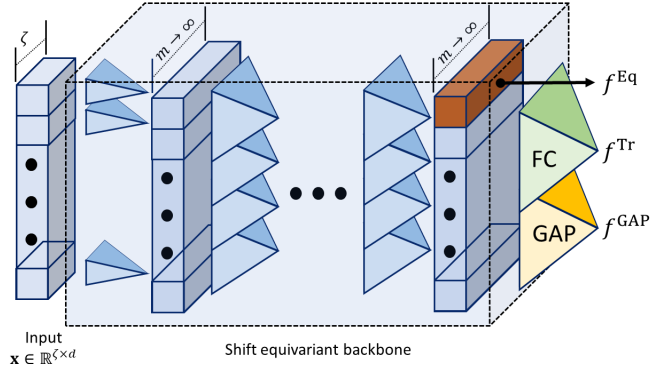}    \caption{\small Network architecture. An input signal $\x \in \Real^{\qbar \times d}$ (left column) is fed into an equivariant backbone (dashed box), producing $m$ feature channels for each pixel, first using $1 \times 1$ convolution, followed by $L-1$ convolution layers with filters of size $q$ (marked by blueish pyramids), interleaved with ReLU activation. The backbone is followed by one of three heads, $\feqn$, $\ftrace$ and $\fgap$.}
    \label{fig:architecture}
\end{figure}
\begin{table}[tb]
    \centering
    \caption{\small Network model.}
    \fbox{
    \begin{minipage}{\linewidth}
    \begin{tiny}
    \begin{enumerate}
        \item $\f^{(1)}(\x,\theta) = \sigma\left( W^{(1)}\x \right)$.
        \item $\f_i^{(l)}(\x,\theta) = \sigma\left(\sqrt{\frac{c_\sigma}{mq}} \left(\sum_{j=1}^m W_{:,i,j}^{(l)}*\f_{j}^{(l-1)} (\x,\theta) \right) \right)$,
        \item with $l \in \{2,\ldots,L\}$, $i \in [m]$, and three heads:
        \begin{enumerate}
            \item $\feqn(\x,\theta) = \langle \w^{\mathrm{Eq}},\f^{(L)}_{:,1}(\x,\theta)\rangle$.
            \item $\ftrace(\x,\theta) = \frac{1}{\sqrt{d}} \langle W^{\mathrm{Tr}}, \f^{(L)}(\x,\theta) \rangle$.
            \item $\fgap(\x,\theta) = \frac{1}{d} \w^{\mathrm{GAP}} \f^{(L)}(\x,\theta) \one$.
        \end{enumerate}
    \end{enumerate}        
    Here $\f^{(l)} \in \Real^{m \times d}$ ($ \l \in [L]$); $\theta=\left(W^{(1)},...,W^{(L)}, W^{\mathrm{Tr}}, \w^{\mathrm{GAP}}, \w^\mathrm{Eq}\right)$ are learnable parameters initialized with ${\cal N}(0,I)$. $W^{(1)} \in \Real^{m \times \qbar}$, $W^{(l)} \in \Real^{q \times m \times m}$ (i.e., $W_{:,i,j}^{(l)}$ is a filter of size $q$), $W^{\mathrm{Tr}} \in \Real^{m \times d}$ ($\langle \cdot,\cdot \rangle$ denotes the standard inner product between matrices), $\w^\mathrm{Eq},\w^{\mathrm{GAP}} \in \Real^{1 \times m}$ and $\one = (1,...,1)^T \in \Real^{d}$. '*' denotes cyclic convolution; $\sigma$ is the ReLU function, and for ReLU, $c_{\sigma} = 1/\left( \mathbb{E}_{z \sim \mathcal{N}(0,1)} [\sigma(z)^2] \right)=2$.
    \end{tiny}
    \end{minipage}}
    \label{tab:network_model}
\end{table}

The $\feqn$ head produces one scalar entry for the shift equivariant network (i.e., the tuple $(f^{\mathrm{Eq}}(\x,\cdot),...,f^{\mathrm{Eq}}(s_{d-1}\x,\cdot))$ produces the shift-equivariant output); $\ftrace$ corresponds to applying a fully connected layer at the top layer, and $\fgap$ corresponds to applying global average pooling, resulting in a shift invariant network. With each of the heads, the network parameters are trained for regression with the mean square error (MSE) loss.

\subsection{Derivation of CGPK and CNTK} \label{sec:CNTK_formula}

Previous work showed that in the limit of infinite width and with small initialization, massively over-parameterized neural networks become kernel regressors with a family of kernels called neural tangent kernels~\cite{jacot2018neural}. Let $f(\x,\theta)$ denote a network, then for a pair of inputs $\x_i,\x_j$, the corresponding NTK is defined as $\kr(\x_i,\x_j) = \mathbb{E}_{\theta \sim \mathcal{P}} \left< \frac{\partial f(\x_i,\theta)}{\partial \theta}, \frac{\partial f(\x_j,\theta)}{\partial \theta} \right>$. A related kernel, called the Gaussian process (or random feature) kernel, is obtained if the weights are kept in their initial random values, and only the last layer of the network is optimized in training \cite{cho2009kernel}.

\cite{arora2019exact} derived expressions for CNTK and CGPK for convolutional networks. The formulas in Table~\ref{tab:cgpk_cntk} adapt these expressions to our convolutional architectures and to multisphere inputs. We note that with general input in $\Real^D$ additional normalization steps are needed. We refer the reader to \cite{arora2019exact} for the full derivation.

\begin{table}[tb]
    \centering    
    \caption{\small CGPK and CNTK formulas.}
    \fbox{
    \begin{minipage}{\linewidth}
    \begin{tiny}
    Given $\x,\z\in \ms$, we denote respectively by $X$ and $Z$ their $\qbar \times d$ matrix representations. Let $\Sigma^{(0)}(\x,\z) = \Theta^{(0)}(\x,\z) = X^TZ$. For $l\in [L]$ and for $ i,j \in [d] $,
    \begin{enumerate}
        \item $\Sigma_{i,j}^{(1)}(\x,\z) = \kappa_1\left(\Sigma_{i,j}^{(0)}(\x,\z) \right)$.
        \item $\dot \Sigma_{i,j}^{(1)}(\x,\z) = \kappa_0\left(\Sigma_{i,j}^{(0)}(\x,\z) \right)$.
        \item $\Theta_{i,j}^{(l)}(\x,\z) = \frac{1}{2q} \sum_{r=0}^{q-1} \left[ \tilde{ \dot \Sigma}_{i+r,j+r}^{(l)}(\x,\z) \tilde\Theta_{i+r,j+r}^{(l-1)}(\x,\z) + \right.$\\ 
        $~~~~~~~~~~~~~~~~~~~~~~~~~~~~~~~~~~~~~~~~~~~~~~~~~~
        \left. \tilde\Sigma_{i+r,j+r}^{(l)}(\x,\z)\right]$.
        \item $\Sigma_{i,j}^{(l+1)}(\x,\z) = \kappa_1\left(\frac{1}{q} \sum_{r=0}^{q-1}  \tilde\Sigma_{i+r,j+r}^{(l)}(\x,\z) \right)$.
        \item $\dot \Sigma_{i,j}^{(l+1)}(\x,\z) = \kappa_0\left(\frac{1}{q} \sum_{r=0}^{q-1}  \tilde\Sigma_{i+r,j+r}^{(l)}(\x,\z) \right)$,
    \end{enumerate}
    where we denote by $\tilde\Sigma$, $\tilde{\dot\Sigma}$, and $\tilde\Theta$ respectively $\Sigma$, $\dot\Sigma$ and $\Theta$ whose rows and columns are extended with circular padding. Additionally, with ReLU activation
    \begin{enumerate}
        \item $\kappa_0(u)=\frac{\pi - \arccos(u)}{\pi}$.
        \item $\kappa_1(u)=\frac{(\pi - \arccos(u))u+\sqrt{1-u^2}}{\pi}, ~~ u\in[-1,1]$.
    \end{enumerate}
    \end{tiny}
    \end{minipage}
    }
    \label{tab:cgpk_cntk}

    \centering
    \caption{\small Kernel heads.}
    \fbox{
    \begin{minipage}{\linewidth}
    \begin{tiny}
    \begin{enumerate}
        \item CGPK-EqNet (corresponds to $\feqn$): $\Sigma_{1,1}^{(L)}(\x,\z)$.
        \item CGPK (corresponds to $\ftrace$): $\frac{1}{d} \sum_{i=1}^{d} \Sigma_{i,i}^{(L)}(\x,\z)$.
        \item CGPK-GAP (corresponds to $\fgap$): $\frac{1}{d^2} \sum_{i=1}^{d} \sum_{j=1}^{d} \Sigma_{i,j}^{(L)}(\x,\z)$.
        \item CNTK-EqNet (corresponds to $\feqn$): $\Theta_{1,1}^{(L)}(\x,\z)$.
        \item CNTK (corresponds to $\ftrace$): $\frac{1}{d} \sum_{i=1}^{d} \Theta_{i,i}^{(L)}(\x,\z)$.
        \item CNTK-GAP (corresponds to $\fgap$): $\frac{1}{d^2} \sum_{i=1}^{d} \sum_{j=1}^{d} \Theta_{i,j}^{(L)}(\x,\z)$.
    \end{enumerate}
    \end{tiny}
    \end{minipage}
    }
    \label{tab:kernels}
\end{table}

Note that for a pair of inputs $\x,\z$, this definition produces two matrices of kernels, $\Sigma^{(L)}_{i,j}(\x,\z)$ and $\Theta^{(L)}_{i,j}(\x,\z)$, $i,j \in [d]$, and that $\Sigma_{i,j}^{(L)}(\x,\z) = \Sigma_{1,1}^{(L)}(s_{i-1}\x,s_{j-1}\z)$ (and similarly for $\Theta$).
With these definitions, we produce six different kernels in Table~\ref{tab:kernels}--these describe three different architectures for each of the Gaussian process neural tangent kernels.

Below we refer to these six kernels as CGPKs and CNTKs. Note that CGPK-EqNet and CNTK-EqNet produces a single output, corresponding to the first output of the equivariant network. The tuple $\left(\kr(s_0\x,s_0\z),...,\kr(s_{d-1}\x,s_{d-1}\z)\right) \in \Real^d$ (with $\kr$ either CGPK-EqNet or CNTK-EqNet) produces the full response of the equivariant network.

\section{The RKHS of CGPKs and CNTKs} \label{sec:rkhs}

Our objective is to derive the spectrum of Gaussian process and neural tangent kernels associated with convolutional networks. We do so by forming bounds using products of kernels that apply to individual pixels and are composed hierarchically.
We proceed below as follows. We first prove general results for kernels that are functions of inner products between pixels (Sec.~\ref{sec:multidot}). We next consider factorizable kernels and derive their spectrum (Sec.~\ref{sec:factorizable}). Then, in Sec.~\ref{sec:spatial}, we refine our expressions to account for hierarchical kernels. We finally use these derivations in Sec.~\ref{sec:equivariant} and~\ref{sec:trace} to prove bounds for all six kernels.

\subsection{Multi-dot product kernels} \label{sec:multidot}

It can be readily shown that CNTK and CGPK associated with the shift equivariant network are functions of dot products of corresponding pixels. We refer to such kernels as \emph{multi-dot product} and prove in Appendix~\ref{app:multidot} several results regarding their spectral properties, which we briefly summarize here.

We call a kernel $\kr(\x,\z):\ms\times\ms$ $\rightarrow \Real$ \emph{multi-dot product} if $\kr(\x,\z) = \kr(\tbf)$, where $\tbf=\left(\langle \x^{(1)},\z^{(1)}\rangle,..,\langle \x^{({\dbar})},\z^{({\dbar})}\rangle\right) \in [-1,1]^d$. (Note the overload of notation, which should be clear by context.) Multi-dot product kernels can be written via Mercer's decomposition as
\begin{align} \label{eq:mercer}
    \kr(\x,\z)=\sum_{\kk,\jj} \lambda_{\kk}Y_{\kk,\jj}(\x)Y_{\kk,\jj}(\z),
\end{align}
where $Y_{\kk,\jj}(\x)$ ($\kk, \jj \in \mathbb{N}^d$), the eigenfunctions of $\kr$, are products of spherical harmonics in $\Sphere^{\qbar-1}$, $\prod_{r=i}^{d} Y_{k_ij_i}\left(\x^{(i)}\right)$, with $k_i \ge 0$, $j_i \in [N(\qbar,k_i)]$, and $N(\qbar,k_i)$ denotes the number of harmonics of frequency $k_i$ in $\Sphere^{\qbar-1}$. Such products are harmonic polynomials in $\Sphere^{D-1}$. With $\qbar=2$, these are products of Fourier series in a multi-dimensional torus. The eigenvalues $\lambda_\kk$ depend on the vector of frequencies, $\kk$, and are independent of the phases $\jj$. We note that a multi-dot product kernel is universal for $\ms$ if all its eigenvalues are strictly positive. Non-universal kernels are obtained, e.g., when the eigenfunctions  involve pixels outside of the receptive field of the respective network, in which case these eigenfunctions lie in the null space of the kernel.

Below we consider multi-dot product kernels that can be expressed using a multivariate power series of the form
\begin{align} \label{eq:k_taylor}
    \kr(\tbf)=\sum_{\n \ge \zero} b_\n \tbf^\n = \sum_{\n \ge \zero} b_\n \prod_{i=1}^d \langle \x^{(i)},\z^{(i)}\rangle ^{n_i}.
\end{align}
with $b_{\n}\geq 0$ for all $\n \ge \zero$. Indeed, it can be readily shown that all our CGPKs and CNTKs are positive semidefinite (PSD) and their power series coefficients are non-negative; the kernels are obtained from the univariate PSD $\kappa_0$ and $\kappa_1$ (defined in Table~\ref{tab:kernels}), whose coefficients are non-negative by sequences of multiplication, addition and composition, resulting in PSD kernels with non-negative coefficients.

It is possible to calculate the eigenvalues of multi-dot product kernels from their power series coefficients. This is established in the following lemma, which extends a result by \cite{azevedo2015eigenvalues} to multi-dot product kernels.

\begin{lemma}\label{lemma:taylor_to_eigs}
Let $\kr$ be a multi-dot product kernel with the power series given in \eqref{eq:k_taylor}, where $\x^{(i)},\z^{(i)} \in \Sphere^{\qbar-1}$ respectively are pixels in $\x,\z$.
Then, the eigenvalues $\lambda_{\kk}(\kr)$ of $\kr$ are given by 
\begin{align*}
    \lambda_{\kk}(\kr) = \left|\Sphere^{\qbar-2}\right|^d \sum_{\s \ge 0} b_{\kk+2\s} \prod_{i=1}^{\dbar}\lambda_{k_i}(t^{k_i+2s_i}),
\end{align*}
where $|\Sphere^{\qbar-2}|$ is the surface area of $\Sphere^{\qbar-2}$, and $\lambda_k(t^n)$ is the $k$'th eigenvalue of $t^n$, given by 
\begin{align*}
\lambda_{k}(t^n) = \frac{n!}{(n-k)!2^{k+1}} \frac{\Gamma \left(\frac{\qbar-1}{2}\right)\Gamma\left(\frac{n-k+1}{2}\right)}{\Gamma\left(\frac{n-k+\qbar}{2}\right)}
\end{align*}
if $n-k$ is even and non-negative, while $\lambda_k(t^n)=0$ otherwise, and $\Gamma$ is the Gamma function.
\end{lemma}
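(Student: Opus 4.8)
The plan is to reduce the multivariate claim to the known univariate case by exploiting the product structure of the multisphere $\ms$ and the tensor-product structure of its $L^2$ basis. First I would recall the univariate fact (this is essentially the result of \cite{azevedo2015eigenvalues} combined with standard Funk–Hecke / Gegenbauer computations): for a single sphere $\Sphere^{\qbar-1}$, the monomial kernel $\langle \x^{(i)},\z^{(i)}\rangle^{n}$ expands into zonal harmonics as $t^n = \sum_{k} c_{n,k} \, P_k(t)$ where $P_k$ is the (normalized) Gegenbauer/Legendre polynomial of degree $k$ in dimension $\qbar$, and the Mercer eigenvalue attached to frequency $k$ is obtained by integrating $t^n$ against the zonal harmonic, yielding exactly $\lambda_k(t^n)$ as stated (with the Gamma-function expression coming from the Beta integral $\int_{-1}^1 t^n (1-t^2)^{(\qbar-3)/2}\,dt$, nonzero only when $n-k$ is even and $\ge 0$). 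I would either cite this or prove it in a short lemma in the appendix; the factor $|\Sphere^{\qbar-2}|$ bookkeeps the normalization of the surface measure relative to the uniform probability measure.

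Next I would handle the product. Because $\ms = (\Sphere^{\qbar-1})^d$ with the product uniform measure, the functions $Y_{\kk,\jj}(\x) = \prod_{i=1}^d Y_{k_i j_i}(\x^{(i)})$ form a complete orthonormal system of $L^2(\ms)$ (Mercer on a product space is the tensor product of the factor decompositions). A multi-dot product monomial factorizes: $\tbf^{\n} = \prod_{i=1}^d \langle \x^{(i)},\z^{(i)}\rangle^{n_i}$. Applying the univariate zonal expansion in each coordinate and multiplying, $\tbf^{\n}$ becomes a sum over frequency vectors $\kk$ with $\kk \le \n$ and $\n-\kk$ even coordinatewise, with coefficient $\prod_i \lambda_{k_i}(t^{n_i})$ (up to the $|\Sphere^{\qbar-2}|^d$ normalization) times the reproducing kernel of the degree-$\kk$ harmonic subspace; reindexing $\n = \kk + 2\s$ with $\s \ge \zero$ gives the eigenvalue of the monomial $\tbf^{\n}$ at frequency $\kk$ as $|\Sphere^{\qbar-2}|^d \prod_i \lambda_{k_i}(t^{k_i+2s_i})$. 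Then I would sum over $\n$ against the coefficients $b_\n$: since $b_\n \ge 0$ and the series converges on $[-1,1]^d$, Tonelli/absolute convergence lets me interchange the sum over $\n$ with the Mercer sum, collect the contribution to frequency $\kk$ from all $\n = \kk + 2\s$, and obtain $\lambda_{\kk}(\kr) = |\Sphere^{\qbar-2}|^d \sum_{\s \ge \zero} b_{\kk+2\s} \prod_{i=1}^d \lambda_{k_i}(t^{k_i+2s_i})$, which is exactly the claim; the eigenvalue depends only on $\kk$ and not on the phases $\jj$ because the univariate eigenvalue is constant across the $N(\qbar,k_i)$ harmonics of a fixed frequency.

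The main obstacle I expect is the justification of the interchange of the (infinite) sum over $\n$ with the harmonic (Mercer) decomposition, i.e.\ showing that grouping terms by output frequency $\kk$ is legitimate and that the resulting series for $\lambda_{\kk}(\kr)$ converges. This is where nonnegativity of the $b_\n$ and of every $\lambda_k(t^n)$ is essential: it turns the rearrangement into an application of Tonelli's theorem for nonnegative double series, and it guarantees the limit is well defined (possibly $+\infty$ a priori, though finiteness follows from $\kr$ being a bounded PSD kernel, hence trace-class on the compact $\ms$). A secondary, purely computational point is verifying the closed form of $\lambda_k(t^n)$: this reduces to evaluating $\int_{-1}^1 t^n C_k^{(\qbar/2-1)}(t)(1-t^2)^{(\qbar-3)/2}\,dt$ and simplifying with the duplication formula for $\Gamma$; I would relegate this to the appendix rather than grind through it here. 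Everything else — completeness of the product harmonic basis, the coordinatewise factorization of the monomial — is routine once the product structure is set up.
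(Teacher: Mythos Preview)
Your proposal is correct and follows essentially the same route as the paper: use the product structure of $\ms$ to factor the eigenvalue computation into $d$ univariate Gegenbauer projections, invoke linearity to pass the sum over $\n$ through, cite \cite{azevedo2015eigenvalues} for the closed form of $\lambda_k(t^n)$ and its vanishing when $n-k$ is odd or negative, and then reindex $\n=\kk+2\s$. The paper phrases this via the integral formula of Lemma~\ref{applemma:mercer} rather than via the zonal expansion of each monomial, and it is terser about the interchange of sum and integral that you justify with Tonelli; otherwise the arguments coincide.
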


\subsection{Factorizable Kernels} \label{sec:factorizable}

Next, we consider multivariate kernels that factor into products of dot product kernels,
\begin{align*}
\kr(\x,\z) 
    = \prod_{i=1}^d \kr_i(\langle \x^{(i)},\z^{(i)}\rangle).
\end{align*}
The power series of $\kr$ can be written as $\kr(\tbf)=\prod_{i=1}^d \left( \sum_{n=0}^\infty b_n^{(i)} t_i^n \right)$. Their eigenvalues satisfy $\lambda_{\kk}(\kr) = \prod_{i=1}^{d} \lambda_{k_i}(\kr_i)$ and can be calculated using Lemma~\ref{lemma:taylor_to_eigs}.
Below we are interested specifically in kernels whose power series decay polynomially with frequency. The next theorem shows that for such kernels, the eigenvalues too decay polynomially and derives their exact decay rate.
For the theorem we further use the concept of a receptive field, which captures for a multivariate kernel the subset of variables it depends on. That is, the receptive field $\R \subseteq [d]$ is the set of indices $i$ for which there exists $\n=(...,n_i,...)$ with $n_i \ge 1$ and $b_\n \ne 0$. This condition ensures that there exists a term in the power series expansion that depends on pixel $i$. 

\begin{theorem}
\label{thm:taylor_to_eigs}
Let $\kr$ be a factorizable multi-dot product kernel with inputs $\x,\z\in \ms$, and let $\R \subseteq [d]$ denote its receptive field. Suppose that $\kr$ can be written as a multivariate power series, $\kr(\tbf)=\sum_{\n \ge 0} b_\n \tbf^\n$ with
\begin{align*}
    b_\n \sim 
    c \prod_{i \in \R, \, n_i > 0} n_i^{-\nu}.
\end{align*}
with constants $c>0$, non-integer $\nu > 1$, and $b_\n=0$ if $n_i>0$ for any $i \not\in \R$. Then the eigenfunctions of $\kr$ w.r.t the uniform measure are the SH-products. Moreover, let $\kk \in \N^d$ be a vector of frequencies. Then, the eigenvalues $\lambda_\kk(\kr)$ satisfy
\begin{align*}
    \lambda_\kk \sim 
    \tilde c \prod_{i \in \R, \, k_i>0}  k_i^{-(\qbar+2\nu-3)}.
\end{align*}
Finally, $\lambda_\kk=0$ if $k_i>0$ for any $i \not\in \R$.
\end{theorem}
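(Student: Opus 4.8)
The plan is to reduce everything to the single-pixel eigenvalue formula of Lemma~\ref{lemma:taylor_to_eigs} together with the factorization $\lambda_\kk(\kr)=\prod_{i=1}^d \lambda_{k_i}(\kr_i)$ noted in Section~\ref{sec:factorizable}. First I would observe that the claims about the eigenfunctions and about $\lambda_\kk=0$ when some $k_i>0$ with $i\notin\R$ are immediate: since $b_\n=0$ whenever $n_i>0$ for $i\notin\R$, the $i$-th factor kernel $\kr_i$ is the constant kernel $\kr_i\equiv b_0^{(i)}$, whose only nonzero eigenvalue is at frequency $0$; the product structure then forces $\lambda_\kk=0$ unless $k_i=0$ for all such $i$, and the eigenfunctions are the SH-products restricted to harmonics of degree $0$ outside $\R$. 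So the real content is the asymptotics of $\lambda_{k_i}(\kr_i)$ for $i\in\R$ as $k_i\to\infty$.

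Fixing a coordinate $i\in\R$, I would use Lemma~\ref{lemma:taylor_to_eigs} in the single-variable case: $\lambda_k(\kr_i)=|\Sphere^{\qbar-2}|\sum_{s\ge 0} b^{(i)}_{k+2s}\,\lambda_k(t^{k+2s})$, with $\lambda_k(t^{k+2s})=\frac{(k+2s)!}{(2s)!\,2^{k+1}}\frac{\Gamma(\frac{\qbar-1}{2})\Gamma(s+\frac12)}{\Gamma(s+\frac{\qbar}{2})}$. The next step is to plug in the assumed asymptotics $b^{(i)}_n\sim c' n^{-\nu}$ (the single-coordinate consequence of the hypothesis, absorbing the product over the other coordinates into the constant) and estimate the sum over $s$. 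Stirling's formula gives $\frac{(k+2s)!}{(2s)!}\sim (2s)^{k}$ — more precisely one writes $\frac{(k+2s)!}{(2s)!\,2^{k+1}} = \frac{1}{2^{k+1}}\prod_{j=1}^{k}(2s+j)$ — and $\frac{\Gamma(s+\frac12)}{\Gamma(s+\frac{\qbar}{2})}\sim s^{\frac12-\frac{\qbar}{2}} = s^{(1-\qbar)/2}$ as $s\to\infty$. Substituting $b^{(i)}_{k+2s}\sim c'(k+2s)^{-\nu}$, the summand behaves (for large $s$, which dominates since the exponent $k$ grows) like a function of the scaled variable; the natural move is to set $s = k x/2$, approximate the sum by an integral $\frac{k}{2}\int_0^\infty(\cdots)\,dx$, and extract the power of $k$. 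After collecting the powers — roughly $k^{-\nu}$ from $b$, $k^{(1-\qbar)/2}$ from the Gamma ratio, another $k^{(1-\qbar)/2}$-type contribution and a $k$ from the Jacobian, plus the $\beta$-function type integral in $x$ which converges to a constant precisely because $\nu>1$ — one should land on $\lambda_k(\kr_i)\sim \tilde c_i\, k^{-(\qbar+2\nu-3)}$. Taking the product over $i\in\R$ (and using $\lambda_0(\kr_i)=b_0^{(i)}>0$ for those $i\in\R$ with $k_i=0$, which contribute only to the constant) yields the stated $\lambda_\kk\sim\tilde c\prod_{i\in\R,\,k_i>0}k_i^{-(\qbar+2\nu-3)}$.

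The main obstacle is the asymptotic analysis of the series $\sum_{s\ge0} b^{(i)}_{k+2s}\lambda_k(t^{k+2s})$ as $k\to\infty$: one has to justify that the $\sim$-asymptotic on $b_\n$ can be substituted termwise inside an infinite sum whose length and weighting both depend on $k$, control the tail uniformly, and show the Riemann-sum-to-integral passage is legitimate. The non-integrality of $\nu$ is presumably a convenience (avoiding boundary/resonance cases in the Gamma-function asymptotics or in $\lambda_k(t^n)$ when $n-k$ parity or vanishing issues arise), and $\nu>1$ is exactly the integrability threshold that makes the limiting integral finite; I would flag both where they are used. A secondary technical point is bookkeeping the dimension-dependent constants $|\Sphere^{\qbar-2}|$, $\Gamma(\frac{\qbar-1}{2})$, etc., so that the final constant $\tilde c$ is well-defined and positive — but this is routine once the $k$-power is pinned down. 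The uniformity of the hidden constants across coordinates $i\in\R$ is also needed to take the finite product cleanly, though since $\R$ is finite this is not a real difficulty.
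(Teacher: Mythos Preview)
Your reduction to the single-variable problem via the factorization $\lambda_\kk(\kr)=\prod_i\lambda_{k_i}(\kr_i)$, and your handling of coordinates outside $\R$, match the paper exactly. The divergence is entirely in how the single-variable asymptotic $\lambda_k(\kr_i)\sim\tilde c_i\,k^{-(\qbar+2\nu-3)}$ is obtained.

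The paper does \emph{not} carry out the direct Stirling/Riemann-sum analysis you outline. Instead it proves a short comparison lemma (Corollary~\ref{appcor:lower_upper_eigs}): if two dot-product kernels have power-series coefficients that are comparable up to constants, then so are their eigenvalues (this is immediate from Lemma~\ref{lemma:taylor_to_eigs} and non-negativity of $\lambda_k(t^n)$). It then picks the explicit reference kernel $f(t)=(1-t)^{\nu-1}$, invokes Flajolet--Sedgewick singularity analysis to get that its Taylor coefficients satisfy $\tilde a_n\sim C\,n^{-\nu}$ (this is where non-integrality of $\nu$ is actually used), and cites Bietti~(2020) for the known fact that the spherical eigenvalues of $(1-t)^{\nu-1}$ on $\Sphere^{\qbar-1}$ decay exactly as $k^{-(\qbar+2\nu-3)}$. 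The whole single-variable step is thus three lines of citation.

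Your direct route is in principle workable, but the sketch as written would need real repair: the approximation $\frac{(k+2s)!}{(2s)!}\sim(2s)^k$ is only valid for $s\gg k$, and under the scaling $s=kx/2$ the product $\prod_{j=1}^k(kx+j)$ is \emph{not} $\sim (kx)^k$---Stirling gives an extra factor $e^{k\psi(x)}$ with $\psi(x)=(1+x)\log(1+x)-x\log x-1$, so the summand is not of the simple power-law-in-$k$ form you describe and a naive Riemann-sum passage will not land on the right exponent without a Laplace-type concentration argument. The paper's comparison trick bypasses all of this, at the cost of importing two external results; your approach would be self-contained but substantially longer, and the ``main obstacle'' you flag is genuinely the hard part.
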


The theorem above improves over previous results by \cite{azevedo2015eigenvalues,scetbon2021spectral} in various ways. Specifically, these authors considered only dot product kernels (i.e., $d=1$) and bounded the decay of their eigenvalues by a non-tight upper bound. Here we provide tight upper and lower bounds, which are identical up to a constant factor, for any $d \ge 1$.

\subsection{Positional bias of eigenvalues} \label{sec:spatial}

CNNs process signals by producing a hierarchy of learned features that emerge via repeated convolutions, interleaved with non-linear activations. It is natural to ask, therefore, to what extent this hierarchy is reflected in the RKHS of the CGPKs and CNTKs. 
To answer this question we consider kernels formed by hierarchical composition of kernels. We are further interested in such kernels that are both factorizable and whose power series decay polynomially. For such kernels we prove that their eigenvalues are larger for eigenfunctions that depend on pixels near the center of their receptive field and smaller for eigenfunctions that depend only on peripheral pixels. This in turn will allow us to derive similar bounds for the kernels associated with the equivariant network. For the trace and GAP kernels this will reveal bias to eigenfunctions that depend on nearby pixels, compared to those that depend only on more distant pixels.

\begin{definition}
A kernel $\kr^{(L)}: [-1,1]^{d} \rightarrow \Real$ is called (stride-1) hierarchical of depth $L>1$ and filter size $q$ if there exists a sequence of kernels $\kr^{(1)},...,\kr^{(L)}$ such that $\kr^{(l)}(\tbf) = f^{(l)}\left(\kr^{(l-1)}(s_0\tbf),...,\kr^{(l-1)}(s_{q-1}\tbf)\right)$ with $f^{(l)}:\Real^q \rightarrow \Real$ and $\kr^{(1)}(\tbf)=f^{(1)}(t_1)$, $t_1 \in [-1,1]$. 
\end{definition}

Similar to feed-forward networks, hierarchical kernels induce a tree structure in which the leaf nodes represent the variables $t_1,...,t_d$ and nodes represent function applications $f^{(l)}$. Each variable may be connected by multiple paths to the root node, which corresponds to the final kernel $\kr^{(L)}$. The number of paths from a node $t_i$ to the root, denoted $p_i^{(L)}$, plays an important role in the magnitude of the eigenvalues. In the next theorem we bound both the power series coefficients and the eigenvalues of hierarchical kernels, showing that the bounds depend on the number of paths in the hierarchical tree, which in turn is determined by the position of the pixel within the receptive field.

\begin{theorem} \label{thm:hierarchy}
Let $\kr^{(L)}=\sum_{\n \ge 0} b_\n \tbf^\n$ be hierarchical and factorizable of depth $L>1$ with filter size $q$, so that $b_\n = c \prod_{i=1,n_i>0}^d n_i^{-\nu}$ for non-integer $\nu>1$. Then there exists a scalar $A > 1$ such that:
\begin{enumerate}
    \item The power series coefficients of $\kr^{(L)}$ satisfy
    \begin{align*}
    b_\n \ge c_L \prod_{\substack{i=1\\n_i>0}}^d  {A}^{\min(p_i^{(L)},n_i)} n_i^{-\nu}.
    \end{align*}
    \item The eigenvalues $\lambda_\kk(\kr^{(L)})$ satisfy
    \begin{align*}
    \lambda_\kk \ge \tilde c_L \prod_{\substack{i=1\\k_i>0}}^d  {A}  ^{\min(p_i^{(L)},k_i)} k_i^{-(\qbar+2\nu-3)}
    \end{align*}
\end{enumerate}
where $c_L,\tilde c_L$ are constants that depends on $L$, and $p^{(L)}_i$ denotes the number of paths from pixel $i$ to the output of $\kr^{(L)}$.
\end{theorem}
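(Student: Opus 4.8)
The plan is to track how the hierarchical composition amplifies the power series coefficients, and then transfer that amplification to the eigenvalues via Lemma~\ref{lemma:taylor_to_eigs} and the machinery behind Theorem~\ref{thm:taylor_to_eigs}. First I would set up notation for the composition tree: at level $l$ the kernel is $\kr^{(l)}(\tbf)=f^{(l)}(\kr^{(l-1)}(s_0\tbf),\dots,\kr^{(l-1)}(s_{q-1}\tbf))$, and each $f^{(l)}$ is itself a PSD univariate-type map (arising from $\kappa_0,\kappa_1$ and products/sums) with a strictly positive linear coefficient; call it $a^{(l)}>0$, and set $A := \min_l a^{(l)}$ suitably normalized, so $A>1$ after absorbing the relevant positive constants. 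Part~1 is the combinatorial heart: I would prove by induction on $l$ that the coefficient of $\tbf^\n$ in $\kr^{(l)}$ is bounded below by $c_l \prod_{i: n_i>0} A^{\min(p_i^{(l)},n_i)} n_i^{-\nu}$, where $p_i^{(l)}$ counts paths from leaf $i$ to the root at depth $l$. The key mechanism: to build a monomial $\tbf^\n$, one can either (a) pull a factor $t_i$ through the linear term of some $f^{(l)}$ along a fresh path — each such use contributes a multiplicative $a^{(l)}$ and is available once per path, hence the exponent $\min(p_i^{(l)},n_i)$ — or, once paths are exhausted, (b) generate the remaining powers of $t_i$ inside a single lower-level factor, which by the inductive hypothesis and the polynomial-decay assumption $b_\n\sim c\prod n_i^{-\nu}$ still contributes at least $\sim n_i^{-\nu}$ up to constants. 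I would make this precise by expanding $f^{(l)}$ as $a^{(l)} u + (\text{higher order PSD terms with nonnegative coefficients})$ and picking, for each $i$, which of the $q$ shifted copies of $\kr^{(l-1)}$ absorbs the $t_i$-mass; since all coefficients are nonnegative, any single admissible assignment gives a valid lower bound, and I choose the one that distributes $t_i$ across as many distinct paths as possible before concentrating the leftover in one branch. The nonnegativity (established in Sec.~\ref{sec:multidot}) is what makes these "pick one term" lower bounds legitimate.

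For Part~2, I would combine Part~1 with Lemma~\ref{lemma:taylor_to_eigs}: $\lambda_\kk(\kr^{(L)}) = |\Sphere^{\qbar-2}|^d \sum_{\s\ge 0} b_{\kk+2\s}\prod_i \lambda_{k_i}(t^{k_i+2s_i})$. Dropping all terms except $\s=\zero$ and inserting the lower bound on $b_\kk$ from Part~1 gives $\lambda_\kk \gtrsim \prod_{i:k_i>0} A^{\min(p_i^{(L)},k_i)} k_i^{-\nu}\lambda_{k_i}(t^{k_i})$, and a standard Gamma-function asymptotic (as used in the proof of Theorem~\ref{thm:taylor_to_eigs}) gives $\lambda_{k}(t^{k}) \sim \text{const}\cdot k^{-(\qbar-3)/2 - \,?}$; being careful here, the $\s=\zero$ term alone yields $k^{-\nu}$ times a $\Gamma$-ratio $\sim k^{-(\qbar-2)/2}$ — I would instead keep the full sum over $\s$ exactly as in Theorem~\ref{thm:taylor_to_eigs} so that the $\sum_\s$ restores the extra $\nu-$ power and the exponent comes out to $\qbar+2\nu-3$. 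Concretely, I would factor $b_{\kk+2\s} \ge c_L \prod_i A^{\min(p_i^{(L)}, k_i+2s_i)} (k_i+2s_i)^{-\nu} \ge c_L \prod_i A^{\min(p_i^{(L)},k_i)} (k_i+2s_i)^{-\nu}$ (monotonicity lets me replace the exponent argument by $k_i$), pull the $A$-factor out of the sum, and what remains is exactly the sum appearing in Theorem~\ref{thm:taylor_to_eigs} for a kernel with coefficients $\sim \prod (k_i+2s_i)^{-\nu}$, whose asymptotics were already computed there to be $\sim \tilde c\prod k_i^{-(\qbar+2\nu-3)}$. This gives the claimed bound with $\tilde c_L = c_L \cdot \tilde c \cdot |\Sphere^{\qbar-2}|^d$. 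The statement that $\lambda_\kk = 0$ when $k_i>0$ for $i\notin\R$ follows since hierarchical-tree reachability defines $\R$, so $b_\n=0$ there and the $\lambda_{k_i}(t^{n_i})$ in Lemma~\ref{lemma:taylor_to_eigs} vanish.

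The main obstacle I anticipate is making the path-counting lower bound on the coefficients in Part~1 genuinely rigorous rather than heuristic: one must argue that the "linear channel" of each $f^{(l)}$ can be used independently along each of the $p_i^{(l)}$ paths, which requires showing that the shifted copies $\kr^{(l-1)}(s_r\tbf)$ that feed into a given node can simultaneously each supply one power of $t_i$ through distinct sub-paths, and that the remaining powers $n_i - \min(p_i,n_i)$ can be lumped into a single leaf-to-root branch without destroying the $n_i^{-\nu}$ rate — here the subtlety is that concentrating, say, $n_i - p_i$ powers into one copy of $\kr^{(l-1)}$ invokes the inductive bound at a coefficient whose index has a large entry, and one needs the induction hypothesis to be uniform enough (with the correct constant $c_l$, which will shrink by a factor depending on $q$ and $L$ at each level, hence "depends on $L$") to close the loop. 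A secondary technical point is verifying that each $f^{(l)}$ really does have a strictly positive linear coefficient $a^{(l)}$ and that $A = \prod$ or $\min$ of these, after the $c_\sigma/(mq)$-type normalizations, exceeds $1$; this should follow from inspecting $\kappa_0(u) = \tfrac12 + \tfrac{u}{\pi}+\cdots$ and $\kappa_1(u)=\tfrac1\pi + \tfrac{u}{2}+\cdots$ and the averaging structure of Table~\ref{tab:cgpk_cntk}, but the bookkeeping to get a clean universal $A>1$ is where care is needed.
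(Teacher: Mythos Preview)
Your Part~2 is correct and is essentially what the paper does: plug the coefficient lower bound into Lemma~\ref{lemma:taylor_to_eigs}, use monotonicity of $\min(p_i^{(L)},\cdot)$ in its second argument to pull the factor $A^{\min(p_i^{(L)},k_i)}$ out of the sum over $\s$, and then invoke the asymptotic from Theorem~\ref{thm:taylor_to_eigs} on the remaining sum to obtain the exponent $\qbar+2\nu-3$.

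Part~1, however, has a genuine gap, and it is precisely the one you flag at the end: your ``linear-channel'' mechanism does not produce a base $A>1$. In the concrete CGPK/CNTK setting the linear coefficients of $\kappa_0$ and $\kappa_1$ are $1/\pi$ and $1/2$, and each level introduces a further $1/q$ from the averaging; no rearrangement of these constants yields something strictly larger than~$1$. More fundamentally, the theorem is stated for a generic hierarchical factorizable kernel, and the paper's $A$ equals $1+1/a_0$, where $a_0>0$ is the \emph{constant} term of the one-variable factor --- it has nothing to do with linear coefficients of the $f^{(l)}$. The amplification is not obtained by routing one power of $t_i$ through a favourable branch; it comes from the \emph{multiplicity} of ways to distribute $n_i$ powers among $p_i^{(L)}$ independent copies of the variable.

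The paper's argument for Part~1 avoids induction on depth altogether. It first relates the stride-1 kernel $\kr^{(L)}$ to the associated stride-$q$ hierarchical kernel $\tilde\kr^{(L)}$ on $q^{L}$ \emph{disjoint} variables via an explicit substitution $S:[q^{L}]\to[d]$, obtaining $b_\n=\sum_{\mathcal S}\tilde b_{\tilde\n}$ where the sum ranges over all $\tilde\n$ with $\sum_{j:S(j)=i}\tilde n_j=n_i$ for each $i$. Since $\tilde\kr^{(L)}$ is genuinely factorizable with $\tilde b_{\tilde\n}=c\prod_j a_{\tilde n_j}$ and $|\{j:S(j)=i\}|=p_i^{(L)}$, this yields the exact product $b_\n=c\prod_i S_{p_i^{(L)}}(n_i)$, where $S_m(n)=\sum_{k_1+\cdots+k_m=n}\prod_r k_r^{-\nu}$ with the convention $0^{-\nu}=a_0$. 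A short induction on $m$ (the aggregation lemma) then gives $S_m(n)\ge (a_0+1)^{\min(m,n)-1}\,a_0^{\max(m-n,0)}\,n^{-\nu}$, which is where $A>1$ actually enters. The overlap of variables across the shifted copies $\kr^{(l-1)}(s_r\tbf)$ --- the very thing that makes your direct induction on $l$ awkward --- is exactly what the stride-$q$ reduction sidesteps.
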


The proof exploits relations between hierarchical stride-1 and stride-q kernels and relies on recursively combining power series, which are shown to maintain their polynomial decay.
We note that although the number of paths can grow rapidly with depth, the kernels we consider are normalized so that they always produce values in $[-1,1]$. As a result, both the power series coefficients and the eigenvalues are bounded in $[0,1]$. This normalization is reflected in the magnitudes of $c_L$ and $\tilde c_L$.


In fact, for a hierarchical, stride-1 kernel of depth $L$, the (normalized) number of paths $p_i^{(L)}$ of a pixel, $\x^{(i)}$, decays exponentially with its distance from the center of the receptive field, $|i-i_c|$. The number of paths is determined by convolving a rectangular function of width $q$, the size of the convolution filter, (i.e., $r(x)=1/q$ if $0 \le x \le q$ and 0 otherwise) with itself $L$ times. Note that such a repeated convolution is equal to the density function obtained by summing $L$ random variables distributed uniformly, resulting in the Irwin–Hall distribution, whose support is stretched by the size of the convolution filter $q$. With sufficiently large $L$, using the central limit theorem and assuming the number of pixels $d$ is greater than the receptive field size, the number of paths $p_i^{(L)}$ approaches a Gaussian centered at the central pixel with variance $V_{q,L} \approx Lq^2/12$, i.e., $p_i^{(L)} \propto 
\exp(-(i-i_c)^2/(2V_{q,L}))$ \cite{luo2016understanding}. Roughly speaking, we can conclude that the position of a pixel within the receptive field can have an exponential effect on both the power series coefficients and the eigenvalues of the kernel. 

Note that in the analysis above we assumed that the number of pixels $d$ is larger than the receptive field size. With a receptive field of size $d$ and cyclic convolutions, the number of paths from all pixels approaches a constant as $L$ grows. In practice, real CNNs use zero padding, which effectively ensures that $d$ is always larger than the receptive field.

\subsection{Kernels associated with the equivariant network} \label{sec:equivariant}

The following theorem characterizes the spectrums of CGPK-EqNet and CNTK-EqNet, by proving lower and upper bounds on both their power series coefficients as well as their eigenvalues.

\begin{theorem}\label{thm:eqnet-eigs}
Let $\kr^{(L)}$ denote either CGPK-EqNet or CNTK-EqNet of depth $L$ whose input includes $\qbar$ channels, with receptive field $\R$ and with ReLU activation. Then,
\begin{enumerate}
    \item $\kr^{(L)}$ can be written as a power series, $\kr^{(L)}(\tbf)=\sum_{\n \ge 0} b_\n \tbf^\n$ with
    \begin{align*}
     c_1 \prod_{i \in \R, n_i>0}\tilde A^{\min(p_i^{(L)},n_i)}n_i^{-\nu_a} \leq b_\n \leq c_2 \prod_{i \in \R, n_i>0} n_i^{-\nu_b},
    \end{align*} 
    
    \item The eigenvalues of $\kr^{(L)}$ are bounded by
    \begin{align*}
        c_3 \prod_{\substack{i \in \R\\k_i>0}} \tilde A^{\min(p_i^{(L)},k_i)} k_i^{-(\qbar+2\nu_a-3)}\leq \lambda_\kk \leq c_4 \prod_{\substack{i \in \R\\k_i>0}} k_i^{-\left(\qbar+2\nu_b-3 \right)},
    \end{align*}
\end{enumerate}
where for CGPK-EqNet $\nu_a=2.5$ and $\nu_b=1+3/(2d)$, while for CNTK-EqNet $\nu_a=2.5$ and $\nu_b=1+1/(2d)$.
Also, $p^{(L)}_i$ denotes the number of paths from pixel $i$ to the output of $\kr^{(L)}$, $\tilde A>1$ and $c_1,c_2,c_3$ and $c_4$ depend on $L$. 
\end{theorem}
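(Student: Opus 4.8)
The plan is to reduce Theorem~\ref{thm:eqnet-eigs} to the two structural theorems already proved, namely Theorem~\ref{thm:taylor_to_eigs} (factorizable polynomially-decaying kernels $\Rightarrow$ SH-product eigenfunctions with a matching polynomial decay of eigenvalues) and Theorem~\ref{thm:hierarchy} (hierarchical factorizable kernels pick up the extra $A^{\min(p_i^{(L)},\cdot)}$ factor). The first task is therefore to certify that CGPK-EqNet and CNTK-EqNet really are multi-dot product kernels of the right shape: from the recursion in Table~\ref{tab:cgpk_cntk} with $q=1$ in the head and circular padding, one checks by induction on $l$ that $\Sigma_{1,1}^{(l)}$ and $\Theta_{1,1}^{(l)}$ are functions of the pixel inner products $\tbf=(\langle\x^{(i)},\z^{(i)}\rangle)_{i\in[d]}$ only (this is exactly the "multi-dot product" claim already asserted at the start of Sec.~\ref{sec:multidot}), and that each is obtained from $\kappa_0,\kappa_1$ by the stride-$q$ hierarchical composition of the Definition preceding Theorem~\ref{thm:hierarchy}. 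Since $\kappa_0,\kappa_1$ are PSD with nonnegative Taylor coefficients, and the operations used (nonnegative linear combinations, products, and composition) preserve this, $\kr^{(L)}$ has a convergent power series $\sum_{\n\ge0}b_\n\tbf^\n$ with $b_\n\ge0$; and the receptive field $\R$ is exactly the set of pixels reachable through the stride-$q$ cone rooted at location $1$, i.e. $b_\n=0$ whenever $n_i>0$ for some $i\notin\R$.

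The second task, and the analytic heart, is to pin down the univariate decay rates $\nu_a,\nu_b$. Here I would first analyze the one-pixel ($d=1$, i.e. $\R=\{1\}$, equivalently a fully-connected kernel) behavior: one needs the asymptotics of the Taylor coefficients of the $L$-fold composition built from $\kappa_0$ and $\kappa_1$. The key facts are that $\kappa_1$ has a branch-point singularity at $u=1$ of the form $1-\kappa_1(u)\sim \mathrm{const}\,(1-u)^{3/2}$, so its Taylor coefficients decay like $n^{-5/2}$, while $\kappa_0$ has $1-\kappa_0(u)\sim\mathrm{const}\,(1-u)^{1/2}$, giving coefficients $\sim n^{-3/2}$; the CNTK head multiplies a $\dot\Sigma$ (a $\kappa_0$-type, slower-decaying) factor with a $\Theta$ factor, whereas the CGPK is a pure tower of $\kappa_1$'s. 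A singularity-analysis / Tauberian argument (à la the transfer theorems used implicitly in \cite{bietti2019inductive,bietti2020deep,scetbon2021spectral}) shows that composing these functions does not change the exponent of the dominant $(1-u)$-singularity up to integer shifts — the slowest-decaying constituent governs the tail — so the lower rate is $\nu_a=5/2$ (matching the $n^{-2.5}$ in the older boxed version) for both kernels, and the upper rate is governed by the $\kappa_1$/$\kappa_0$ singularity near $u=1$ combined with the $d$-fold spreading: this is where the $1+3/(2d)$ (CGPK) versus $1+1/(2d)$ (CNTK) split comes from, the $1/(2d)$ vs $3/(2d)$ reflecting the $\tfrac12$ vs $\tfrac32$ exponents of $\kappa_0$ vs $\kappa_1$ once the $d$ coordinates are tied together by the common variable in the hierarchical tree. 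Once the univariate rates are in hand, factorizability gives $b_\n\sim c\prod_{i\in\R,n_i>0}n_i^{-\nu}$-type behavior pixelwise.

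The third task is purely bookkeeping: feed the lower-bound form $b_\n\gtrsim c\prod_{i\in\R,n_i>0}n_i^{-\nu_a}$ into Theorem~\ref{thm:hierarchy} to get the lower bounds on both $b_\n$ (with the $\tilde A^{\min(p_i^{(L)},n_i)}$ factor) and on $\lambda_\kk$ (with exponent $\qbar+2\nu_a-3$); feed the upper-bound form $b_\n\lesssim c_2\prod_{i\in\R,n_i>0}n_i^{-\nu_b}$ into Theorem~\ref{thm:taylor_to_eigs} (or rather the monotonicity in the Mercer/Funk–Hecke formula of Lemma~\ref{lemma:taylor_to_eigs}) to get the upper bound $\lambda_\kk\lesssim c_4\prod_{i\in\R,k_i>0}k_i^{-(\qbar+2\nu_b-3)}$; and the vanishing statement $\lambda_\kk=0$ for $k_i>0,\ i\notin\R$ follows from Lemma~\ref{lemma:taylor_to_eigs} since all relevant $b_{\kk+2\s}$ vanish. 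The constants' dependence on $L$ is absorbed into $c_1,\dots,c_4$ and reflects the normalization $\kr^{(L)}\in[-1,1]$.

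I expect the main obstacle to be the singularity analysis in the second task: rigorously controlling how the exponent of the endpoint singularity at $u=1$ propagates through an $L$-fold composition of $\kappa_0,\kappa_1$ (and through the $\dot\Sigma\cdot\Theta$ product in the NTK recursion), and in particular extracting the exact constants $3/(2d)$ and $1/(2d)$ rather than just some admissible exponent, since the coordinates are coupled through a shared variable in the hierarchical tree and the naive factorizable estimate is not literally exact. The upper-bound direction (the $\nu_b$ rate) is the delicate part — the lower bound only needs the crude $n^{-5/2}$ slowest-term estimate — so most of the real work lives in establishing that the Taylor tail of the composed kernel is no larger than $\prod n_i^{-\nu_b}$.
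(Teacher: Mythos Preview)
Your overall architecture (establish multi-dot product structure, control the power-series tail, then push through Lemma~\ref{lemma:taylor_to_eigs}/Theorem~\ref{thm:taylor_to_eigs} and Theorem~\ref{thm:hierarchy}) matches the paper's, but the analytic core in your ``second task'' diverges from what the paper actually does, and the mechanism you sketch has a real gap.

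For the \emph{upper} bound the paper does \emph{not} run a multivariate singularity analysis. Instead it uses a short and rather clever trick: substituting $t_1=\dots=t_d=u$ collapses $\kr^{(L)}(\tbf)$ to the depth-$L$ fully-connected GPK/NTK kernel $K_L^{\mathrm{FC}}(u)$ (Lemma~\ref{lemma:CNTK_to_NTK}), whose univariate coefficients are known to satisfy $\tilde a_k=\Theta(k^{-\nu})$ with $\nu=2.5$ (GPK) resp.\ $\nu=1.5$ (NTK). This yields the level-set identity $\sum_{|\n|=k}b_\n=\Theta(k^{-\nu})$ (Lemma~\ref{lemma:sum_b_i}). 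The per-coordinate exponent $\nu_b=1+(\nu-1)/d$ is then extracted by a box/pigeonhole argument (Lemma~\ref{lemma:b_upper_bound}): compare $\sum_{\n\in\prod_i[n_i/2,n_i]}b_\n$ with $\sum_{k=\bar n/2}^{\bar n}\tilde a_k$, use that the box has $\asymp\prod n_i$ points and minimum value $b_{n_1,\dots,n_d}$, and finish with AM--GM on $(n_1+\dots+n_d)$ vs.\ $(n_1\cdots n_d)^{1/d}$. That is where $3/(2d)$ and $1/(2d)$ come from --- as $(\nu-1)/d$ --- not from tracking an endpoint singularity through the hierarchy. Your proposed route via transfer theorems might in principle recover the same numbers, but as you yourself note, you do not have a concrete mechanism for how ``the $d$ coordinates are tied together''; the paper's collapse-to-FC trick sidesteps this entirely.

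For the \emph{lower} bound the paper also does not rely on singularity analysis or on factorizability of $\kr^{(L)}$ (which is \emph{not} factorizable --- be careful, you assert this twice). It proceeds by direct induction on $L$: expand $\kappa_1\bigl(\tfrac{1}{q}\sum_j \kr^{(L-1)}(s_j\tbf)\bigr)$ via the multivariate Fa\`a di Bruno formula with ordinary Bell polynomials, and keep a single nonnegative term (specifically $\kk=(1,\dots,1)\in\Real^q$) to get $b_\n\ge C_q\,\tilde b_{\n_1}\tilde b_{\n_q}$ and hence $b_\n\ge c\prod n_i^{-2.5}$ (Lemma~\ref{lemma:taylor_lower_bound}); the CNTK case follows because $\Theta^{(L)}$ termwise dominates $\Sigma^{(L)}$. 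The $\tilde A^{\min(p_i^{(L)},\cdot)}$ factor then comes from bounding $\kr^{(L)}$ below by a hierarchical \emph{factorizable} surrogate and invoking Theorem~\ref{thm:hierarchy}, not from applying Theorem~\ref{thm:hierarchy} to $\kr^{(L)}$ itself. So your step ``feed the lower-bound form into Theorem~\ref{thm:hierarchy}'' needs that intermediate comparison, which your outline omits.
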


The proof relies on the recursive formulation of the kernels, which gives rise to a multivariate version of the Fa\'{a} di Bruno formula \cite{schumann2019multivariate}, involving Bell polynomials; those are exploited to bound the coefficients of the power series expansion of the kernels.
The implication of Theorem \ref{thm:eqnet-eigs} 
is that CGPK-EqNet and CNTK-EqNet are bounded from above and below by factorizable kernels. The eigenvalues of these bounding kernels decay polynomially with the product of the frequency in each pixel, $k_i$, with an exponent that depends on $\qbar$, the number of input channels. While this implies in general that with GD, low frequencies are learned faster than high frequencies, the theorem also states that it should be faster to learn target functions that vary (i.e., involve high frequencies) in a small set of pixels compared to ones that vary in many pixels. For example, according to the lower bound in Thm.~\ref{thm:eqnet-eigs}, learning an SH-product of frequency $k$ in one pixel and constant frequencies in the other pixels should require $O(k^{\qbar+2})$ GD iterations. Learning an SH-product of frequency $k/{\bar m}$ in each of $\bar m$ pixels should require $O((k/\bar m)^{\bar m(\qbar+2)})$ iterations. With $k \gg \bar m$, this is exponentially slower by a factor of $\bar m$. With $\bar m=d$, the speed of learning decays with an exponent that depends on the size of the entire signal. For such functions, over-parameterized CNNs behave similarly to fully connected networks, for which the eigenvalues of their respective kernels, FC-GPK and FC-NTK, decay roughly as $k^{-O(\tilde d)}$ for input in $\Sphere^{\tilde d-1}$. Note that the exponent in the polynomial decay does not depend on $q$, the size of the convolution filter. We conclude that over-parameterized CNNs can more efficiently learn target functions whose variation is restricted to subsets of the pixels. This is not true for fully connected networks.

While the form of the polynomial decay induces bias toward learning functions that depend on fewer pixels, according to the lower bound the multiplicative factors of these polynomials exhibit bias toward learning functions in which the variation is localized near the center of the receptive field. This bias is affected by the number of paths from a pixel node to the output and depends on the depth and the size of the convolution filter $q$.

Figure \ref{fig:coefs-bounds} provides a visualization of the power series coefficients of CGPK-EqNet and CNTK-EqNet for $d=2$ and $q=2$. The upper and lower bounds, plotted as planes in these log-log plots, indicate the maximal and minimal asymptotic directions determined by the exponents in Theorem~\ref{thm:eqnet-eigs}.
The coefficients indeed appear to lie between the bounds, with coefficients of orders that vary simultaneously along a single axis (i.e., $(n,0)$) lying close to the lower bound, while those of orders that vary along both axes (i.e., $(n,n)$) lying close to the upper bound. In this example the number of paths are equal for both pixels, so position does not affect the coefficients.

\begin{figure}[tb]\label{fig:power_coefs}
    \centering
    \includegraphics[width=0.4\textwidth]{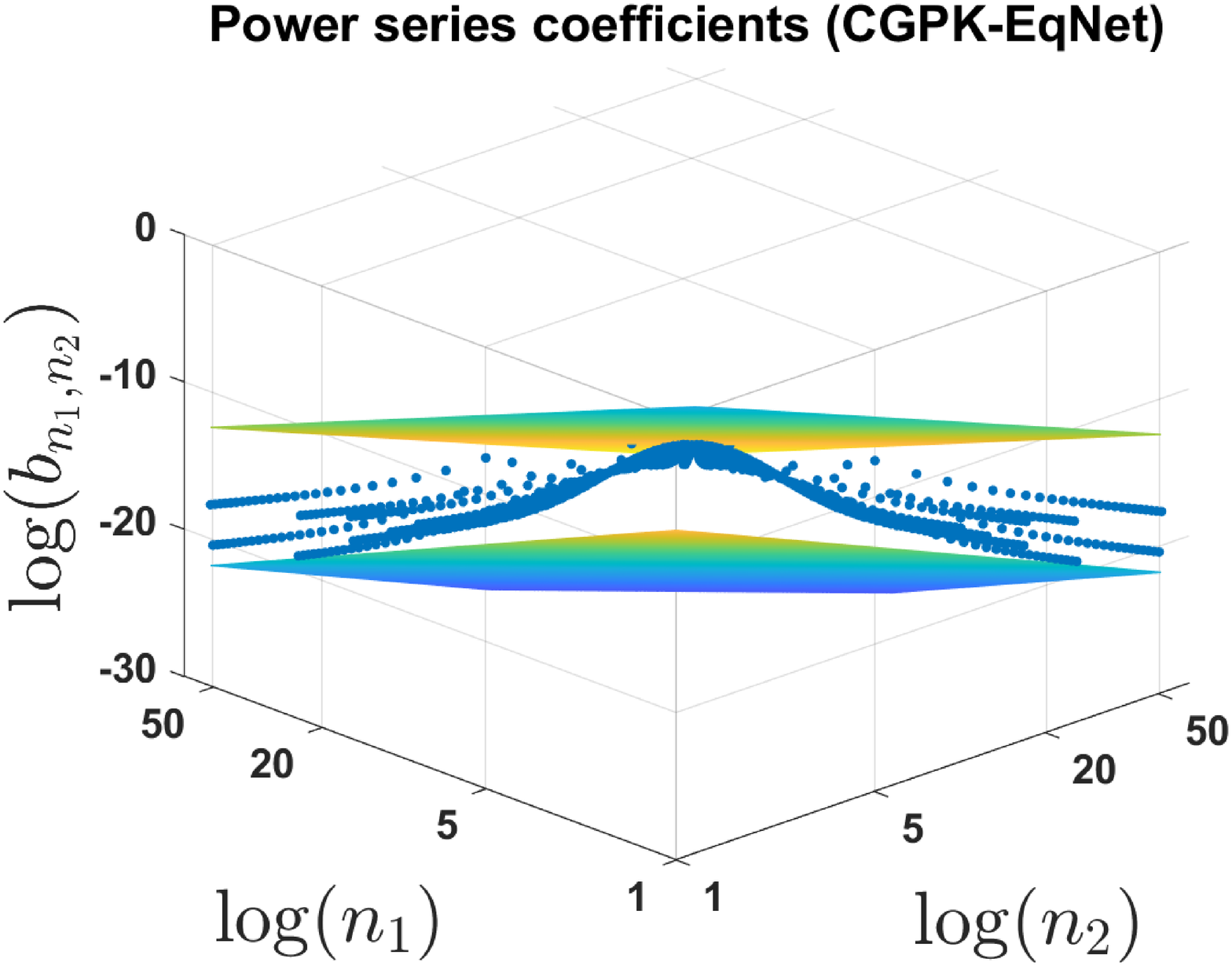}
    \includegraphics[width=0.4\textwidth]{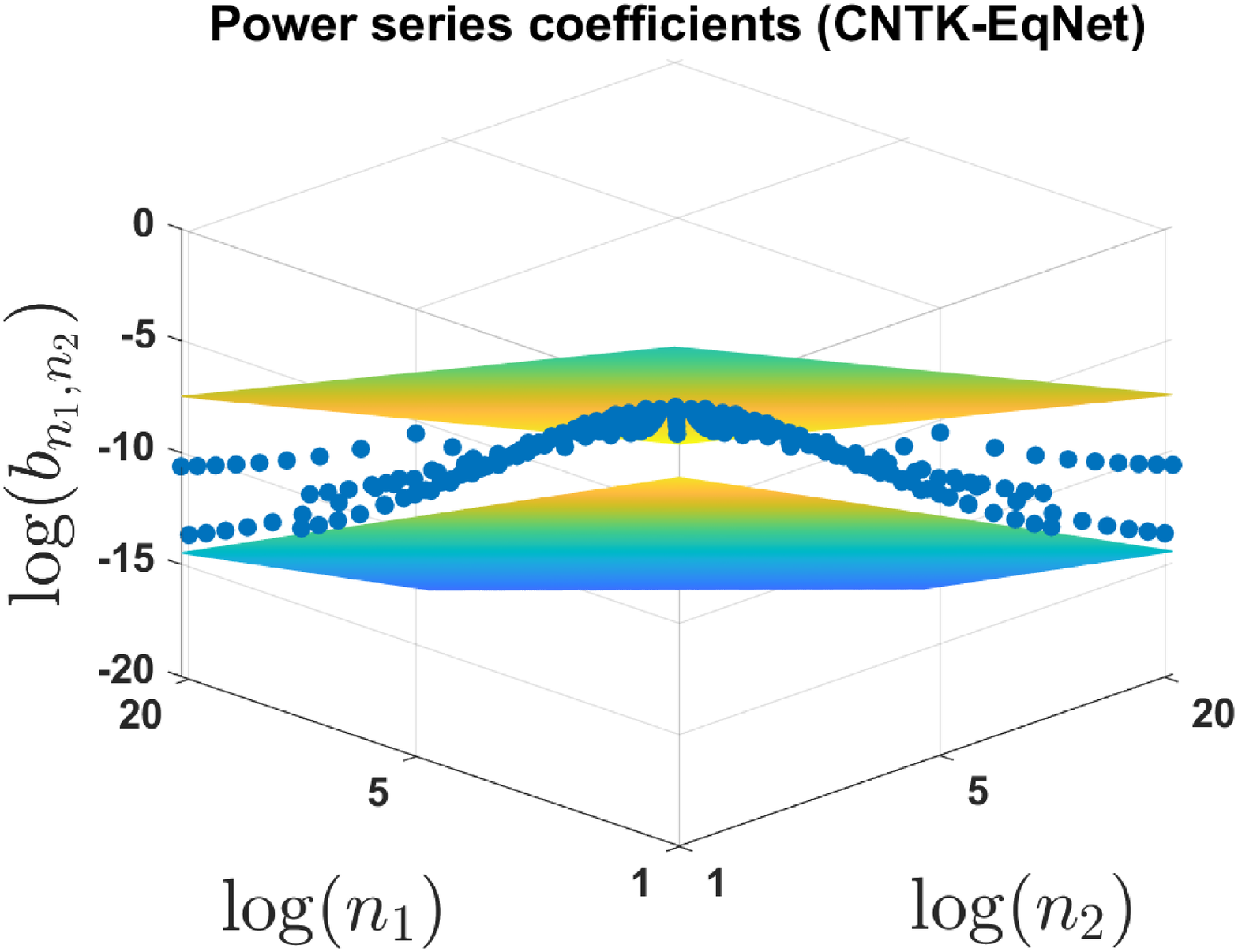}
    \caption{\small The power series coefficients of CGPK-EqNet (top, $n \le 50$) and CNTK-EqNet (bottom, $n \le 20$). Here $d=2$, $q=2$, $L=2$ and the receptive field size is 2. The coefficients, $b_\n$, marked with blue points, are shown as functions of $\n=(n_1,n_2)$. Our bounds are depicted in each graph by two planes whose slopes are determined by the exponents in Theorem~\ref{thm:eqnet-eigs}.}
    \label{fig:coefs-bounds}
    \end{figure}

\subsection{Trace and GAP kernels} \label{sec:trace}

Our next objective is to characterize the RKHS of CGPK and CNTK and their GAP versions.

\begin{definition}
Let $\kr(\x,\z)$ be a multi-dot product kernel.
We define the respective trace kernel by $\krtrace(\x,\z)=\frac{1}{d}\sum_{i=0}^{d-1} \kr(s_i\x,s_i\z)$ and GAP kernel by $\krgap(\x,\z)=\frac{1}{d^2}\sum_{i=0}^{\dbar-1} \sum_{j=0}^{\dbar-1} \kr(s_i\x,s_j\z)$. 
\end{definition}

Clearly, by their definition (Section~\ref{sec:CNTK_formula}), CGPK and CNTK respectively are the trace kernels of CGPK-EqNet and CNTK-EqNet, while CGPK-GAP and CNTK-GAP are their GAP versions.


The eigenvalues and eigenfunctions of trace and GAP kernels can be derived from their generating kernel, as is established in the following lemma.
\begin{theorem}
\label{thm:GAP}
Let $\kr$ be a multi-dot-product kernel with Mercer's decomposition as in \eqref{eq:mercer}, and let $\krtrace$ and $\krgap$ respectively be its trace and GAP versions. Then,
\begin{enumerate}
    \item $\krtrace(\x,\z)=\sum_{\kk,\jj} \lambtr_{\kk}Y_{\kk,\jj}(\x)Y_{\kk,\jj}(\z)$ with
    \begin{align}
        \lambtr_{\kk} = \frac{1}{d}\sum_{i=0}^{d-1} \lambda_{s_i\kk}, 
    \end{align}
    where $\lambda_{\kk}$ denotes an eigenvalue of $\kr$.
    \item $\krgap(\x,\z)=\sum_{\kk,\jj} \lambtr_{\kk} \tilde Y_{\kk,\jj}(\x) \tilde Y_{\kk,\jj}(\z)$ with 
    \begin{align*}
        \tilde Y_{\kk,\jj}(\x) = \frac{1}{\sqrt{d}} \sum_{i=0}^{{\dbar-1}}Y_{s_i\kk,s_i\jj}(\x).
    \end{align*}
\end{enumerate}
\end{theorem}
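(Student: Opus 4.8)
The plan is to expand $\krtrace$ and $\krgap$ through Mercer's decomposition \eqref{eq:mercer} of $\kr$ and then exploit the action of the cyclic shift group $\mathbb{Z}_{d}$ on the SH-product eigenbasis. The structural fact I would establish first is that a cyclic shift of the input merely permutes the eigenfunctions: since $Y_{\kk,\jj}(\x)=\prod_{r}Y_{k_{r}j_{r}}(\x^{(r)})$ and $s_{i}$ only relabels pixels, $Y_{\kk,\jj}(s_{i}\x)=Y_{s_{i}\kk,\,s_{i}\jj}(\x)$, where $s_{i}$ acts on the index vectors $\kk,\jj$ by the very same cyclic shift (the direction is immaterial, as everything below is summed over the whole group). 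I also record that $\lambda_{\kk}$ depends only on the frequency vector and that $\sum_{i=0}^{d-1}\lambda_{s_{i}\kk}$ is constant on each shift-orbit. It is convenient to phrase the rest in operator form: let $(R_{i}f)(\x)=f(s_{i}\x)$, a unitary representation of $\mathbb{Z}_{d}$ on $L^{2}(\ms)$ with $R_{i}^{*}=R_{-i}$ (the $s_{i}$ are measure preserving); let $T$ be the integral operator of $\kr$; and let $P=\frac1d\sum_{i}R_{i}$, the orthogonal projection onto shift-invariant functions. A change of the integration variable ($\z\mapsto s_{-i}\z$ in the trace average, $\z\mapsto s_{-j}\z$ in the GAP average) then gives $T^{\mathrm{Tr}}=\frac1d\sum_{i}R_{i}TR_{i}^{*}$ and $T^{\mathrm{GAP}}=PTP$.

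For part~1, apply $T^{\mathrm{Tr}}$ to $Y_{\kk,\jj}$: the equivariance identity makes $R_{i}^{*}Y_{\kk,\jj}=Y_{s_{-i}\kk,s_{-i}\jj}$ an eigenfunction of $T$ with eigenvalue $\lambda_{s_{-i}\kk}$, so
\[
T^{\mathrm{Tr}}Y_{\kk,\jj}=\tfrac1d\sum_{i}\lambda_{s_{-i}\kk}\,R_{i}R_{i}^{*}Y_{\kk,\jj}=\Big(\tfrac1d\sum_{i}\lambda_{s_{i}\kk}\Big)Y_{\kk,\jj}=\lambtr_{\kk}\,Y_{\kk,\jj}.
\]
Thus the SH-products remain a complete orthonormal eigenbasis, now with eigenvalues $\lambtr_{\kk}$, which are nonnegative (the $\lambda_{\kk}$ are) and independent of $\jj$; since $\krtrace$ is again a continuous PSD kernel, this is the Mercer decomposition claimed in part~1.

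For part~2, since $T^{\mathrm{GAP}}=PTP$, its range — hence every eigenfunction with nonzero eigenvalue — lies in $\mathrm{Im}\,P$. Now $PY_{\kk,\jj}=\frac1d\sum_{i}Y_{s_{i}\kk,s_{i}\jj}=\frac1{\sqrt d}\,\tilde Y_{\kk,\jj}$, so $\mathrm{Im}\,P$ is spanned by the shift-sums $\tilde Y_{\kk,\jj}$; and a direct computation, using the shift-invariance $\tilde Y_{s_{i}\kk,s_{i}\jj}=\tilde Y_{\kk,\jj}$ and the orbit-invariance of $\lambtr_{\kk}$, gives $PTP\,\tilde Y_{\kk,\jj}=\lambtr_{\kk}\,\tilde Y_{\kk,\jj}$. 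So each $\tilde Y_{\kk,\jj}$ is an eigenfunction of $T^{\mathrm{GAP}}$ with eigenvalue $\lambtr_{\kk}$, while $\mathrm{Im}(I-P)$ and, inside every shift-orbit, the orthogonal complement of $\tilde Y_{\kk,\jj}$, lie in the null space of $T^{\mathrm{GAP}}$; assembling these eigenpairs yields part~2.

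The two change-of-variable identities are routine, and part~1 follows from them in a line. The delicate point is the bookkeeping behind part~2: the shift-sums $\tilde Y_{\kk,\jj}$ are not all distinct — they depend only on the shift-orbit of $(\kk,\jj)$ — and they fail to be unit-normalized exactly when the frequency vector $\kk$ has a nontrivial cyclic stabilizer. One therefore has to pin down the correct orthonormal basis of $\mathrm{Im}\,P$ (one normalized representative per orbit) and check that $PTP$ restricts to a rank-one operator with image $\mathrm{span}\{\tilde Y_{\kk,\jj}\}$ on each orbit block; granting that, the eigenvalue on that line is $\lambtr_{\kk}$ regardless of the orbit size, which is exactly what makes the stated formula clean.
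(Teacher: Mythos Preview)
Your proof is correct and follows essentially the same route as the paper --- both rest on the shift-equivariance identity $Y_{\kk,\jj}(s_{i}\x)=Y_{s_{-i}\kk,\,s_{-i}\jj}(\x)$ and then average over the cyclic group --- except that you phrase things in operator language ($T^{\mathrm{Tr}}=\tfrac1d\sum_{i}R_{i}TR_{i}^{*}$, $T^{\mathrm{GAP}}=PTP$) while the paper works directly with the Mercer expansion of the kernel. Your closing remarks on orbit/stabilizer bookkeeping are in fact more careful than the paper's, which simply asserts that the $\tilde Y_{\kk,\jj}$ form an orthonormal basis without addressing normalization when $\kk$ has a nontrivial cyclic stabilizer.
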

 
According to this theorem, the eigenfunctions of a trace kernel are the SH-products. This is simply because the trace kernel itself is multi-dot product. Its eigenvalues are obtained by averaging the eigenvalues of $\kr$ for shifted frequencies $s_i\kk$, making the eigenvalues $\lambda_{\kk}$ invariant to shifts of the index vector $\kk$. Note however that for a trace kernel, the respective eigenfunctions are not shift invariant.  The second part of the lemma establishes that GAP kernels share the same eigenvalues as their respective trace kernels. For these kernels, however, the eigenfunctions consist of sums of SH-products that are shift invariant. 

By combining Theorem.~\ref{thm:GAP} with Theorem~\ref{thm:eqnet-eigs}
we obtain the following bounds on the eigenvalues of these kernels.

\begin{corollary} \label{cor:eigenvalues_decay}
Let $\kr^{(L)}$ denote either CGPK, CGPK-GAP, CNTK or CNTK-GAP of depth $L$ whose input includes $\qbar$ channels, with receptive field $\R$ and with ReLU activation. Then, the eigenvalues of $\kr^{(L)}$ are bounded by
\begin{align*}
    \sum_{j=0}^{d-1}  & c_3 \prod_{i \in \R, k_{i+j}>0} \tilde A^{\min(p_{i+j}^{(L)},k_{i+j})} k_{i+j}^{-\left(\qbar+2\nu_a-3 \right)} \leq \lambda_\kk \leq \\
    &\sum_{j=0}^{d-1} c_4 \prod_{i \in \R, k_{i+j}>0} k_{i+j}^{-\left(\qbar+2\nu_b-3 \right)},
\end{align*}
where for CGPK, CGPK-GAP $\nu_a=2.5$ and $\nu_b=1+3/(2d)$, while for CNTK, CNTK-GAP $\nu_a=2.5$ and $\nu_b=1+1/(2d)$.
Also, $p^{(L)}_i$ denotes the number of paths from pixel $i$ to the output of $\kr^{(L)}$, $\tilde A>1$ and $c_3$ and $c_4$ depend on $L$.
Here $k_{i+j}$ is identified with $i+j-d$ if $i+j>d$.
\end{corollary}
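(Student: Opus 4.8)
The plan is to obtain the corollary as an immediate consequence of Theorem~\ref{thm:GAP} (the trace/GAP reduction) together with Theorem~\ref{thm:eqnet-eigs} (the equivariant bounds). First I would record the identification, already noted right after the definition of the trace and GAP kernels, that CGPK and CNTK are precisely the trace kernels $\krtrace$ of CGPK-EqNet and CNTK-EqNet, while CGPK-GAP and CNTK-GAP are their GAP versions $\krgap$. In every case the generating kernel $\kr$ is one of the two EqNet kernels and is multi-dot product, so Theorem~\ref{thm:GAP} applies to it.

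Next I would invoke Theorem~\ref{thm:GAP}: its two parts assert that $\krtrace$ and $\krgap$ share the \emph{same} eigenvalues, namely $\lambtr_\kk=\frac1d\sum_{j=0}^{d-1}\lambda_{s_j\kk}(\kr)$, where $\lambda_\bullet(\kr)$ are the eigenvalues of the generating EqNet kernel (the two kernels differ only in their eigenfunctions — SH-products versus their shift-invariant sums — which is irrelevant to a statement about eigenvalues). This reduces the corollary to bounding each summand $\lambda_{s_j\kk}(\kr)$ by the quantities appearing under the sums in the statement.

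For that I would apply Theorem~\ref{thm:eqnet-eigs} to the frequency vector $s_j\kk$, for each fixed $j\in\{0,\dots,d-1\}$. The $i$-th entry of $s_j\kk$ equals $k_{i+j}$ under the cyclic identification $k_{i+j}\equiv k_{i+j-d}$, and shifting the frequency index by $j$ amounts to shifting the receptive field $\R$ and the path profile $p^{(L)}$ by $j$ as well, since $\kr(s_j\x,s_j\z)$ is again an EqNet kernel (now the kernel with output at pixel $j+1$, by the relation $\Sigma_{i,i}^{(L)}(\x,\z)=\Sigma_{1,1}^{(L)}(s_{i-1}\x,s_{i-1}\z)$). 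Substituting these into Theorem~\ref{thm:eqnet-eigs} gives, for each $j$,
\begin{align*}
c_3\!\!\prod_{\substack{i\in\R\\ k_{i+j}>0}}\!\!\tilde A^{\min(p^{(L)}_{i+j},k_{i+j})}k_{i+j}^{-(\qbar+2\nu_a-3)}\ \le\ \lambda_{s_j\kk}(\kr)\ \le\ c_4\!\!\prod_{\substack{i\in\R\\ k_{i+j}>0}}\!\!k_{i+j}^{-(\qbar+2\nu_b-3)},
\end{align*}
with $(\nu_a,\nu_b)$ taken from Theorem~\ref{thm:eqnet-eigs} according to whether $\kr$ is CGPK-EqNet or CNTK-EqNet. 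Summing over $j$ and folding the $\frac1d$ of Theorem~\ref{thm:GAP} into the constants $c_3,c_4$ (which already depend only on $L$) yields exactly the claimed two-sided bound; the vanishing clause — $\lambda_\kk=0$ once some cyclic placement puts positive frequency mass outside $\R$ — is inherited term by term from Theorem~\ref{thm:eqnet-eigs}.

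I expect the only real difficulty to be bookkeeping rather than mathematics: one must track how a single cyclic shift acts simultaneously on the frequency multi-index $\kk$, on the receptive field $\R$, and on the path counts $p^{(L)}_i$, and pin down the cyclic index conventions so that the $j$-th summand is genuinely the correctly reindexed copy of the Theorem~\ref{thm:eqnet-eigs} bound. Once those conventions are fixed, the remaining argument is a substitution and a sum.
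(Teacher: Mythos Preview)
Your proposal is correct and matches the paper's approach exactly: the paper states the corollary as an immediate consequence of combining Theorem~\ref{thm:GAP} with Theorem~\ref{thm:eqnet-eigs}, and your argument carries this out by using the trace/GAP eigenvalue formula $\lambtr_\kk=\tfrac{1}{d}\sum_j\lambda_{s_j\kk}$ and then bounding each summand via the equivariant-kernel bounds. Your caveat about the bookkeeping is apt—the only subtlety is tracking how the cyclic shift reindexes $\kk$, $\R$, and the path counts $p^{(L)}$, and indeed a direct substitution of $(s_j\kk)_i=k_{i+j}$ into Theorem~\ref{thm:eqnet-eigs} yields $p_i^{(L)}$ rather than $p_{i+j}^{(L)}$ in the exponent, so be careful not to over-shift the path profile when you write it out.
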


Overall, Corollary \ref{cor:eigenvalues_decay} indicates
that the eigenvalues of the trace and GAP CGPK and CNTK decay polynomially with bounds that are similar to those of the equivariant kernel. Here too, the exponent depends on the number of channels and forms a bias toward learning target functions that depend on few pixels. The multiplicative coefficients depend on the number of paths from each pixel to the output. In contrast to the equivariant kernels, the eigenvalues of the trace and GAP kernels are averages over shifted indices, and therefore they depend on the \textit{relative} positions of pixels. For example, the eigenvalues of SH-products involving only two pixels with large frequencies, $k_i$ and $k_{i+\delta}$, is large when $\delta$ is small, and is smaller when $\delta$ is large. This is illustrated in the graph in Figure \ref{fig:coefs}, which shows the relative magnitude of the eigenvalues, according to the lower bound, involving exactly two pixels as a function of their distance. It can be seen that the coefficients decay exponentially with the distance between the two pixels. Moreover, for a fixed receptive field size, a more rapid decay is obtained with a smaller filter and deeper architecture, compared to a larger filter and shallower network.

\begin{figure}[tb]
\centering
\includegraphics[width=0.33\textwidth]{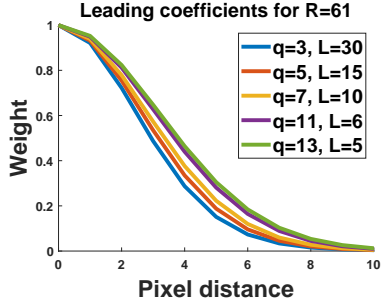}
\caption{\small The figure shows the relative magnitude of the eigenvalues of a hierarchical trace kernel for SH-products involving two pixels as a function of the distance between the pixels. This is shown for various architectures with different depths $L$ and convolution filter sizes $q$ that results in the same receptive field size of 61.}
\label{fig:coefs}
\end{figure}

The graphs in Figure \ref{fig:numeric} show the eigenvalues of the CGPK kernel evaluated numerically for a three-layer network. The figure shows the eigenvalues for frequencies $k$ in 1-4 pixels. The slope of these lines, which depicts the exponent of the decay of the corresponding eigenvalues, is in good correspondence with our theory. Moreover, the decay rate is higher for frequencies spread over more pixels. This can be contrasted with the eigenvalues for the Gaussian Process Kernel for a fully connected network (FC-GPK) shown in Figure~\ref{fig:numeric_decay_ntk}. For that kernel the decay rate is the same regardless of the pixel spread of frequencies, as is indicated by the parallel lines in the graph, and is empirically roughly equal to the maximal decay rate obtained for CGPK with 4 pixels. Figure~\ref{fig:numeric} further shows the eigenvalues obtained with two frequency patterns, $(k_1,k_2,0,0,...)$ and $(k_1,0,k_2,0,...)$. As is predicted by our theory, while the decay rate for both patterns is similar, the latter eigenvalues are smaller than the former ones by a multiplicative constant.

\begin{figure}[tb]
    \centering
    \includegraphics[width=0.4\textwidth]{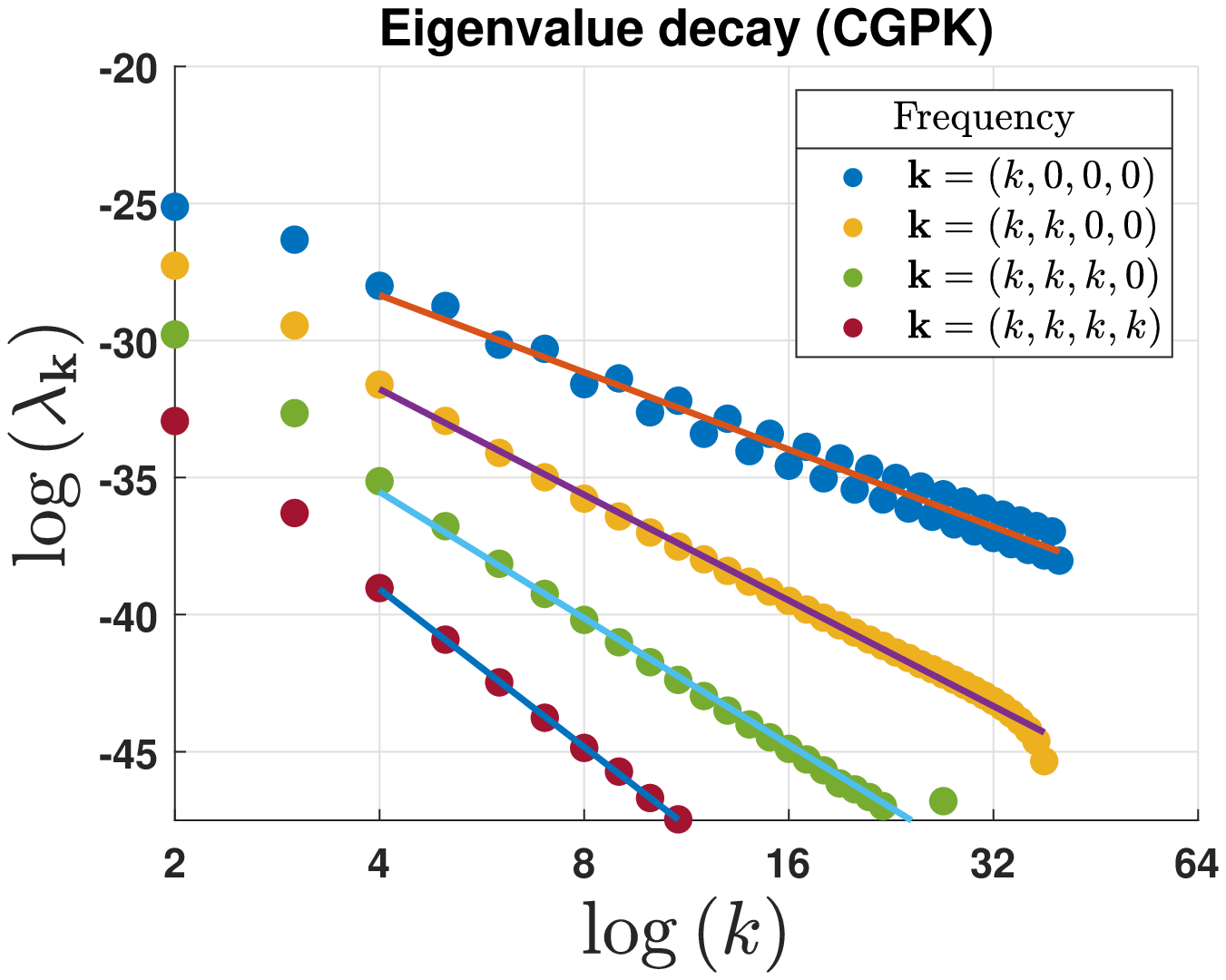}
    \includegraphics[width=0.4\textwidth]{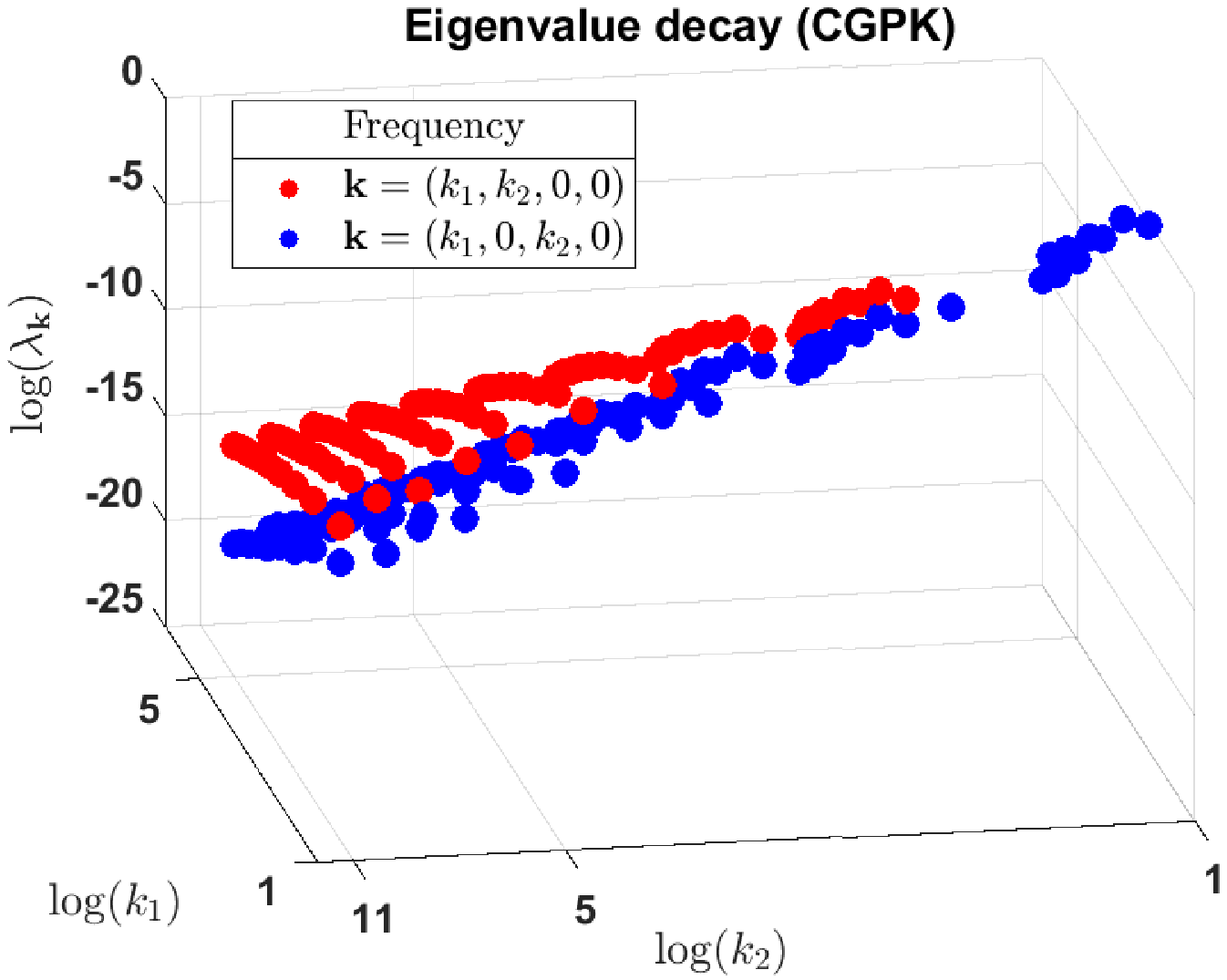}
    \caption{\small Left: The eigenvalues of CGPK for frequency patterns that include either one non zero frequency (blue dots), two (orange), three (green) or four (maroon) identical frequencies. The slopes (respectively, $-5.63$, $-7.71$, $-9.3$ and $-11.5$) indicate the exponent for each pattern. ($d=4$, $\qbar=3$, $q=2$, $L=3$).
    Right: The eigenvalues of CGPK for two frequency patterns that include exactly two non zero frequency, either next to each other (red dots) or separated by one zero frequency (blue). Consistent with our theory, the eigenvalues obtained with non-zero frequencies next to each other are larger than those obtained with the same frequencies but separated ($d=4$, $\qbar=3$, $q=2$, $L=3$).}
    \label{fig:numeric}
\end{figure}

\begin{figure}[tb]
    \centering
    \includegraphics[width=0.4\textwidth]{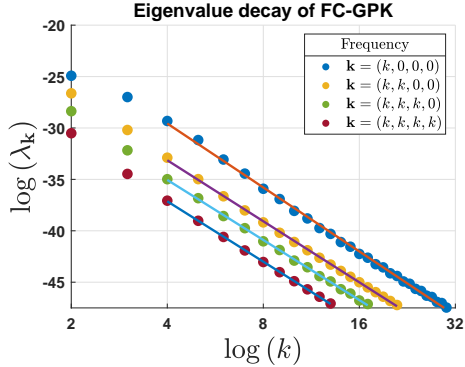}
    \caption{\small The eigenvalues of FC-GPK on $\ms$ for frequency patterns that include either one non zero frequency (blue dots), two (yellow), three (orange) or four (maroon) identical frequencies. The slopes (respectively, $-12.4$, $-11.8$, $-11.6$ and $-11.7$) indicate the exponent for each pattern. ($d=4$, $\qbar=2$, $q=2$, $L=3$ ).}
    \label{fig:numeric_decay_ntk}
\end{figure}

\begin{figure}[t]
    \centering
    \includegraphics[width=0.4\textwidth]{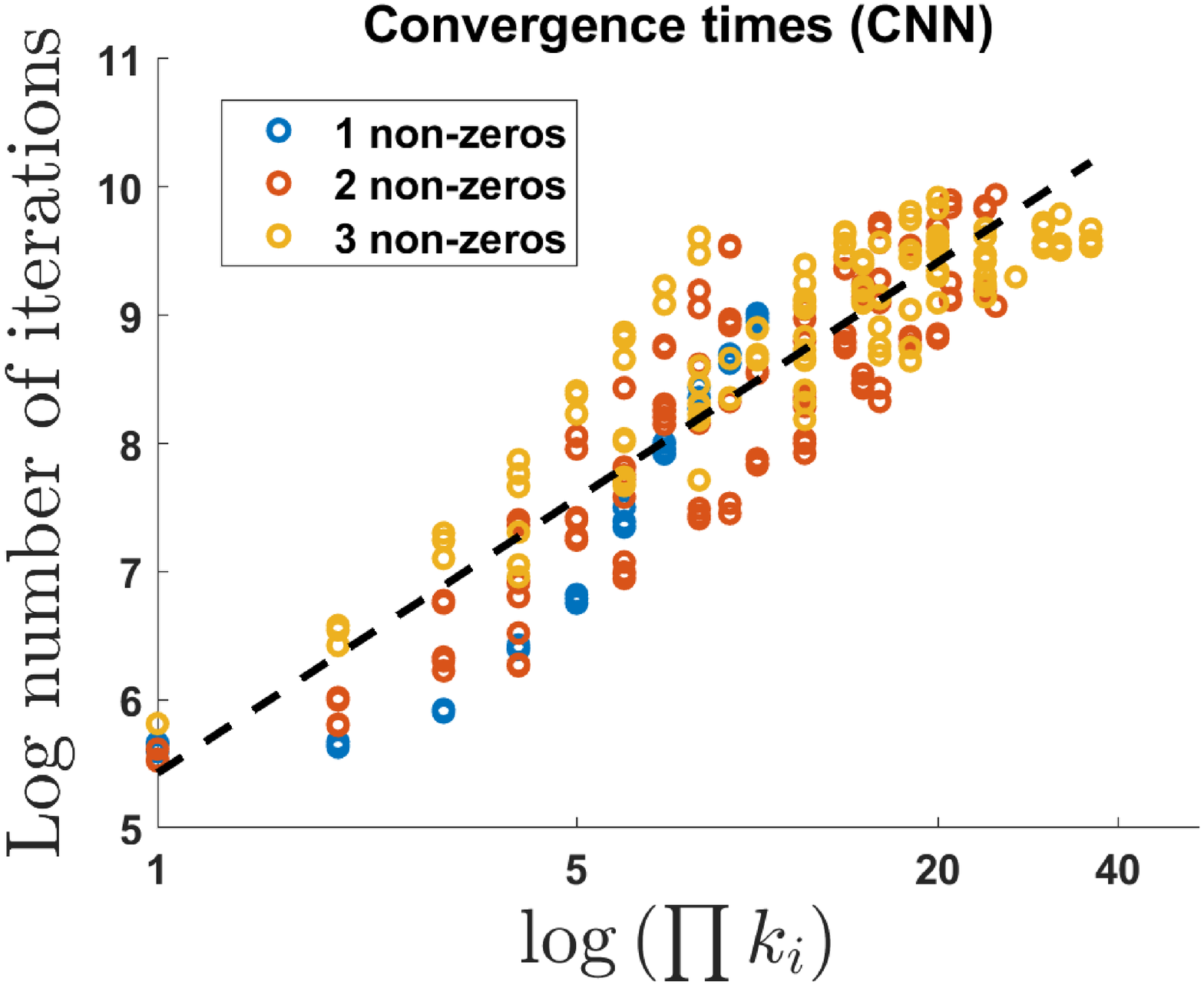}
    \includegraphics[width=0.4\textwidth]{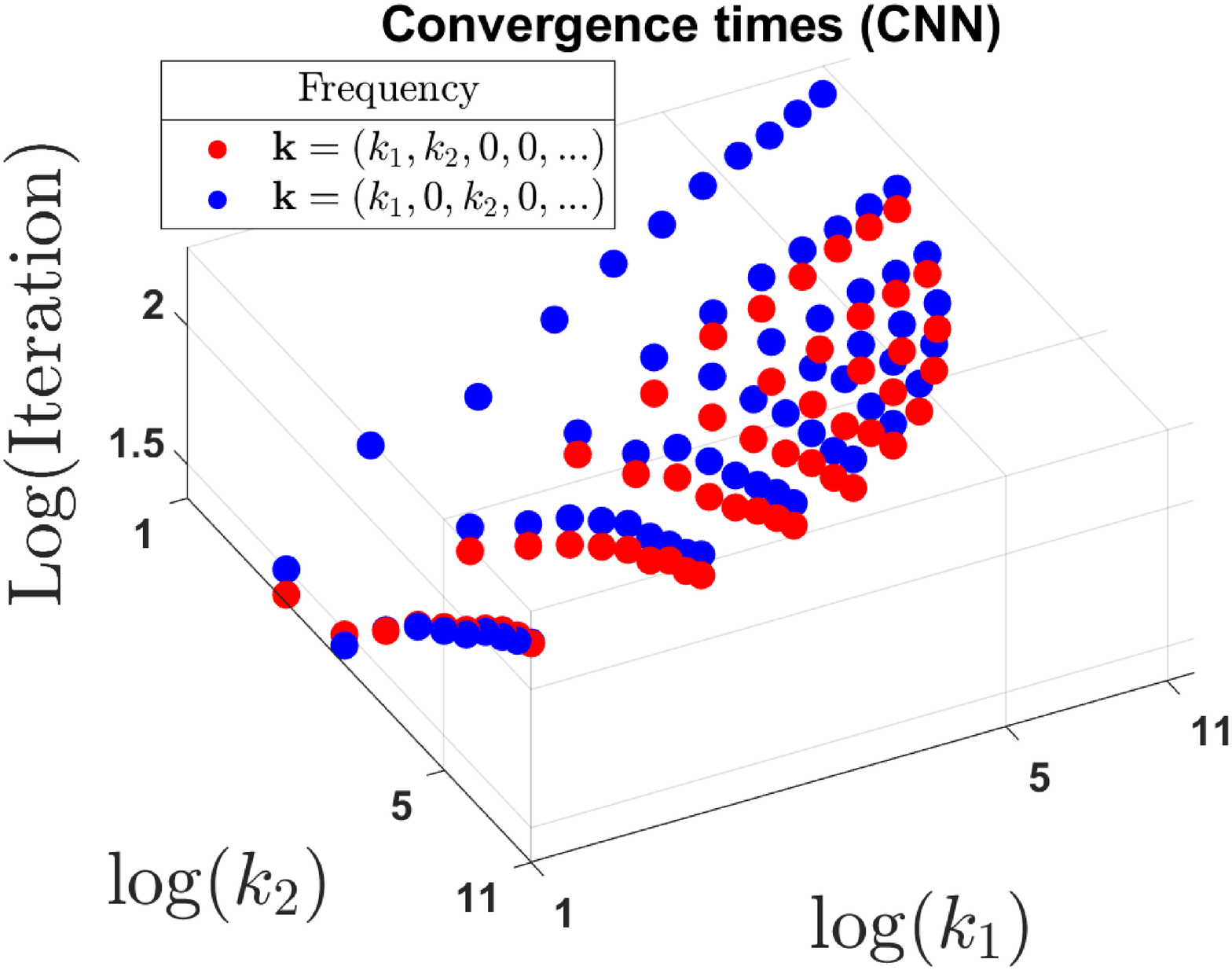}    
    \caption{\small Training a CNN to fit eigenfunctions of the CNTK kernel ($\qbar=2$, $d=8$, $q=3$, $L=3$, $m=1000$). Each experiment is indicated by a circle or dot, depicting the number of GD iterations required to reach a $10^{-4}$ error as a function of (the product of) frequencies. Recall that, to the extent that GD for the CNN is similar to kernel GD of CNTK, the number of iterations needed to learn an eigenfunction should be inversely proportional to the corresponding eigenvalue. Left: The dashed line depicts a linear regression of these experiments. Its slope should reflect (up to a sign) the average decay of the eigenvalues.  With a slope of 1.3, this regression line falls between the bounds (1.125 and 4) in Corollary \ref{cor:eigenvalues_decay}. 
    Right: Similar to Figure~\ref{fig:numeric}, we tested two frequency patterns. Convergence times for adjacent non-zero frequencies (red dots) are faster than their corresponding frequencies that are separated by one pixel (blue dots).}
    \label{fig:network}
\end{figure}

\begin{figure}[!]
    \centering
    \includegraphics[width=0.4\textwidth]{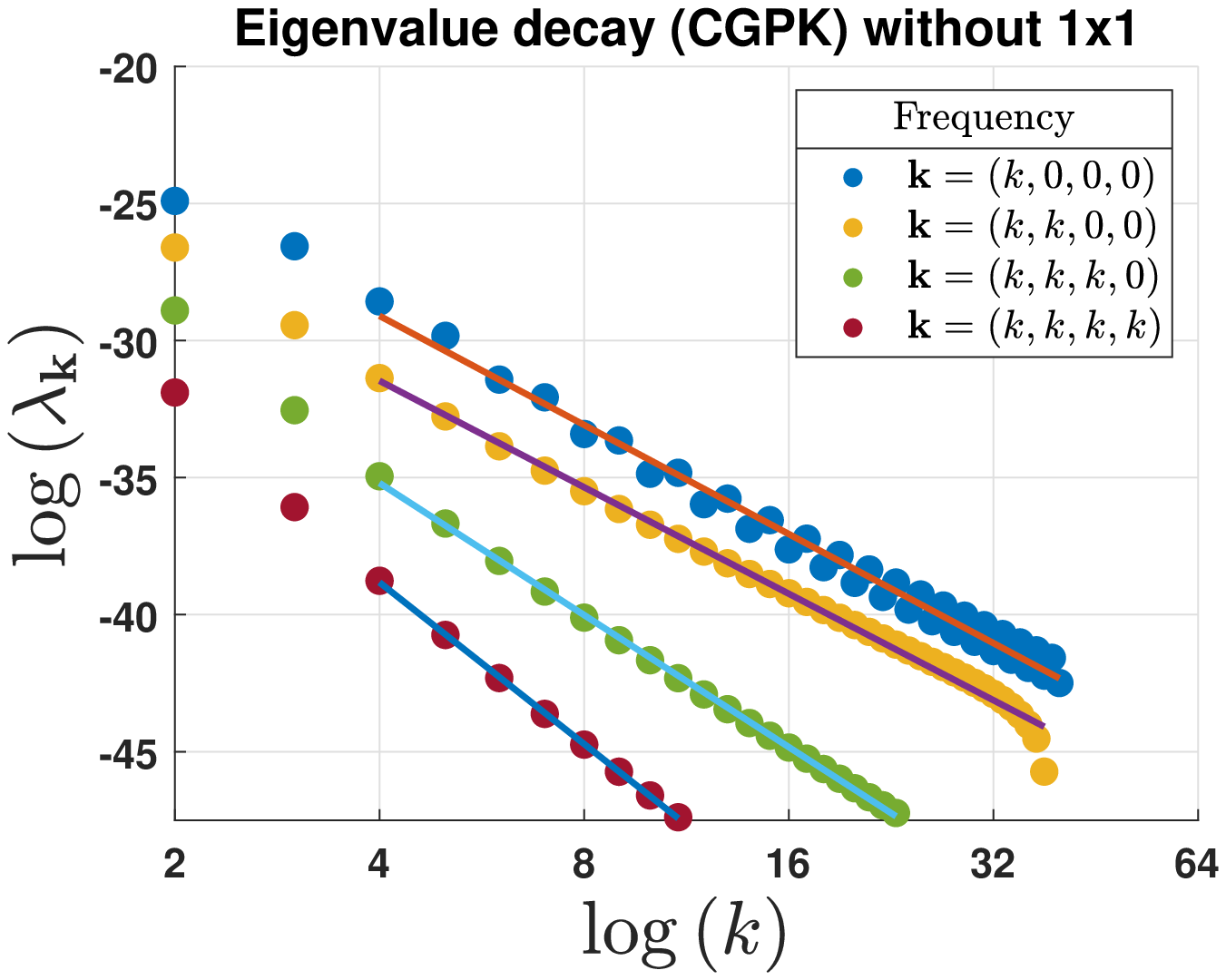}
    \includegraphics[width=0.4\textwidth]{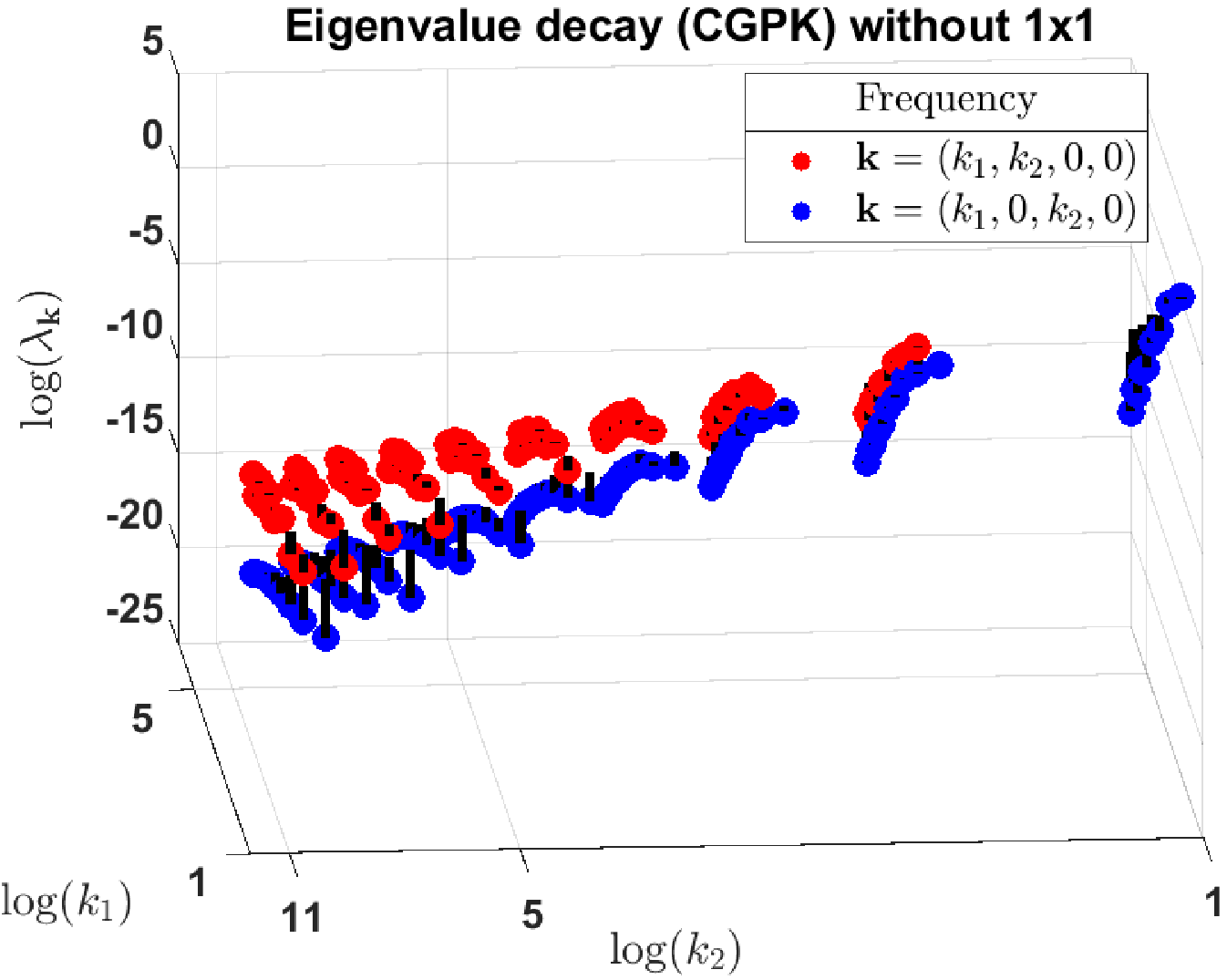}
    \caption{\small The eigenvalues of CGPK without the $1 \times 1$ convolution. Left: Eigenvalues for different frequency patterns that include either one non zero frequency (blue dots), two (red), three (orange) or four (maroon) identical frequencies. The slopes (respectively, $-7.3$, $-7.5$, $-9.2$ and $-11.7$) indicate the exponent for each pattern. ($d=4$, $\qbar=2$, $q=2$, $L=3$ ). Right: Eigenvalues for two frequency patterns that include exactly two non zero frequency, either next to each other (red dots) or separated by one zero frequency (blue). Black lines indicate differences from the values shown in Figure~\ref{fig:numeric}.}
    \label{fig:numeric_decay_no_1x1}
\end{figure}

To examine the relevance of this analysis to CNNs of moderate size we trained CNNs to regress products of trigonometric functions, which are the eigenfunctions of the CNTK trace kernel with $\qbar=2$. Figure~\ref{fig:network} shows for each experiment the number of GD iterations needed to achieve a prescribed error. Recall that, to the extent that GD for the CNN is similar to kernel GD of CNTK, the number of iterations needed to learn an eigenfunction should be inversely proportional to the corresponding eigenvalue. The figure shows that overall the runtime lies between our predicted bounds. It also shows that it is faster to learn eigenfunctions involving two non zero frequencies adjacent to each other than ones that involve non-zero frequencies separated by one pixel, as is predicted by our theory. 

Finally, our analysis uses network models which include a $1 \times 1$ convolution layer as the first layer. This simplifies our derivations and appears to have only limited effect on the results. In particular, it can be readily shown that for data distributed uniformly on the multisphere, the eigenfunctions of CGPK and CNTK without the $1 \times 1$ convolution are the SH-products (or their shift invariant sums in the case of GAP kernels). Figure~\ref{fig:numeric_decay_no_1x1} shows the eigenvalues of CGPK under the same conditions as in Figure~\ref{fig:numeric} with the $1 \times 1$ convolution removed. Overall, similar decay patterns are observed, except that in the case of high frequency in one pixel (denoted '$(k,0,0,0)$') the eigenvalues decay faster than with the $1 \times 1$ convolution. 

\section{Related Work}

A number of papers analyze the spectral properties and generalization bounds for convolutional kernels, including CNTK. \cite{bietti2019inductive} derived explicit feature maps for CNTK. \cite{misiakiewicz2021learning} developed spectral decomposition and generalization bounds for CNTK with one convolution layer and pooling with input in the nodes of the hyper-cube $\{-1,1\}^d$. \cite{favero2021locality} considered CNTK with one convolutional layer and non-overlapping patches. Such input is equivalent to just $1 \times 1$ convolution layer in our setup. They developed a spectral theory for this case and used it to derive new generalization bounds. They showed that the eigenvalues decay is dictated by the patch size (corresponding to the number of channels in our setup). Their limited setup yields Mercer's decomposition similar to that obtained with NTK for a two-layer fully connected network. 

\cite{mei2021learning} considered a network with one convolution layer with patches of full image size and inputs drawn from the uniform distribution on either the sphere or the hypercube. Their work focuses on the benefits of group invariance in reducing the sample size. \cite{bietti2021sample} considered general kernels that incorporate group invariance and derived a generalization bound based on counting the number of eigenfunctions of the kernel. 

\cite{mairal2014convolutional,mairal2016end} proposed a convolutional kernel network (CKN) that involves layers of patch extraction, convolution, and pooling. Pooling in this model is implemented by a convolution with a fixed filter such as a Gaussian, possibly followed by subsampling. We note that without subsampling, such pooling operation can effectively result in a fully connected layer. \cite{scetbon2020harmonic} proved that the set of functions produced by the CKN network that applies convolution with non-overlapping patches is contained in the RKHS of a kernel that includes just one convolution layer. Similar to our work, their work used inputs drawn from a multisphere, yielding eigenfunctions that are SH-products. They however bounded the eigenvalue decay for CKNs with polynomial activations at all except the first layer (with no convolutions in those layers). \cite{bietti2021approximation} extended this setup to enable convolution with overlapping patches and further derive generalization bounds. In contrast to these works, our work shows that the SH-products are the eigenfunctions of all multi-dot product kernels and further provides spectral analysis for CGPK and CNTK for multilayer CNNs.

\cite{heckel2019denoising} investigated the speed of convergence of gradient descent for equivariant networks with a fully connected layer followed by a fixed convolution layer in the context of image denoising. 
In a related work \cite{Tachella_2021_CVPR} used NTK for a two-layer network to relate denoising with CNNs to denoising with non-local filters.

Existing work also investigates CNNs from a non-kernel perspective, for example, by showing that multi-layer CNNs (and related networks) can efficiently learn \textit{compositional} functions \cite{poggio2017and,cohen2016expressive}.


\section{Conclusion}

Our paper has derived the eigenfunctions and corresponding eigenvalues of neural tangent kernels that describe overparameterized CNNs.  This provides us with a clear and specific understanding of these networks, just as much prior work has done for fully connected networks.  Of particular interest, our work provides a concrete understanding of the inductive bias produced by the hierarchical structure of deep CNNs.  We see that CNNs can efficiently learn higher frequency functions than FC networks when these functions are spatially localized.  We provide formulas that can show the trade-off points between higher frequency functions and spatial localization, and also show how architectural variations can affect these biases.  We feel that this is a significant step towards understanding which features CNNs will learn, and how this may depend on network architecture.

\subsection*{Acknowledgement}
This research is partially supported by the Israeli Council for Higher Education (CHE) via the Weizmann Data Science Research Center and by research grants from the Estate of Tully and Michele Plesser and the Anita James Rosen Foundation. David Jacobs is supported by the Guaranteeing AI Robustness Against Deception(GARD) program from DARPA and the National Science Foundation under grant no.~IIS-1910132.

\bibliographystyle{style.sty}
\bibliography{main.bib}

\newpage
\onecolumn
\appendix

\section*{Appendix}

\section{Multi-dot product kernels} \label{app:multidot}

In this section we prove results presented in Section~\ref{sec:multidot}.

\begin{lemma}\label{lemma:multi_dot_product}
CGPK-EqNet and CNTK-EqNet are multi-dot product kernels.
\end{lemma}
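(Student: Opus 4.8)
The plan is to show by induction on the layer index $l$ that every kernel quantity appearing in the recursion of Table~\ref{tab:cgpk_cntk}---namely $\Sigma^{(l)}_{i,j}(\x,\z)$, $\dot\Sigma^{(l)}_{i,j}(\x,\z)$ and $\Theta^{(l)}_{i,j}(\x,\z)$---is a function of the tuple of pixel inner products $\tbf=\left(\langle \x^{(1)},\z^{(1)}\rangle,\dots,\langle \x^{(d)},\z^{(d)}\rangle\right)$ alone, and moreover depends on $\tbf$ in a way that is compatible with the cyclic-shift structure, i.e.\ $\Sigma^{(l)}_{i,j}(\x,\z)=\Sigma^{(l)}_{i+1,j+1}(s_1\x,s_1\z)$ and similarly for $\dot\Sigma,\Theta$ (indices mod $d$). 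Since the CGPK-EqNet and CNTK-EqNet are, by the definitions in Table~\ref{tab:kernels}, exactly $\Sigma^{(L)}_{1,1}$ and $\Theta^{(L)}_{1,1}$, the claim $\kr(\x,\z)=\kr(\tbf)$ follows immediately from the $l=L$ case of the induction.

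First I would establish the base case. We have $\Sigma^{(0)}_{i,j}(\x,\z)=\Theta^{(0)}_{i,j}(\x,\z)=(X^TZ)_{i,j}=\langle \x^{(i)},\z^{(j)}\rangle$, so the \emph{diagonal} entries $\Sigma^{(0)}_{i,i}=\langle \x^{(i)},\z^{(i)}\rangle=t_i$ are pixel inner products, hence functions of $\tbf$. Then $\Sigma^{(1)}_{i,i}=\kappa_1(t_i)$ and $\dot\Sigma^{(1)}_{i,i}=\kappa_0(t_i)$, again functions of $\tbf$. The key structural observation, which I would state carefully, is that the EqNet head only ever reads the entry $(1,1)$ of $\Sigma^{(L)}$ (resp.\ $\Theta^{(L)}$), and by the shift relation noted after Table~\ref{tab:kernels} (together with circular padding) the recursion steps 3--5 express $\Sigma^{(l+1)}_{i,j}$, $\dot\Sigma^{(l+1)}_{i,j}$ and $\Theta^{(l)}_{i,j}$ purely in terms of \emph{diagonal-offset} quantities $\tilde\Sigma^{(l)}_{i+r,j+r}$, $\tilde{\dot\Sigma}^{(l)}_{i+r,j+r}$, $\tilde\Theta^{(l-1)}_{i+r,j+r}$ with the \emph{same} offset $i+r$ and $j+r$ shifted equally. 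Consequently, to compute $\Sigma^{(L)}_{1,1}$ we only ever need entries whose two indices are equal, i.e.\ the diagonals $\Sigma^{(l)}_{a,a}$, $\dot\Sigma^{(l)}_{a,a}$, $\Theta^{(l)}_{a,a}$ for $a\in[d]$ (with circular indexing). I would therefore run the induction on the diagonal entries only.

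The inductive step is then a direct unwinding: assume $\Sigma^{(l)}_{a,a}$, $\dot\Sigma^{(l)}_{a,a}$, $\Theta^{(l)}_{a,a}$ are functions of $\tbf$ for all $a$ (and respect cyclic shifts). The circular padding means $\tilde\Sigma^{(l)}_{a+r,a+r}=\Sigma^{(l)}_{((a+r-1)\bmod d)+1,\,((a+r-1)\bmod d)+1}$, still a diagonal entry, hence a function of $\tbf$ by hypothesis. Plugging into step 4, $\Sigma^{(l+1)}_{a,a}=\kappa_1\!\left(\tfrac1q\sum_{r=0}^{q-1}\tilde\Sigma^{(l)}_{a+r,a+r}\right)$ is a function of $\tbf$; step 5 gives $\dot\Sigma^{(l+1)}_{a,a}$ likewise; and step 3 gives $\Theta^{(l)}_{a,a}=\tfrac{1}{2q}\sum_{r=0}^{q-1}\bigl[\tilde{\dot\Sigma}^{(l)}_{a+r,a+r}\tilde\Theta^{(l-1)}_{a+r,a+r}+\tilde\Sigma^{(l)}_{a+r,a+r}\bigr]$, a function of $\tbf$ since all factors are. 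The cyclic-shift compatibility is preserved because the update at pixel $a$ only aggregates pixels $a,a+1,\dots,a+q-1$ with weights independent of $a$. This closes the induction, and taking $a=1$, $l=L$ shows CGPK-EqNet $=\Sigma^{(L)}_{1,1}$ and CNTK-EqNet $=\Theta^{(L)}_{1,1}$ are multi-dot product kernels.

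The main obstacle---really the only subtlety---is bookkeeping the circular padding and the off-diagonal entries correctly: one must verify that the EqNet recursion genuinely never needs a truly off-diagonal entry $\Sigma^{(l)}_{i,j}$ with $i\neq j\bmod d$, so that the restricted induction on diagonals is self-contained. This is exactly where the stride-$1$, equal-offset form of steps 3--5 (as opposed to a general quadratic-form head) is used, and I would make this explicit rather than gloss over it. Everything else is a routine composition-of-functions argument.
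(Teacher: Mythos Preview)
Your proposal is correct and follows essentially the same approach as the paper: observe that the recursion for $\Sigma^{(L)}_{1,1}$ and $\Theta^{(L)}_{1,1}$ only ever touches diagonal entries $\Sigma^{(l)}_{a,a},\dot\Sigma^{(l)}_{a,a},\Theta^{(l)}_{a,a}$, and these are functions of $\tbf$ since the base case $\Sigma^{(0)}_{i,i}=\langle\x^{(i)},\z^{(i)}\rangle=t_i$ already is. The paper compresses this into two sentences, whereas you spell out the induction and the circular-padding bookkeeping explicitly, but the argument is the same.
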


\begin{proof}
The proof follows directly from the derivation of the kernels in Section~\ref{sec:CNTK_formula}. Note that CGPK-EqNet is given by $\tilde\Sigma_{11}^{(L)}$ and CNTK-EqNet by $\tilde\Theta_{11}^{(L)}$, and their recursive definition only involve elements of $\tilde\Sigma^{(l)}$ and $\tilde\Theta^{(l)}$ in which $i=j$. Moreover, by definition the diagonal elements of $\Sigma^{(0)}$ and $\Theta^{(0)}$ are $\langle\x^{(i)},\z^{(i)}\rangle$ for $i \in [d]$, implying the lemma.
\end{proof}

\subsection{Multivariate Gegenbauer Polynomials}

We next extend basic results derived for functions on the sphere to the multisphere $\ms$. These results will assist us later to prove Mercer's decomposition for multi-dot product kernels in the subsequent section.

We consider the set of  Gegenbauer polynomials $\{Q_k^{(\qbar)}(t)\}_{k \ge 0}$ that are orthogonal in $L_2[-1,1]$ w.r.t.\ the weight function $(1-t^2)^{(\qbar -3)/2}$ and omit the superscript. 
Inspired by \cite{dhahri2014multi}, 
we define  multivariate Gegenbauer polynomials, using facts from harmonic analysis on the sphere. (See references \cite{groemer1996geometric,muller2012analysis} for background on spherical harmonics and Gegenbauer polynomials). We denote by $|\Sphere^{\qbar-1}|$ the area of the sphere $\Sphere^{\qbar-1}$.
\begin{definition}
For $k\geq 0$, let $Q_k(t):[-1,1]\rightarrow \Real $ be the (univariate) Gegenbauer polynomial of degree $k$. Then, the multivariate Gegenbauer polynomial of order $\kk$ is $Q_{\kk}(\tbf):[-1,1]^\dbar\rightarrow \Real$, defined by
\begin{align*}
   Q_{\kk}(\tbf)=Q_{k_1}(t_1)\cdot Q_{k_2}(t_2)\cdot ...\cdot Q_{k_{\dbar}}(t_{\dbar}).
\end{align*}
\end{definition}
These multivariate Gegenbauer polynomials enjoy several properties that they inherit from their univariate counterpart.

\begin{lemma}\label{lemma:orth_1d}
Let $P_k(\tbf)$ denote the space of polynomials of degree $\leq k$ with variables $\tbf \in [-1,1]^\dbar$. Then, the set $\{Q_{\ii}(\tbf) \}_{\ii=0}^{|\ii|=k}$ is an orthogonal basis of $P_k(\tbf)$ w.r.t.\ the weight function $(1-t^2)^{(\qbar -3)/2}$ (with $\ii=(i_1,\ldots,i_{\dbar})$ and $|\ii| = i_1+\ldots+i_{\dbar}$).
\end{lemma}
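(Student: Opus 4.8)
The plan is to reduce the multivariate statement to the well-known univariate facts about Gegenbauer polynomials via a tensor-product argument, and then verify the two assertions — spanning and orthogonality — separately.

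\textbf{Step 1: Dimension count and spanning.} First I would recall that the univariate Gegenbauer polynomials $\{Q_k(t)\}_{k \ge 0}$ form a basis for $\Real[t]$ with $\deg Q_k = k$, so $\{Q_0,\ldots,Q_k\}$ is a basis of the space $P_k(t)$ of univariate polynomials of degree $\le k$. It follows that any monomial $\tbf^{\ii} = t_1^{i_1}\cdots t_d^{i_d}$ with $|\ii| \le k$ can be written as a linear combination of products $Q_{j_1}(t_1)\cdots Q_{j_d}(t_d) = Q_{\jj}(\tbf)$ with $j_r \le i_r$, hence $|\jj| \le |\ii| \le k$. Since $P_k(\tbf)$ is spanned by such monomials, the set $\{Q_{\ii}(\tbf) : |\ii| \le k\}$ spans $P_k(\tbf)$. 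To conclude it is a \emph{basis} it suffices to check linear independence, which follows either from the orthogonality established in Step 2 (orthogonal nonzero vectors are independent) or from the fact that $Q_{\ii}$ has a nonzero ``leading'' component in the graded monomial basis; I would phrase it cleanly via orthogonality once Step 2 is in place.

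\textbf{Step 2: Orthogonality.} Here I would compute, for $\ii \ne \jj$,
\begin{align*}
\int_{[-1,1]^d} Q_{\ii}(\tbf) Q_{\jj}(\tbf) \prod_{r=1}^d (1-t_r^2)^{\frac{\qbar-3}{2}}\, d\tbf
= \prod_{r=1}^d \int_{-1}^1 Q_{i_r}(t_r) Q_{j_r}(t_r) (1-t_r^2)^{\frac{\qbar-3}{2}}\, dt_r,
\end{align*}
where the factorization is just Fubini applied to the product weight $\prod_r (1-t_r^2)^{(\qbar-3)/2}$. Since $\ii \ne \jj$, there is an index $r$ with $i_r \ne j_r$, and for that factor the univariate orthogonality of $\{Q_k\}$ gives $\int_{-1}^1 Q_{i_r} Q_{j_r} (1-t_r^2)^{(\qbar-3)/2} dt_r = 0$, killing the whole product. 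Thus the $Q_{\ii}$ are pairwise orthogonal in $L_2$ with respect to the product weight, and being nonzero they are linearly independent, completing Step 1.

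\textbf{Main obstacle.} This lemma is genuinely routine once one recognizes it as a tensor/Fubini argument, so there is no deep obstacle; the only points requiring care are (i) making sure the statement about ``the weight function $(1-t^2)^{(\qbar-3)/2}$'' is correctly interpreted as the product weight $\prod_{r=1}^d (1-t_r^2)^{(\qbar-3)/2}$ on $[-1,1]^d$, since the lemma is stated with the univariate expression as shorthand, and (ii) spelling out that the span of $\{\tbf^{\ii} : |\ii| \le k\}$ equals $P_k(\tbf)$ and that the change of basis to $\{Q_{\jj}\}$ stays within degree $\le k$ — i.e., that the Gegenbauer expansion of $t_r^{i_r}$ involves only $Q_{j_r}$ with $j_r \le i_r$ (indeed only $j_r \equiv i_r \pmod 2$, $j_r \le i_r$), which is immediate from $\deg Q_{j_r} = j_r$. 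I would present these two verifications concisely and appeal to standard references (e.g.\ \cite{muller2012analysis}) for the univariate Gegenbauer orthogonality and degree facts.
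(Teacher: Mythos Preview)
Your proposal is correct and follows essentially the same approach as the paper: both arguments expand each monomial $t_r^{i_r}$ in the univariate Gegenbauer basis to obtain spanning, and both establish orthogonality by factoring the product-weighted integral via Fubini into univariate Gegenbauer inner products. Your presentation is slightly tidier in that you explicitly invoke orthogonality to deduce linear independence (the paper leaves this implicit), but the underlying ideas coincide.
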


\begin{proof}
Let $p(\tbf)=\sum_{\ii=0}^{|\ii|=k}a_{\ii}\tbf^\ii\in P_k(\tbf)$.
Since the univariate Gegenbauer polynomials form an orthogonal basis, for every $0\leq n_i\leq k$ and $i\in [\dbar]$ we can write
$t_i^{n_i}=\sum_{j=0}^{n_i} a^{(n_i)}_jQ_j(t_i)$,
where $a_j^{(n_i)}\in \Real$, and the superscript is used to emphasize that the expansion depends on $n_i$. Therefore, $p(\tbf)$ can be written as
\begin{align*}
    p(\tbf)&=\sum_{\n=0}^{|\n|=k}a_{\n}\tbf^\n=\sum_{\n=0}^{|\n|=k}a_{\n}\left(\sum_{j=0}^{n_1} a^{(n_1)}_jQ_j(t_1)\right)\cdot ..\cdot \left(\sum_{j=0}^{n_d} a^{(n_d)}_jQ_j(t_{\dbar})\right)\\
    &=^{(1)}\sum_{\n=0}^{|\n|=k}\tilde{a}_{\n}Q_{n_1}(t_1)Q_{i_2}(t_2)\cdot \ldots \cdot Q_{n_d}(t_d)=\sum_{\n=0}^{|\n|=k}\tilde{a}_{\n}Q_{\n}(\tbf),
\end{align*}
where $^{(1)}$ is obtained by applying the distributive law with the fact that $n_1,..,n_{\dbar}\leq k$. finally,  $\tilde{a}_{\ii}$ can be computed explicitly from $a_{\ii}$ and $\{a_j^{(n_i)}\}$.

We have shown that $P_k(\tbf)$ is spanned by the set $\{Q_{\ii}(\tbf) \}_{\ii=0}^{|\ii|=k}$. Next, we show that this set is orthogonal with respect to the measure $\prod_{r=1}^d \left((1-t_r^2)^{\frac{\qbar-3}{2}}\right)$. Let $\ii$ and $\jj$ be two vectors of indices. Then, we have that
\begin{align*}
    \int_{[-1,1]^d}&Q_{\ii}(\tbf)Q_{\jj}(\tbf)\prod_{r=1}^d (1-t_r^2)^{\frac{\qbar-3}{2}}dt_1 \cdot \ldots \cdot dt_{\dbar}
    = \prod_{r=1}^d \left(\int_{[-1,1]} Q_{i_r}(t_r)Q_{j_r}(t_r) (1-t_r^2)^{\frac{\qbar-3}{2}}dt_r\right)\\
    =&\left(\frac{|\Sphere^{\qbar-1}|}{|\Sphere^{\qbar-2}|}\right)^d \left(\prod _{r=1}^{d}N(\qbar,i_r)\right)^{-1}\delta_{i_1,j_1}\cdot \delta_{i_2,j_2}\cdot \ldots \cdot \delta_{i_{\dbar},j_{\dbar}},
\end{align*}
where the last equality is due to the orthogonality property of the univariate Gegenbauer polynomials. 
This concludes the proof.
\end{proof}

The relation of the multivariate Gegenbauer polynomials to the SH-products is formulated in the following lemma.
\begin{lemma}\label{lemma:addition_thm_multi}
Let  $\x,\z\in \ms$. It holds that
    \begin{align*}
    Q_{\kk}(\langle \x^{(1)} ,\z^{(1)} \rangle ,..,\langle \x^{(i)} ,\z^{(j)} \rangle,..,\langle \x^{({\dbar})} ,\z^{({\dbar})} \rangle) = |\Sphere^{\qbar-1}|^d \left( \prod _{r=1}^{d} N(q,k_r)\right)^{-1}\sum_{\jj:j_r\in [N(\qbar,k_r)]}Y_{\kk,\jj}(\x)Y_{\kk,\jj}(\z),
    \end{align*}
    where $Y_{\kk,\jj}(\x)$ is homogeneous polynomial of degree $k_1+..+k_{\dbar}$. $Y_{\kk,\jj}(\x)$ is further given by SH-products, i.e., $Y_{\kk,\jj}(\x)=   \prod_{i=1}^{{\dbar}} Y_{k_i,j_i}(\x^{(i)})$,   
    where $Y_{k_ij_i}$ are spherical harmonics in $\Sphere^{\qbar-1}$, and $N(\qbar,k_i)$ are the number of harmonics of frequency $k_i$ in $\Sphere^{\qbar-1}$.
\end{lemma}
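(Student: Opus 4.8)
The plan is to prove the multivariate addition theorem (Lemma~\ref{lemma:addition_thm_multi}) by reduction to the classical univariate addition theorem for Gegenbauer polynomials, exploiting the product structure of both sides. Recall that for a single sphere $\Sphere^{\qbar-1}$, the classical addition theorem states that for $\x^{(i)},\z^{(i)} \in \Sphere^{\qbar-1}$,
\begin{align*}
  Q_{k_i}\left(\langle \x^{(i)},\z^{(i)}\rangle\right) = \frac{|\Sphere^{\qbar-1}|}{N(\qbar,k_i)} \sum_{j_i=1}^{N(\qbar,k_i)} Y_{k_i,j_i}(\x^{(i)})\, Y_{k_i,j_i}(\z^{(i)}),
\end{align*}
where $\{Y_{k_i,j_i}\}_{j_i=1}^{N(\qbar,k_i)}$ is an orthonormal basis of the degree-$k_i$ spherical harmonics on $\Sphere^{\qbar-1}$ (with the normalization convention matching that used for the univariate Gegenbauer polynomials elsewhere in the paper). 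First I would state this univariate fact precisely, citing \cite{groemer1996geometric,muller2012analysis}, fixing the normalization so that it is consistent with the definition of $Q_k$ used in the excerpt (orthogonal w.r.t.\ $(1-t^2)^{(\qbar-3)/2}$).

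Next I would take the product over the $d$ pixels. By the definition of the multivariate Gegenbauer polynomial, $Q_{\kk}(\tbf) = \prod_{i=1}^{d} Q_{k_i}(t_i)$, so substituting $t_i = \langle \x^{(i)},\z^{(i)}\rangle$ and applying the univariate addition theorem in each factor gives
\begin{align*}
  Q_{\kk}\left(\langle\x^{(1)},\z^{(1)}\rangle,\dots,\langle\x^{(d)},\z^{(d)}\rangle\right) = \prod_{i=1}^{d} \frac{|\Sphere^{\qbar-1}|}{N(\qbar,k_i)} \sum_{j_i=1}^{N(\qbar,k_i)} Y_{k_i,j_i}(\x^{(i)})\, Y_{k_i,j_i}(\z^{(i)}).
\end{align*}
Then I would pull the scalar prefactor $\prod_i |\Sphere^{\qbar-1}|/N(\qbar,k_i) = |\Sphere^{\qbar-1}|^d \left(\prod_i N(\qbar,k_i)\right)^{-1}$ out front, and expand the product of sums into a single sum over the multi-index $\jj = (j_1,\dots,j_d)$ with $j_i \in [N(\qbar,k_i)]$ using the distributive law (the same manipulation used in the proof of Lemma~\ref{lemma:orth_1d}). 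Grouping the factors by the two arguments $\x$ and $\z$ yields $\sum_{\jj} \left(\prod_i Y_{k_i,j_i}(\x^{(i)})\right)\left(\prod_i Y_{k_i,j_i}(\z^{(i)})\right) = \sum_{\jj} Y_{\kk,\jj}(\x) Y_{\kk,\jj}(\z)$, which is exactly the claimed identity once we \emph{define} $Y_{\kk,\jj}(\x) = \prod_{i=1}^d Y_{k_i,j_i}(\x^{(i)})$.

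Finally I would verify the two remaining assertions: that $Y_{\kk,\jj}(\x)$ is a homogeneous polynomial of degree $k_1+\dots+k_d$, and that each $Y_{k_i,j_i}$ is genuinely a spherical harmonic on $\Sphere^{\qbar-1}$. The first follows because each $Y_{k_i,j_i}(\x^{(i)})$ is (the restriction to the sphere of) a homogeneous harmonic polynomial of degree $k_i$ in the $\qbar$ coordinates of $\x^{(i)}$, and a product of homogeneous polynomials is homogeneous with degree equal to the sum; the second is just the standard fact that $Q_{k_i}$ is (up to normalization) the reproducing/zonal kernel for degree-$k_i$ harmonics, so the expansion produced by the univariate addition theorem is in terms of degree-$k_i$ spherical harmonics. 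I do not anticipate a serious obstacle here — the proof is essentially bookkeeping — the one point requiring care is matching normalization conventions between the univariate Gegenbauer polynomials $Q_k$ (as normalized in the excerpt) and the orthonormal spherical harmonic basis, so that the constant $|\Sphere^{\qbar-1}|^d \left(\prod_r N(\qbar,k_r)\right)^{-1}$ comes out exactly as stated; I would double-check this by testing the $\x = \z$ case, where both sides must reduce to $Q_{\kk}(\one) = \prod_i Q_{k_i}(1)$ and $\sum_{j_i} Y_{k_i,j_i}(\x^{(i)})^2 = N(\qbar,k_i)/|\Sphere^{\qbar-1}|$ after integrating, fixing any stray constant.
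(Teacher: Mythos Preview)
Your proposal is correct and follows essentially the same route as the paper's proof: apply the univariate addition theorem to each factor $Q_{k_i}(\langle \x^{(i)},\z^{(i)}\rangle)$, pull out the constants, and expand the resulting product of sums via the distributive law to obtain the SH-products $Y_{\kk,\jj}$. The paper's argument is slightly terser (it does not include your normalization sanity check at $\x=\z$), but the logical content is identical.
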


\begin{proof}
By the definition of the multivariate Gegenbauer polynomials and the univariate addition theorem \cite{smola2001regularization} we get
\begin{align*}
    Q_{\kk}&(\langle \x^{(1)} ,\z^{(1)} \rangle ,...,\langle \x^{(i)} ,\z^{(j)} \rangle,...,\langle \x^{({\dbar})} ,\z^{({\dbar})} \rangle)=Q_{k_1}(\langle \x^{(1)} ,\z^{(1)} \rangle )\cdot ...\cdot Q_{k_{\dbar}}(\langle \x^{({\dbar})} ,\z^{({\dbar})} \rangle)\\
    =&\left(\frac{|\Sphere^{\qbar-1}|}{N(\qbar,k_1)}\sum_{j_1=1}^{N(\qbar,k_1)}Y_{k_1,j_1}(\x^{(1)})Y_{k_1,j_1}(\z^{(1)})\right)\cdot \ldots \cdot \left(\frac{|\Sphere^{\qbar-1}|}{N(\qbar,k_{\dbar})}\sum_{j_{\dbar}=1}^{N(\qbar,k_{\dbar})}Y_{k_{\dbar},j_{\dbar}}(\x^{({\dbar})})Y_{k_{\dbar},j_{\dbar}}(\z^{({\dbar})})\right)\\
    =&\left(\prod_{i=1}^{{\dbar}}\frac{|\Sphere^{\qbar-1}|}{N(\qbar,k_i)}\right)\sum_{\jj=(1,...,1)}^{\jj=(N(\qbar,k_1),...,N(\qbar,k_{\dbar}))} \prod_{i=1}^{{\dbar}}Y_{k_i,j_i}(\x^{(i)})Y_{k_i,j_i}(\z^{(i)})\\
    :=&\left(\prod_{i=1}^{{\dbar}}\frac{|\Sphere^{\qbar-1}|}{N(\qbar,k_i)}\right)\sum_{\jj:j_i\in [N(\qbar,k_i)]} Y_{\kk,\jj}(\x)Y_{\kk,\jj}(\z).
\end{align*}
Note that the homogeneity of the SH-products $Y_{\kk,\jj}(\x)$  is a direct result of the homogeneity of the spherical harmonics  $Y_{k_i,j_i}$. 
\end{proof}

\begin{lemma}\label{lemma:orth_multi}
The set $\{Y_{\kk,\jj}\}$ are orthonormal w.r.t uniform measure in $\ms$.
\end{lemma}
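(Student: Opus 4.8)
The plan is to reduce orthonormality on the multisphere $\ms$ to the classical orthonormality of spherical harmonics on a single sphere $\Sphere^{\qbar-1}$, using the product structure of both the eigenfunctions $Y_{\kk,\jj}(\x)=\prod_{i=1}^{\dbar} Y_{k_i,j_i}(\x^{(i)})$ and of the uniform measure on $\ms=\Sphere^{\qbar-1}\times\cdots\times\Sphere^{\qbar-1}$. First I would record that the uniform distribution on $\ms$ is exactly the product measure $\mu=\mu_0^{\otimes d}$, where $\mu_0=\Unif(\Sphere^{\qbar-1})$; this is immediate since $\ms$ is a Cartesian product and the uniform measure on a product is the product of uniform measures. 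Consequently, for index pairs $(\kk,\jj)$ and $(\kk',\jj')$,
\begin{align*}
\int_{\ms} Y_{\kk,\jj}(\x)\,Y_{\kk',\jj'}(\x)\,d\mu(\x)
= \prod_{i=1}^{\dbar} \int_{\Sphere^{\qbar-1}} Y_{k_i,j_i}(\x^{(i)})\,Y_{k'_i,j'_i}(\x^{(i)})\,d\mu_0(\x^{(i)}),
\end{align*}
by Fubini's theorem, which applies because each $Y_{k_i,j_i}$ is a polynomial, hence bounded and measurable on the compact sphere.

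Next I would invoke the standard orthonormality of real spherical harmonics on $\Sphere^{\qbar-1}$ with respect to the normalized uniform measure $\mu_0$: with the normalization convention used in the excerpt (the same one underlying the univariate addition theorem cited in Lemma~\ref{lemma:addition_thm_multi}), one has $\int_{\Sphere^{\qbar-1}} Y_{k,j}\,Y_{k',j'}\,d\mu_0 = \delta_{k,k'}\delta_{j,j'}$. Substituting this into the product above gives
\begin{align*}
\int_{\ms} Y_{\kk,\jj}\,Y_{\kk',\jj'}\,d\mu
= \prod_{i=1}^{\dbar} \delta_{k_i,k'_i}\,\delta_{j_i,j'_i}
= \delta_{\kk,\kk'}\,\delta_{\jj,\jj'},
\end{align*}
which is precisely the claim that $\{Y_{\kk,\jj}\}$ is orthonormal in $L_2(\ms,\mu)$.

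The only genuine point to be careful about — and what I expect to be the main (minor) obstacle — is bookkeeping of normalization constants: the excerpt writes its addition theorem with a factor $|\Sphere^{\qbar-1}|/N(\qbar,k)$, so one must confirm that the $Y_{k,j}$ appearing in $Y_{\kk,\jj}$ are the $L_2$-normalized harmonics (i.e.\ with respect to $\mu_0$, not with respect to the unnormalized surface measure), so that no stray factor of $|\Sphere^{\qbar-1}|$ survives. This is really a matter of fixing the convention consistently with the earlier lemmas; once that is pinned down, the proof is the two displays above. If one instead prefers to avoid committing to a harmonic-normalization convention, an alternative route is to start from Lemma~\ref{lemma:addition_thm_multi}: integrate the identity there against $Y_{\kk',\jj'}(\z)$ over $\z\in\ms$ and use that $\int_{\ms} Q_{\kk}(\langle\x^{(1)},\z^{(1)}\rangle,\dots)\,Y_{\kk',\jj'}(\z)\,d\mu(\z)$ factorizes into univariate Funk--Hecke integrals, each of which projects onto the degree-$k_i$ harmonic subspace; matching coefficients then yields orthonormality after the same constant-chasing. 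I would present the direct product-measure argument as the main proof and mention the Funk--Hecke route only if the normalization needs reconciling.
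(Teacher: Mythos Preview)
Your proposal is correct and follows essentially the same approach as the paper: factor the integral over $\ms$ into a product of integrals over the individual spheres $\Sphere^{\qbar-1}$ via the product structure of both $Y_{\kk,\jj}$ and the uniform measure, then invoke the classical orthonormality of spherical harmonics on each factor to obtain $\prod_{i=1}^d \delta_{k_i,k'_i}\delta_{j_i,j'_i}$. The paper's proof is exactly these two displays; your additional remarks about normalization bookkeeping and the alternative Funk--Hecke route are not needed there.
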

\begin{proof}
We have that 
\begin{align*}
    \int_{\ms} Y_{\kk,\jj}(\x) Y_{\kk',\jj'}(\x)d\x   =& \int_{\ms}\left(\prod_{i=1}^{{\dbar}}Y_{k_i,j_i}(\x^{(i)})\right)\left(\prod_{i=1}^{{\dbar}}Y_{k'_i,j'_i}(\x^{(i)})\right)d\x \\
    =&\prod_{i=1}^{{\dbar}} \left(\int_{\Sphere^{\qbar-1}}Y_{k_i,j_i}(\x^{(i)})Y_{k'_i,j'_i}(\x^{(i)})d\x^{(i)}\right)=\prod_{i=1}^{d} \delta_{k_i,k'_i}\cdot \delta_{j_i,j'_i}.
\end{align*}
\end{proof}

\subsection{Mercer's decomposition}

In this section we prove that the eigenfunctions of multi dot-product kernels consist of products of spherical harmonics. We further provide a way to calculate the eigenvalues using products of Gegenbauer polynomials.

\begin{lemma} \label{applemma:mercer}
Let $\kr$ be a multi-dot product kernel. Then, the eigenfunctions of $\kr(\x,\cdot)$ w.r.t uniform measure on $\ms$ are the SH-products. Namely, the eigenfunctions are  
\begin{align*}
    \left\{ Y_{\kk,\jj}(\x) = \prod_{i=1}^{{\dbar}} Y_{k_ij_i}\left(\x^{(i)}\right) \right\}_{\kk \geq 0,~ j_i\in [N(q,k_i)]},
\end{align*} 
where $Y_{k_ij_i}$ are the Spherical Harmonics in $\Sphere^{\qbar-1}$, and $N(\qbar,k_i)$ are the number of harmonics of frequency $k_i$ in $\Sphere^{\qbar-1}$.  The eigenvalues, $\lambda_{\kk}$, can be calculated using products of (univariate) Gegenbauer polynomials as follows,
    \begin{align*}
        \lambda_{\kk}= C(\qbar,d)\int_{[-1,1]^d} \kr(\tbf) \prod_{i=1}^d Q_{k_i}(t_i)(1-t_i^2)^{\frac{\qbar-3}{2}}d\tbf
    \end{align*}
    where $\{Q_k(t)\}$ is the set of orthogonal Gegenbauer polynomials w.r.t the weights $(1-t_i^2)^{\frac{\qbar-3}{2}}$,  and $C(\qbar,d)$ is a constant that depends on both $\qbar$ and $d$.
\end{lemma}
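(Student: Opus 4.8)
The plan is to prove Mercer's decomposition for a multi-dot product kernel $\kr(\x,\z)=\kr(\tbf)$ by reducing it to the multivariate Gegenbauer machinery developed in the preceding lemmas. Since $\kr$ is assumed PSD, continuous, and to have a power series converging on $[-1,1]^d$, the function $\kr(\tbf)$ on $[-1,1]^d$ lies in $L_2$ with respect to the product weight $w(\tbf)=\prod_{i=1}^d (1-t_i^2)^{(\qbar-3)/2}$. First I would expand $\kr(\tbf)$ in the orthogonal basis of multivariate Gegenbauer polynomials $\{Q_\kk(\tbf)\}$ guaranteed by Lemma~\ref{lemma:orth_1d}: writing $\kr(\tbf)=\sum_{\kk\ge 0} \hat\lambda_\kk Q_\kk(\tbf)$ where $\hat\lambda_\kk = \frac{\langle \kr, Q_\kk\rangle_w}{\langle Q_\kk,Q_\kk\rangle_w}$, which is exactly the stated integral formula up to the normalizing constant $C(\qbar,d)$ coming from $\langle Q_\kk,Q_\kk\rangle_w$ (computed in the proof of Lemma~\ref{lemma:orth_1d} as a product of $|\Sphere^{\qbar-1}|/|\Sphere^{\qbar-2}|$ and $N(\qbar,k_i)^{-1}$ factors). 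Because the kernel's power series converges on $[-1,1]^d$, this Gegenbauer expansion converges (at least in $L_2(w)$, and in fact absolutely/uniformly given the power-series hypothesis), so term-by-term manipulations are justified.

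Next I would substitute $t_i \mapsto \langle \x^{(i)},\z^{(i)}\rangle$ and invoke the multivariate addition theorem, Lemma~\ref{lemma:addition_thm_multi}, which rewrites each $Q_\kk(\langle\x^{(1)},\z^{(1)}\rangle,\dots,\langle\x^{(d)},\z^{(d)}\rangle)$ as $|\Sphere^{\qbar-1}|^d\big(\prod_r N(\qbar,k_r)\big)^{-1}\sum_{\jj} Y_{\kk,\jj}(\x)Y_{\kk,\jj}(\z)$. Collecting the constants, this gives
\begin{align*}
\kr(\x,\z)=\sum_{\kk,\jj}\lambda_\kk\, Y_{\kk,\jj}(\x)Y_{\kk,\jj}(\z),
\end{align*}
with $\lambda_\kk = \hat\lambda_\kk \cdot |\Sphere^{\qbar-1}|^d \big(\prod_r N(\qbar,k_r)\big)^{-1}$, matching the claimed formula once $C(\qbar,d)$ absorbs all dimension-dependent constants. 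Then, using Lemma~\ref{lemma:orth_multi}, the family $\{Y_{\kk,\jj}\}$ is orthonormal in $L_2(\ms)$ under the uniform measure, and standard completeness of spherical harmonics on each $\Sphere^{\qbar-1}$ (tensorized over the $d$ pixels) shows it is a complete orthonormal system on $\ms$. Therefore the displayed expansion is precisely the Mercer/eigen-decomposition of the integral operator $T_\kr f(\x)=\int_\ms \kr(\x,\z)f(\z)\,d\z$: applying $T_\kr$ to $Y_{\kk,\jj}$ and using orthonormality shows $T_\kr Y_{\kk,\jj}=\lambda_\kk Y_{\kk,\jj}$, so the $Y_{\kk,\jj}$ are eigenfunctions with eigenvalue $\lambda_\kk$ (independent of $\jj$), which also yields the Funk--Hecke-type statement.

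I would also note the degenerate direction: if some $\qbar=2$ the weight exponent $(\qbar-3)/2 = -1/2$ is still integrable, and if $\qbar=1$ the "sphere" $\Sphere^0$ is two points and the argument degenerates to a discrete/Fourier statement — but the paper's setting has $\qbar\ge 2$, so the Gegenbauer framework applies directly; I would just remark that the formulas remain valid by the usual limiting/analytic-continuation conventions for Gegenbauer polynomials. The main obstacle I anticipate is the analytic bookkeeping needed to justify interchanging the Gegenbauer sum with the integral operator $T_\kr$ and to pin down the exact constant $C(\qbar,d)$: one must check that the Gegenbauer expansion of $\kr$ converges in a strong enough sense (not merely $L_2$) to legitimize applying $T_\kr$ term by term, which is where the hypothesis that $\kr$ has a power series converging on $[-1,1]^d$ does the real work (it gives absolute convergence of the monomial expansion, hence of the Gegenbauer expansion via Lemma~\ref{lemma:taylor_to_eigs}-type bounds, and forces the eigenvalues $\lambda_\kk$ to be summable). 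Everything else is routine tensorization of one-dimensional facts about Gegenbauer polynomials and spherical harmonics that have already been set up in Lemmas~\ref{lemma:orth_1d}--\ref{lemma:orth_multi}.
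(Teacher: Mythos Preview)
Your proposal is correct and follows essentially the same approach as the paper: expand $\kr(\tbf)$ in the multivariate Gegenbauer basis via Lemma~\ref{lemma:orth_1d}, apply the addition theorem Lemma~\ref{lemma:addition_thm_multi} to rewrite each $Q_\kk$ as a sum of SH-products, and then invoke orthonormality from Lemma~\ref{lemma:orth_multi} to identify the Mercer decomposition. Your write-up is in fact more careful than the paper's about convergence and the bookkeeping of constants, but the underlying argument is the same.
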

\begin{proof}
Let $\kr$ be a multi-dot product kernel. By definition for such kernel, there exists a multivariate analytic  function $\kappa$ such that $\kr^{(L)}(\x,\z) = \kappa(\langle \x^{(1)},\z^{(1)}\rangle, ..., \langle \x^{(d)},\z^{(d)}\rangle)$. Using lemma \ref{lemma:orth_1d}, $\{Q_{\kk}\}$ form an orthogonal basis in $[-1,1]^d$. Therefore, it can be readily shown (similar to \cite{smola2001regularization}) that,  $\kappa$ can be written as 
\begin{align*}
    \kappa(t_1,..,t_{\dbar}):=\kappa(\tbf) = \sum_{\kk \geq 0} \left(\prod_{i=1}^{d}N(\qbar,k_i) \frac{|\Sphere^{\qbar-2}|}{|\Sphere^{\qbar-1}|} \right) Q_{\kk}(\tbf)\int_{[-1,1]^\dbar}\kappa(\tilde{\tbf})Q_{\kk}(\tilde{\tbf})\prod_{i=1}^d (1-\tilde {t}_i^2)^{\frac{\qbar-3}{2}}d\tilde{\tbf} :=\sum_{\kk \geq 0}\lambda_{\kk}Q_{\kk}(\tbf). 
\end{align*}
Lemma \ref{lemma:addition_thm_multi} implies
\begin{align*}
    Q_{\kk}(\langle \x^{(1)} ,\z^{(1)} \rangle ,..,\langle \x^{(i)} ,\z^{(j)} \rangle,..,\langle \x^{({\dbar})} ,\z^{({\dbar})} \rangle) = \frac{|\Sphere^{\qbar-1}|^d}{\prod _{i=1}^{{\dbar}}N(\qbar,k_i)} \sum_{\jj:j_i\in [N(\qbar,k_i)]}Y_{\kk,\jj}(\x)Y_{\kk,\jj}(\z),
\end{align*}
yielding
\begin{align*}
    \kr(\x,\z) =\sum_{\kk \geq 0}\lambda_{\kk}\sum_{\jj:j_i\in [N(\qbar,k_i)]}Y_{\kk,\jj}(\x)Y_{\kk,\jj}(\z).
\end{align*}
Since $\{Y_{\kk,\jj}(\x)\}$ are orthonormal w.r.t.\ the uniform measure in $\ms$ (Lemma \ref{lemma:orth_multi}) we obtain that $\{Y_{\kk,\jj}(\x)\}$ are the eigenfunctions of $\kr^{(L)}$, with the corresponding eigenvalues $\{\lambda_{\kk}  =  |\Sphere^{\qbar-2}|^d \int_{[-1,1]^d} \kr(\tbf) \prod_{i=1}^d Q_{k_i}(t_i)(1-t_i^2)^{\frac{\qbar-3}{2}}d\tbf\}$.
\end{proof}

\subsection{Proof of Lemma \ref{lemma:taylor_to_eigs}}

\begin{lemma}
Let $\kr$ be a multi-dot product kernel with the power series given in \eqref{eq:k_taylor}, where $\x^{(i)},\z^{(i)} \in \Sphere^{\qbar-1}$ respectively are pixels in $\x,\z$.
Then, the eigenvalues $\lambda_{\kk}(\kr)$ of $\kr$ are given by 
$\lambda_{\kk}(\kr) = \left|\Sphere^{\qbar-2}\right|^d \sum_{\s \ge 0} b_{\kk+2\s} \prod_{i=1}^{\dbar}\lambda_{k_i}(t^{k_i+2s_i})$,
where $|\Sphere^{\qbar-2}|$ is the surface area of $\Sphere^{\qbar-2}$, and $\lambda_k(t^n)$ is the $k$'th eigenvalue of $t^n$, given by
\begin{align*}
    \lambda_{k}(t^n) = \frac{n!}{(n-k)!2^{k+1}} \frac{\Gamma \left(\frac{\qbar-1}{2}\right)\Gamma\left(\frac{n-k+1}{2}\right)}{\Gamma\left(\frac{n-k+\qbar}{2}\right)}
\end{align*} 
if $n-k$ is even and non-negative, while $\lambda_k(t^n)=0$ otherwise, and $\Gamma$ is the Gamma function.
\end{lemma}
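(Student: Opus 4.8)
## Proof proposal for Lemma~\ref{lemma:taylor_to_eigs}

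The plan is to combine the Mercer decomposition established in Lemma~\ref{applemma:mercer} with the linearity of the Gegenbauer integral transform, reducing the multivariate computation to a product of known univariate eigenvalue formulas. By Lemma~\ref{applemma:mercer}, the eigenvalue $\lambda_{\kk}(\kr)$ equals $|\Sphere^{\qbar-2}|^d \int_{[-1,1]^d} \kr(\tbf) \prod_{i=1}^d Q_{k_i}(t_i)(1-t_i^2)^{\frac{\qbar-3}{2}}\, d\tbf$. Substituting the power series $\kr(\tbf)=\sum_{\n\ge 0} b_\n \tbf^\n$ (which converges in $[-1,1]^d$ by the standing assumptions, so term-by-term integration is justified by dominated convergence), the integral factorizes over the $d$ coordinates, giving
\begin{align*}
\lambda_{\kk}(\kr) = |\Sphere^{\qbar-2}|^d \sum_{\n \ge 0} b_\n \prod_{i=1}^d \left( \int_{-1}^1 t^{n_i} Q_{k_i}(t)(1-t^2)^{\frac{\qbar-3}{2}}\, dt \right) = |\Sphere^{\qbar-2}|^d \sum_{\n \ge 0} b_\n \prod_{i=1}^d \lambda_{k_i}(t^{n_i}),
\end{align*}
where $\lambda_{k}(t^n)$ denotes (a suitably normalized version of) the univariate Gegenbauer coefficient of $t^n$, i.e.\ the eigenvalue of the monomial dot-product kernel on $\Sphere^{\qbar-1}$.

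Next I would use the classical fact (parity of Gegenbauer polynomials: $Q_k$ has the parity of $k$) that $\lambda_k(t^n)=0$ unless $n-k$ is even and nonnegative. This kills all terms in the sum except those with $\n = \kk + 2\s$ for some $\s \ge \zero$, yielding
\begin{align*}
\lambda_{\kk}(\kr) = |\Sphere^{\qbar-2}|^d \sum_{\s \ge 0} b_{\kk+2\s} \prod_{i=1}^d \lambda_{k_i}(t^{k_i+2s_i}),
\end{align*}
which is the claimed formula. The only remaining task is to verify the closed form for $\lambda_k(t^n)$.

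For the univariate formula I would invoke the result of \cite{azevedo2015eigenvalues} (or derive it directly): expanding $t^n$ in the Gegenbauer basis on $\Sphere^{\qbar-1}$ and extracting the coefficient of $Q_k$ amounts to evaluating $\int_{-1}^1 t^n Q_k(t)(1-t^2)^{\frac{\qbar-3}{2}}\,dt$ and dividing by the norm $\|Q_k\|^2$. This integral can be computed via the Rodrigues formula for Gegenbauer polynomials followed by $k$-fold integration by parts, which turns it into $\int_{-1}^1 \frac{d^k}{dt^k}(t^n) \cdot (1-t^2)^{k+\frac{\qbar-3}{2}}\,dt$ up to constants; the derivative gives $\frac{n!}{(n-k)!} t^{n-k}$, and the remaining integral is a Beta integral, producing the ratio of Gamma functions. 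Collecting the normalization constants (surface areas of spheres, $N(\qbar,k)$, and the Gegenbauer norm) gives the stated expression, with the parity condition coming out automatically since the Beta integral of an odd power vanishes.

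The main obstacle is purely bookkeeping: tracking the several normalization constants consistently — the $|\Sphere^{\qbar-2}|^d$ prefactor from Lemma~\ref{applemma:mercer}, the Gegenbauer normalization implicit in the definition of $\lambda_k(t^n)$, and the constants generated by the Rodrigues/Beta-integral computation — so that they telescope into exactly the quoted formula rather than an off-by-a-constant variant. There is no conceptual difficulty beyond correctly quoting or re-deriving the univariate identity and justifying the interchange of sum and integral, which follows from absolute convergence of the power series on the closed polydisc $[-1,1]^d$ together with $b_\n \ge 0$.
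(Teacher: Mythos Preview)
Your proposal is correct and follows essentially the same route as the paper: both start from the integral formula of Lemma~\ref{applemma:mercer}, insert the power series, use linearity and Fubini to factorize into a product of univariate Gegenbauer integrals $\lambda_{k_i}(t^{n_i})$, and then invoke \cite{azevedo2015eigenvalues} for the parity condition $\lambda_k(t^n)=0$ unless $n-k$ is even and nonnegative (which reduces the sum to $\n=\kk+2\s$) and for the closed-form expression. Your additional remarks on justifying the interchange of sum and integral and on deriving the univariate formula via Rodrigues and a Beta integral are sound elaborations, but the paper simply cites the reference for these points.
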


\begin{proof}
The proof follows the linearity of the integral operator. Let
\begin{align}
    \kr(\x,\z)=\sum_{\n\geq 0}b_{\n} \langle \x^{(1)},\z^{(1)}\rangle ^{n_1}\cdot ...\cdot \langle \x^{(d)},\z^{(d)} \rangle^{n_d},
\end{align}
and denote by $C(\qbar,d)=\left|\Sphere^{\qbar-2}\right|^d$.
Following Lemma \ref{applemma:mercer}  the eigenvalues of $\kr$ are given by
\begin{align*}
        \lambda_{\kk}=& C(\qbar,d)\int_{[-1,1]^d} \kr(\tbf) \prod_{i=1}^d Q_{k_i}(t_i)(1-t_i^2)^{\frac{\qbar-3}{2}}dt_1...dt_d\\
        =&C(\qbar,d)\int_{[-1,1]^d} \sum_{\n \geq 0}b_{\n} \tbf^\n \prod_{i=1}^d Q_{k_i}(t_i)(1-t_i^2)^{\frac{\qbar-3}{2}}dt_1...dt_d\\
        =&C(\qbar,d) \sum_{\n \geq 0}b_{\n} \prod_{i=1}^d \left(\int_{[-1,1]}t^n_iQ_{k_i}(t_i)(1-t_i^2)^{\frac{\qbar-3}{2}}dt_i\right)=C(\qbar,d) \sum_{\n \geq 0}b_{\n}\prod_{i=1}^d\lambda_{k_i}(t^{n_i}).
    \end{align*}
Also note from \cite{azevedo2015eigenvalues} that $\lambda_{k}(t^{n})=0$ whenever $n-k$ is either odd or negative, implying the statement of the lemma.
\end{proof}


A consequence of the lemma above is that the eigenvalues of a kernel $\kr$ can be bounded by the eigenvalues of other kernels if the power series coefficients of $\kr$ are bounded by the respective coefficients of the other kernels. We summarize this in the following corollary:

\begin{corollary} \label{appcor:lower_upper_eigs}
Let $\kr,\kr^{upper},\kr^{lower}: \ms\rightarrow \Real $ be multi-dot product kernels. Assuming that for $\tbf\in[-1,1]^d$,
\begin{align*}
    \kr(\tbf) = \sum_{\n}b_{\n}\tbf^{\n}\\
    \kr^{upper}(\tbf) = \sum_{\n}b^{upper}_{\n}\tbf^{\n}\\
    \kr^{lower}(\tbf) = \sum_{\n}b^{lower}_{\n}\tbf^{\n}
\end{align*}
  and suppose there exists $\kk_0$ such that for all $\n \ge \kk_0$, $0\leq c_1b^{lower}_{\n}\leq b_{\n}\leq c_2 b^{upper}_{\n}$, with $c_1, c_2 > 0$. Then, for all $\kk \ge \kk_0$, 
\begin{align} \label{eq:lower_upper_eigs}
    c_1\lambda_{\kk}(\kr^{lower})\leq \lambda_{\kk}(\kr)\leq c_2 \lambda_{\kk}(\kr^{upper})
\end{align}
\end{corollary}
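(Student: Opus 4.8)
The plan is to reduce the statement to a monotonicity property of the eigenvalue formula from Lemma~\ref{lemma:taylor_to_eigs}, and then handle the "eventual" hypothesis (the bound only holds for $\n \ge \kk_0$) by a finite truncation argument. First I would invoke Lemma~\ref{lemma:taylor_to_eigs} to write, for each of the three kernels, the eigenvalue at frequency $\kk$ as
\begin{align*}
  \lambda_{\kk}(\kr) = |\Sphere^{\qbar-2}|^d \sum_{\s \ge 0} b_{\kk+2\s} \prod_{i=1}^{d} \lambda_{k_i}(t^{k_i+2s_i}),
\end{align*}
and similarly for $\kr^{lower}$ and $\kr^{upper}$ with coefficients $b^{lower}_{\n}$ and $b^{upper}_{\n}$. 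The crucial observation is that every factor $\lambda_{k_i}(t^{k_i+2s_i})$ appearing here is nonnegative: by the closed form in Lemma~\ref{lemma:taylor_to_eigs} it is a ratio of Gamma functions of positive arguments times a positive combinatorial factor (and it is exactly zero when $n-k$ is odd or negative, but those terms simply drop out of the sum anyway). Hence the entire sum is a sum of nonnegative terms, each of which is a nonnegative coefficient $b_{\cdot}$ multiplied by a fixed nonnegative weight $w_{\kk,\s} := |\Sphere^{\qbar-2}|^d \prod_i \lambda_{k_i}(t^{k_i+2s_i})$ that does not depend on which kernel we are considering.

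Next I would carry out the termwise comparison. Fix $\kk \ge \kk_0$. Then for every $\s \ge 0$ the multi-index $\n = \kk + 2\s$ satisfies $\n \ge \kk \ge \kk_0$, so the hypothesis gives $c_1 b^{lower}_{\n} \le b_{\n} \le c_2 b^{upper}_{\n}$. Multiplying through by the nonnegative weight $w_{\kk,\s}$ and summing over all $\s \ge 0$ preserves the inequalities (each being a sum of nonnegative terms, convergence is not an issue — the series for $\lambda_\kk(\kr)$ converges because $\kr$ is assumed PSD with a power series converging on $[-1,1]^d$, and the comparison series converges a fortiori, or one argues with partial sums and takes limits), yielding
\begin{align*}
  c_1 \lambda_{\kk}(\kr^{lower}) \le \lambda_{\kk}(\kr) \le c_2 \lambda_{\kk}(\kr^{upper}),
\end{align*}
which is exactly \eqref{eq:lower_upper_eigs}.

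I do not expect a serious obstacle here; the statement is essentially linearity-plus-positivity. The one subtlety worth a sentence in the writeup is making sure the hypothesis "$\n \ge \kk_0$" genuinely covers all indices that appear in the sum for a given $\kk \ge \kk_0$ — this is immediate because the summation index $\n = \kk + 2\s$ is coordinatewise at least $\kk$, hence at least $\kk_0$, so no low-order coefficients (for which the comparison might fail) ever enter. A secondary point is to note that $\lambda_k(t^n) \ge 0$ for all valid $n,k$, which I would justify directly from the formula in Lemma~\ref{lemma:taylor_to_eigs} since $\Gamma$ is positive on the positive reals and $n!/((n-k)!\,2^{k+1}) > 0$. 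With these two remarks in place the corollary follows by termwise comparison of manifestly nonnegative series.
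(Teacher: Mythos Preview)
Your proposal is correct and takes essentially the same approach as the paper, which simply states that the corollary is an immediate consequence of Lemma~\ref{lemma:taylor_to_eigs}. You have correctly unpacked this: nonnegativity of the weights $\lambda_{k_i}(t^{k_i+2s_i})$ together with the observation that $\kk+2\s \ge \kk_0$ for all $\s \ge 0$ makes the termwise comparison immediate.
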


This corollary is  an immediate result from Lemma \ref{lemma:taylor_to_eigs}.

\section{Factorizable kernels}

In this section we prove results presented in Section~\ref{sec:factorizable}. We prove Theorem~\ref{thm:taylor_to_eigs}, which determines the eigenvalues of factorizable kernels whose power series coefficients decay at a polynomial rate. The following supporting lemma proves the theorem for $d=1$.

\begin{lemma}\label{lemma:from_taylor_to_eigs}
Let $\tilde \kappa(t) = \sum_{n=0}^\infty \tilde a_n t^n$ where $\tilde a_n=O(n^{-\nu})$ with $\nu>1$ and not integer. Then, the eigenvalues of $\tilde \kappa$ w.r.t.\ the uniform measure in $\Sphere^{\qbar-1}$ are
\begin{align*}
    \lambda_k=\Theta\left(k^{-(\qbar+2\nu -3)}\right).
\end{align*}
\end{lemma}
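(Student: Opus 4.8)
\textbf{Proof proposal for Lemma~\ref{lemma:from_taylor_to_eigs}.}

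The plan is to combine Lemma~\ref{lemma:taylor_to_eigs} (specialized to $d=1$) with a careful asymptotic analysis of the explicit formula for $\lambda_k(t^n)$. By Lemma~\ref{lemma:taylor_to_eigs} with $d=1$, the $k$-th eigenvalue is
\begin{align*}
    \lambda_k = \left|\Sphere^{\qbar-2}\right| \sum_{s \ge 0} \tilde a_{k+2s}\, \lambda_k(t^{k+2s}),
\end{align*}
so everything reduces to (i) understanding the decay in $s$ of the single term $\lambda_k(t^{k+2s})$ for fixed $k$, and (ii) summing against the polynomially-decaying coefficients $\tilde a_{k+2s} = O((k+2s)^{-\nu})$. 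First I would reindex with $n = k+2s$ and use Stirling's approximation on the Gamma-function ratio in the formula
\begin{align*}
    \lambda_k(t^n) = \frac{n!}{(n-k)!\,2^{k+1}} \cdot \frac{\Gamma\!\left(\frac{\qbar-1}{2}\right)\Gamma\!\left(\frac{n-k+1}{2}\right)}{\Gamma\!\left(\frac{n-k+\qbar}{2}\right)}.
\end{align*}
For the factor $\Gamma\!\left(\frac{n-k+1}{2}\right)/\Gamma\!\left(\frac{n-k+\qbar}{2}\right)$ one uses $\Gamma(x+a)/\Gamma(x) \sim x^a$ to get that it behaves like $(n-k)^{-(\qbar-1)/2} = (2s)^{-(\qbar-1)/2}$ for large $s$. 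The ratio $n!/(n-k)! = n(n-1)\cdots(n-k+1)$ is a polynomial in $n$ of degree $k$, asymptotically $\sim n^k$; combined with the $2^{-(k+1)}$ this gives the leading behavior $\lambda_k(t^n) \asymp n^k (n-k)^{-(\qbar-1)/2} 2^{-k}$.

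Next I would split the sum over $s$ at a threshold, say $s \asymp k$. For the ``small $s$'' range ($2s \lesssim k$), $n = k+2s \asymp k$, so $\tilde a_n \asymp k^{-\nu}$, and $\lambda_k(t^n) \asymp k^k (2s)^{-(\qbar-1)/2} 2^{-k}$; but we must be careful that $\lambda_k(t^n) \le \lambda_k(t^k)$ and that the dominant contribution to $\sum_s \lambda_k(t^{k+2s})$ comes from a window of $s$ of size $\asymp k$. The cleanest route is to note that $\sum_{s\ge 0}\lambda_k(t^{k+2s})$ can be evaluated (or tightly bounded) in closed form: since $\sum_{n \ge k,\, n \equiv k} \lambda_k(t^n) = \lambda_k\big(\tfrac{1}{1-t^2}\big)$ up to the reindexing, i.e. the eigenvalue of a known simple kernel — but more robustly, one shows by the ratio test that consecutive terms $\lambda_k(t^{k+2(s+1)})/\lambda_k(t^{k+2s}) = \frac{(k+2s+2)(k+2s+1)}{(2s+2)(k+2s+\qbar)} \to 1$, and the terms form a sequence that first stays comparable to $\lambda_k(t^k)$ over a range of length $\Theta(k)$ in $s$ before decaying, so $\sum_s \tilde a_{k+2s}\lambda_k(t^{k+2s}) \asymp k^{-\nu}\cdot k \cdot \lambda_k(t^k)$ — wait, this needs the weighting $(2s)^{-(\qbar-1)/2}$ and $\tilde a$ folded in. Doing this bookkeeping carefully, with $\lambda_k(t^k) = \frac{k!}{2^{k+1}}\frac{\Gamma((\qbar-1)/2)\Gamma(1/2)}{\Gamma(\qbar/2)} \asymp k!\,2^{-k} \asymp \sqrt{k}\, (k/e)^k 2^{-k}$ by Stirling, one tracks all $k$-dependence and should land on $\lambda_k \asymp k^{-(\qbar+2\nu-3)}$.

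The main obstacle, I expect, is \textbf{getting matching upper and lower bounds on the tail sum} $\sum_{s\ge 0}\tilde a_{k+2s}\lambda_k(t^{k+2s})$ with explicit control of the $k$-dependence — the hypothesis only gives $\tilde a_n = O(n^{-\nu})$ (an upper bound), so the lower bound of the $\Theta$ claim is subtle: it must come from the single term $s=0$ (or a bounded number of terms near $s=0$) where presumably $\tilde a_k$ is bounded below, OR the statement implicitly intends $\tilde a_n = \Theta(n^{-\nu})$. I would resolve this by reading the hypothesis as a two-sided estimate (consistent with how Theorem~\ref{thm:taylor_to_eigs} uses ``$b_\n \sim c\prod n_i^{-\nu}$''), then: for the \emph{upper} bound, bound $\tilde a_{k+2s} \le C(k+2s)^{-\nu}$ and sum $\sum_s (k+2s)^{-\nu}\lambda_k(t^{k+2s})$ by comparison with an integral, using the Stirling asymptotics above and the substitution $2s = k u$ to extract the $k$-power cleanly; for the \emph{lower} bound, restrict to $s \in [0, \epsilon k]$ where every factor is bounded below by a constant multiple of its peak value, giving a sum of $\Theta(k)$ terms each of size $\Theta(k^{-\nu}\lambda_k(t^k)\cdot(\text{const}))$ appropriately weighted. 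The non-integrality of $\nu$ is what guarantees $\tilde a_n n^\nu \not\to 0$ forces no cancellation and that the integral $\int u^{-(\qbar-1)/2}(1+u)^{-\nu}\,du$-type quantities converge to positive constants rather than producing logarithmic corrections. The final collection of exponents: degree $k$ from the Pochhammer, $k!\sim$ gives a $(k/e)^k$, the $2^{-(k+1)}$ cancels against $(k/e)^k$ vs.\ earlier $k^k$... — this cancellation of the exponentially-large-in-$k$ pieces is the delicate computational heart, and the surviving polynomial exponent must be shown to equal exactly $-(\qbar + 2\nu - 3)$.
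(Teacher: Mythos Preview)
Your approach---direct asymptotic analysis of the sum $\sum_{s\ge 0}\tilde a_{k+2s}\,\lambda_k(t^{k+2s})$ via Stirling on the explicit $\Gamma$-formula---is a genuinely different route from the paper's. The paper never touches that formula. Instead it (i) invokes the monotonicity principle of Corollary~\ref{appcor:lower_upper_eigs} (comparable Taylor coefficients imply comparable eigenvalues), (ii) chooses a single reference kernel $f(t)=(1-t)^{\nu-1}$, whose Taylor coefficients decay like $n^{-\nu}$ by standard singularity analysis \cite{flajolet2009analytic}, and (iii) quotes the known computation $\lambda_k\bigl((1-t)^{\nu-1}\bigr)=c\,k^{-(\qbar+2\nu-3)}$ from \cite{bietti2020deep}. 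That is the entire argument. You correctly flag that the hypothesis ``$\tilde a_n=O(n^{-\nu})$'' must be read as two-sided for the $\Theta$-conclusion to make sense; the paper's proof silently assumes this.

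Your direct route, however, has a genuine gap. Using the formula for $\lambda_k(t^n)$ exactly as stated in Lemma~\ref{lemma:taylor_to_eigs}, the $s=0$ term alone is $\lambda_k(t^k)=\tfrac{k!}{2^{k+1}}\cdot\mathrm{const}$, which by Stirling grows like $(k/2e)^k$. Since every summand $\tilde a_{k+2s}\lambda_k(t^{k+2s})$ is nonnegative, there is \emph{no} cancellation to appeal to: the ``cancellation of the exponentially-large-in-$k$ pieces'' you name as the delicate heart simply cannot occur, and the series is forced to blow up rather than decay polynomially. (In fact the printed formula appears to carry a typo---the denominator should be $\Gamma\!\bigl(\tfrac{n+k+\qbar}{2}\bigr)$ rather than $\Gamma\!\bigl(\tfrac{n-k+\qbar}{2}\bigr)$; already on $\Sphere^1$ one checks $\lambda_k(t^k)=\pi\,2^{-k}$, not $\pi\,k!\,2^{-(k+1)}$.) With the corrected formula each $\lambda_k(t^{k+2s})$ is bounded and a Laplace-type argument concentrated around $s\asymp k$ does extract the exponent $-(\qbar+2\nu-3)$, but you have not carried this out, and it is substantially more labor than the paper's three-line reduction to a reference kernel whose spectrum is already in the literature.
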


\begin{proof}
By applying Corollary \ref{appcor:lower_upper_eigs} with $\dbar=1$   we have that if $f(t)=\sum_{n=0}^\infty a_nt^n$ and $g(t)=\sum_{n=0}^\infty b_nt^n$ with $c_1a_n\leq b_n \leq c_2a_n$ then it holds that $\lambda_k(g)=\Theta (\lambda_k(f))$. It is therefore enough to find $f(t)=\sum_{n=0}^\infty \tilde a_n t^n$ where $\tilde a_n=O(n^{-\nu})$ and then calculate its eigenvalues. By \cite{flajolet2009analytic} (Thm.~VI.1, page 381), the function $f(t)=(1-t)^{\nu-1}$, where $\nu >1$ is non-integer, satisfies $f(t)=\sum_{n=0}^\infty \tilde a_n t^n$ with $\tilde a_n=O\left(n^{-\nu}\right)$. Moreover, according to \cite{bietti2020deep} (Thm.~7, page 17), the eigenvalues of $f(t)=(1-t)^{\nu-1}$ in $\Sphere^{\qbar-1}$ are
\begin{align*}
    \lambda_k(f)=c_1k^{-(\qbar+2\nu -3)},
\end{align*}
which concludes the proof.
\end{proof}

Relying on the lemma, we can now prove Theorem~\ref{thm:taylor_to_eigs}.
\begin{theorem}
\label{thmapp:taylor_to_eigs}
Let $\kr$ be a factorizable multi-dot product kernel, and let $\R \subseteq [d]$ denote its receptive field. Suppose that $\kr$ can be written as a multivariate power series, $\kr(\tbf)=\sum_{\n \ge 0} b_\n \tbf^\n$ with
\begin{align*}
    b_\n \sim 
    c \prod_{i \in \R, \, n_i > 0} n_i^{-\nu}.
\end{align*}
with constants $c>0$,non-integer $\nu > 1$, and $b_\n=0$  if $n_i >0$ for any $i \not \in \R$. Then the eigenfunctions of $\kr$ w.r.t. the uniform measure are the SH-products, and its eigenvalues $\lambda_\kk(\kr)$ satisfy
\begin{align*}
    \lambda_\kk \sim 
    \tilde c \prod_{i \in \R, \, k_i>0}  k_i^{-(\qbar+2\nu-3)},
\end{align*}
where $\kk \in \N^d$ be a vector of frequencies. Finally, $\lambda_\kk=0$ if $k_i>0$ for any $i \not\in \R$.
\end{theorem}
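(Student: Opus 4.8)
\textbf{Proof plan for Theorem~\ref{thmapp:taylor_to_eigs}.}
The plan is to reduce the multivariate statement to the univariate Lemma~\ref{lemma:from_taylor_to_eigs} by exploiting the product structure. First I would observe that the receptive field condition $b_\n=0$ whenever $n_i>0$ for some $i\notin\R$ means the power series genuinely depends only on the coordinates $t_i$, $i\in\R$; hence by the formula in Lemma~\ref{lemma:taylor_to_eigs}, any eigenvalue $\lambda_\kk$ with $k_i>0$ for some $i\notin\R$ vanishes (every term $b_{\kk+2\s}$ in the sum is zero), giving the last assertion immediately. For the remaining indices, I would note that Lemma~\ref{applemma:mercer} already tells us the eigenfunctions are the SH-products — this part requires no new work, only the observation that $\kr$ is multi-dot product.

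Next I would treat the quantitative claim. Since $\kr$ is factorizable, write $\kr(\tbf)=\prod_{i\in\R}\kappa_i(t_i)$ where $\kappa_i(t)=\sum_{n\ge 0} b^{(i)}_n t^n$, and by the remark in Section~\ref{sec:factorizable} the eigenvalues factor as $\lambda_\kk(\kr)=\prod_{i\in\R}\lambda_{k_i}(\kappa_i)$. The subtlety is that the hypothesis $b_\n \sim c\prod_{i\in\R,\,n_i>0} n_i^{-\nu}$ is a joint asymptotic on the multivariate coefficients, not literally a statement that each $\kappa_i$ has coefficients $\Theta(n^{-\nu})$; I would extract from it that for each fixed $i$, and for all other indices held at any fixed values, $b^{(i)}_n = \Theta(n^{-\nu})$ as $n\to\infty$ (the constant absorbed into the product over the other, now-fixed, coordinates), and separately that $b^{(i)}_0=\Theta(1)$. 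Then Lemma~\ref{lemma:from_taylor_to_eigs} applied to each $\kappa_i$ gives $\lambda_{k_i}(\kappa_i)=\Theta(k_i^{-(\qbar+2\nu-3)})$ when $k_i>0$, and $\lambda_0(\kappa_i)=\Theta(1)$ when $k_i=0$ (a positive constant, since $\kappa_i$ is a nonconstant PSD dot-product kernel on the sphere, so its zeroth eigenvalue — roughly the average of $\kappa_i$ — is a fixed positive number). Taking the product over $i\in\R$ and collecting the constants into $\tilde c$ yields $\lambda_\kk \sim \tilde c \prod_{i\in\R,\,k_i>0} k_i^{-(\qbar+2\nu-3)}$, where the product runs only over coordinates with $k_i>0$ because the $k_i=0$ factors contribute only constants. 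One should also check $i\notin\R$ with $k_i=0$: there $\kappa_i\equiv b^{(i)}_0$ is constant, contributing another harmless constant to $\tilde c$.

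I would then assemble these pieces and make the asymptotic bookkeeping precise: the notation $\lambda_\kk \sim \tilde c\prod k_i^{-(\qbar+2\nu-3)}$ should be read in the regime where all the nonzero $k_i$ tend to infinity, with the $\Theta$ from each univariate factor tightened to an exact ratio $\to 1$ if the hypothesis is strengthened to $b_\n$ being genuinely asymptotically equal (rather than $\Theta$) to the product — here I would follow exactly the strength of Lemma~\ref{lemma:from_taylor_to_eigs}, which gives $\Theta$, and state the conclusion with the same convention as the univariate case. The main obstacle is the second paragraph: carefully justifying that a joint asymptotic $b_\n\sim c\prod_{n_i>0} n_i^{-\nu}$ forces each univariate coefficient sequence $b^{(i)}_n$ to decay like $n^{-\nu}$ up to constants, and that these constants multiply correctly across coordinates — i.e., disentangling the factorization $b_\n = \prod_i b^{(i)}_{n_i}$ from the asymptotic hypothesis and confirming consistency (for instance, that $\prod_i b^{(i)}_0$ and the leading constants are compatible with $c$). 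Everything else is a direct invocation of Lemma~\ref{lemma:from_taylor_to_eigs}, Lemma~\ref{lemma:taylor_to_eigs}, and Lemma~\ref{applemma:mercer}.
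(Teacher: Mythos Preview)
Your proposal is correct and follows essentially the same route as the paper: factor $\kr$ into univariate dot-product kernels, use the eigenvalue factorization $\lambda_\kk(\kr)=\prod_i\lambda_{k_i}(\kappa_i)$, and then invoke Lemma~\ref{lemma:from_taylor_to_eigs} on each factor. You are in fact more careful than the paper about extracting the univariate asymptotic $b^{(i)}_n=\Theta(n^{-\nu})$ from the joint hypothesis and about handling the $k_i=0$ and $i\notin\R$ cases, both of which the paper leaves implicit.
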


\begin{proof}
Since $\kr(\tbf)$ is factorizable and can be written by a power series it can be written as
\begin{align*}
    \kr(\tbf) = c \tilde \kappa(t_1) \cdot ... \cdot \tilde \kappa(t_d),
\end{align*}
where $\tilde \kappa(t) \sim \sum_{n=0}^\infty n^{-\nu} t^n$, and it can be readily shown that 
\begin{align*}
    \lambda_{\kk} (\kr) &= c\lambda_{k_1}(\tilde \kappa) \cdot ... \cdot \lambda_{k_{d}}(\tilde \kappa).
\end{align*}
Using Lemma \ref{lemma:from_taylor_to_eigs} we have that \begin{align*}
    c\lambda_{k_1}(\tilde\kappa)\cdot ..\cdot \lambda_{k_{\dbar}}(\tilde \kappa) \sim \tilde c \prod_{i \in R, k_i>0} k_i^{-(\qbar+2 \nu -3 )},
\end{align*}
which concludes our proof.
\end{proof}

\section{Positional bias of eigenvalues}

We next prove results presented in Section~\ref{sec:spatial}. We next  prove Theorem \ref{thm:hierarchy}.

\begin{theorem}
Let $\kr^{(L)} $ be hierarchical and factorizable of depth $L>1$ with filter size $q$, so that $\kr^{(L)}(\tbf)=\sum_{\n \ge 0} b_\n \tbf^\n=c\sum_{\n \ge 0} a_{n_1}\cdot..\cdot a_{n_d} \tbf^\n$ with $a_0>0$ and $a_{n_i}= n_i^{-\nu}$ for $\nu>1$. Then there exist a scalar $A=1+\frac{1}{a_0}$ such that:
\begin{enumerate}
    \item The power series coefficients of $\kr^{(L)}$ satisfy
    \begin{align*}
        c_{A,\n}\n^{-\nu}\leq  b_\n,
    \end{align*} 
    where 
    \begin{align*}
        c_{A ,\n}   &= c_L \prod_{i=1}^d A^{\min(p^{(L)}_i,n_i)}.
    \end{align*}
    \item The eigenvalues $\lambda_\kk(\kr^{(L)})$ satisfy
    \begin{align*}
        c_{A ,\kk}  \prod_{\substack{i=1\\n_i>0}}^d k_i^{-(\qbar+2\nu-3)} \le \lambda_\kk,
    \end{align*}
    where
    \begin{align*}
        c_{A ,\kk}   &= \tilde c_L \prod_{i=1}^d A^{\min(p^{(L)}_i,k_i)}.
    \end{align*}
\end{enumerate}
$c_L$ and $\tilde c_L$ are constants that depends on $L$, and $p^{(L)}_i$ denotes the number of paths from pixel $i$ to the output of $\kr^{(L)}$. 
\end{theorem}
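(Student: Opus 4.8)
The plan is to reduce everything to part~1 (the lower bound on the power series coefficients $b_\n$ of $\kr^{(L)}$) and then read off part~2 from it. For part~2, Lemma~\ref{lemma:taylor_to_eigs} writes $\lambda_\kk(\kr^{(L)})=|\Sphere^{\qbar-2}|^d\sum_{\s\ge 0}b_{\kk+2\s}\prod_i\lambda_{k_i}(t^{k_i+2s_i})$; since $\min(p_i^{(L)},k_i+2s_i)\ge\min(p_i^{(L)},k_i)$, the enhancement factor $\prod_i A^{\min(p_i^{(L)},k_i)}$ coming from part~1 can be pulled out of the sum over $\s$, and what remains is, up to a positive constant, the $\kk$-th eigenvalue of the factorizable multi-dot product kernel whose coefficients are $\prod_i(k_i+2s_i)^{-\nu}$. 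By Theorem~\ref{thm:taylor_to_eigs} that eigenvalue is $\Theta\big(\prod_{k_i>0}k_i^{-(\qbar+2\nu-3)}\big)$, which gives the stated bound with a new constant $\tilde c_L$. So all the work is in part~1, and since only a lower bound is asked for, no upper-bound bookkeeping is needed.

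\textbf{Step 1: a scalar multiplicity lemma.} I would first isolate the following fact about power series with polynomial tails: if $\phi(t)=\sum_{n\ge0}\phi_n t^n$ has $\phi_0>0$ and $\phi_n\sim c\,n^{-\nu}$ with non-integer $\nu>1$, then any finite product of such series again has coefficients of order $\Theta(n^{-\nu})$, and the $p$-th power satisfies $[\phi^p]_n\ge c_p\,(1+1/\phi_0)^{\min(p,n)}\,n^{-\nu}$ for all $n$. The ingredients are (i) the convolution estimate $\sum_{j+k=n}j^{-\nu}k^{-\nu}=\Theta(n^{-\nu})$, valid precisely because $\nu>1$, which says convolutions preserve the decay rate; and (ii) the identity $\sum_{j=1}^{p}\binom{p}{j}x^{j}j=p\,x(1+x)^{p-1}$ used with $x=1/\phi_0$: placing the whole degree $n$ on exactly $j$ of the $p$ factors while leaving the other $p-j$ at their constant term $\phi_0$ contributes $\binom{p}{j}\phi_0^{\,p-j}\cdot\Theta(j\,n^{-\nu})$, and summing $j=1,\dots,\min(p,n)$ yields the $(1+1/\phi_0)^{\min(p,n)}$ factor; when $n<p$ a degree-$n$ monomial can occupy at most $n$ factors nontrivially, which is exactly why the exponent is $\min(p,n)$ rather than $p$. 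Applied to the atom $\tilde\kappa(t)=a_0+\sum_{n\ge1}n^{-\nu}t^n$ this produces the constant $A=1+1/a_0$.

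\textbf{Step 2: induction on depth via the stride-$q$/stride-$1$ relation.} I would then induct on $L$. The base case $L=1$ is immediate since $\kr^{(1)}(\tbf)=f^{(1)}(t_1)$ and $p_i^{(1)}\in\{0,1\}$. For the step, use the defining recursion $\kr^{(l)}(\tbf)=f^{(l)}\big(\kr^{(l-1)}(s_0\tbf),\dots,\kr^{(l-1)}(s_{q-1}\tbf)\big)$, expand $f^{(l)}$ as a nonnegative power series in its $q$ arguments, and lower-bound the coefficient of $\tbf^\n$ by the single family of contributions obtained by routing the degree $n_i$ in each pixel $t_i$ among the (at most $q$) shifted copies that contain it, invoking the inductive lower bound on $\kr^{(l-1)}$ inside each copy. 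Since the copy shifted by $r$ sees pixel $i$ with $p_{i-r}^{(l-1)}$ paths, concatenating paths across the $q$ copies realizes all $p_i^{(l)}=\sum_{r=0}^{q-1}p_{i-r}^{(l-1)}$ paths — this is exactly the box-convolution recursion producing the Irwin--Hall/Gaussian path-count profile discussed after the theorem — and feeding this combined collection into the scalar lemma of Step~1 upgrades the enhancement to $A^{\min(p_i^{(l)},n_i)}$. The stride-$q$ kernel is the degenerate instance in which each pixel has a single path and the coefficients are the bare product $c\prod_i a_{n_i}$; passing to stride $1$ overlaps these building blocks, which is what creates the extra paths. Finally I would track the multiplicative constant: it is multiplied at each level by a factor depending only on $f^{(l)}$ and $q$, leaving a positive $c_L$ after $L$ levels, and the normalization of all kernels to values in $[-1,1]$ keeps $c_L$ (hence $\tilde c_L$) bounded even though $\sum_i p_i^{(L)}$ grows fast.

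\textbf{Main obstacle.} The delicate part is Step~2: one must verify that iterating the combination of the $q$ shifted sub-kernels $L$ times does not flatten the polynomial tail (the worry for small $\nu$ being that repeated self-convolutions destroy the decay — the estimate in Step~1, which needs $\nu>1$, is exactly what rules this out), while simultaneously checking that the multiplicity bookkeeping genuinely aggregates to $\min(p_i^{(L)},n_i)$ over levels rather than degrading. Keeping the level-dependent constants strictly positive as the number of paths blows up is the secondary hazard, resolved by working only with the normalized kernels; and non-integrality of $\nu$ is used, as in Theorem~\ref{thm:taylor_to_eigs} and \cite{flajolet2009analytic,bietti2020deep}, to pin down the exponent $\qbar+2\nu-3$ when converting the coefficient bound into the eigenvalue bound.
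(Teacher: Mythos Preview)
Your proposal lands on the right key lemma (your Step~1 is exactly the paper's Lemma~\ref{lemma:variables_aggregation} together with its extension Lemma~\ref{lemma:variables_aggregation_nlem}) and your reduction of part~2 to part~1 via Lemma~\ref{lemma:taylor_to_eigs} and Theorem~\ref{thm:taylor_to_eigs} matches the paper's Lemma~\ref{thm:cgpk_eqnet_eig_coefs} essentially verbatim. The difference is in how part~1 is organized.

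The paper does \emph{not} induct on depth. Instead it unrolls the stride-$1$ hierarchical kernel all at once into the associated stride-$q$ kernel $\tilde\kr^{(L)}$ on $q^{L}$ variables (Lemma~\ref{lemma:cgpk_with_stride}), so that under the identification map $S$ one has the exact identity $b_\n=\sum_{\mathcal S}\tilde b_{\tilde\n}$. Because the kernel is assumed factorizable, the stride-$q$ coefficients are just products, $\tilde b_{\tilde\n}=c\prod_j \tilde n_j^{-\nu}$, and the constraint set $\mathcal S$ factorizes pixel by pixel (each group $\{j:S(j)=i\}$ has size $p_i^{(L)}$ and sums to $n_i$). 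Thus $b_\n=c\prod_{i} S_{p_i^{(L)}}(n_i)$ exactly, and a single application of the scalar aggregation lemma per pixel finishes. This sidesteps precisely what you flag as the ``main obstacle'': there is no iterated composition to control and no level-by-level constant to track, because the whole depth is absorbed into the combinatorics of $S$.

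Two smaller points. First, your combinatorial sketch for the scalar lemma is not quite right: putting the degree $n$ on $j$ of the $p$ factors contributes on the order of $A^{\,j-1}n^{-\nu}$ (by iterating the convolution bound), not $\Theta(j\,n^{-\nu})$, so the identity $\sum_j\binom{p}{j}x^{j}j=p x(1+x)^{p-1}$ is not the one that produces $A=1+1/a_0$. The paper proves the scalar lemma by a clean induction on the number of factors: $S_{m+1}(n)\ge a_0 S_m(n)+S_m(n-1)\ge(a_0+1)A^{m-1}n^{-\nu}$, and then repackages $a_0^{p-n}(1+a_0)^{n-1}$ as $(1+a_0)^{-1}a_0^{p}(1+1/a_0)^{n}$ to get the stated $A$. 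Second, your inductive route through the recursion on $l$ can be made to work, but the overlapping supports of the $q$ shifted copies of $\kr^{(l-1)}$ make the per-pixel aggregation of path counts genuinely awkward to verify at the level of coefficients; the paper's one-shot unrolling to stride~$q$ is what makes the $p_i^{(L)}$ appear transparently as the preimage sizes of $S$.
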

To prove the theorem we provide several supporting lemmas and the following definition:

\begin{definition}
A kernel $\tilde\kr^{(L)} [-1,1]^{q^L} \rightarrow \Real$ is called stride-q hierarchical of depth $L>1$ if there exists a sequence of kernels $\tilde\kr^{(1)},...,\tilde\kr^{(L)}$ such that $\tilde\kr^{(l)}(\tbf)=f\left(\tilde\kr^{(l-1)}(\tbf_1),...,\tilde\kr^{(l-1)}(\tbf_q)\right)$ with $f:\Real^q \rightarrow \Real$, $\tbf=(\tbf_1,...,\tbf_q) \in [-1,1]^{q^{l-1}}$ and $\kr^{(1)}(t)=t \in [-1,1]$. A kernel $\kr^{(L)}:[-1,1]^{q(L-1)+1} \rightarrow \Real$ is stride-1 hierarchical if for all $1<l\le L$, $\kr^{(l)}=f\left(\kr^{(l-1)}(\tbf_1),\kr^{(l-1)}(s_1\tbf_1),...,\kr^{(l-1)}(s_{q-1}\tbf_1)\right)$ and $\tbf_1 \in [-1,1]^{q(l-2)+1}$.
\end{definition}

We next formulate the relation between the power series coefficient of the two kernels:

\begin{lemma}\label{lemma:cgpk_with_stride}
Let $\kr^{(L)}(\tbf):[-1,1]^d \rightarrow \Real$ be stride-1 kernel and $\tilde \kr ^{(L)}(\tilde \tbf):[-1,1]^{q^{L}} \rightarrow \Real$ be stride-$q$ kernel.  Then, there exists a variables substitution $S:[q^{L}]\rightarrow [d] $ such that if $\tilde t_{S(j)}=t_{j}$ for all $j \in [q^L]$ then 
\begin{align*}
    \tilde \kr ^{(L)}( t_{S(0)},.., t_{S(q^{L}-1)})\equiv \kr^{(L)}(t_0,..,t_{d-1}).
\end{align*}
Moreover, if $\kr^{(L)}(\tbf)=\sum_{\n\ge 0}b_\n\tbf^\n$ and $\tilde \kr ^{(L)}(\tilde \tbf)=\sum_{\tilde \n\ge 0}\tilde b_{\tilde \n}\tilde \tbf^{\tilde \n}$ then
\begin{align*}
    b_\n=\sum_{\mathcal{S}}\tilde b_{\tilde \n}
\end{align*}
where $\mathcal{S}=\{\tilde n_0,..,\tilde n_{q^{L}-1}|\forall i=0,..,d-1, \sum_{i=S(j)}\tilde n_j=n_i\}$.

\end{lemma}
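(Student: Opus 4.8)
The statement says, in effect, that the stride-$q$ kernel $\tilde\kr^{(L)}$ is the \emph{tree unfolding} of the stride-1 kernel $\kr^{(L)}$: both are assembled from the same sequence of outer functions (the $f^{(l)}$, written simply $f$ here), but in the stride-1 recursion $\kr^{(l)}(\tbf)=f(\kr^{(l-1)}(s_0\tbf),\dots,\kr^{(l-1)}(s_{q-1}\tbf))$ the $q$ shifted copies overlap in their variables, whereas in the stride-$q$ recursion the $q$ copies act on pairwise disjoint blocks of fresh variables. Duplicating each shared variable of $\kr^{(L)}$ once for every path along which it enters the tree produces exactly $\tilde\kr^{(L)}$ with its $q^L$ leaves, and $S\colon[q^L]\to[d]$ is precisely the map recording, for each leaf of the unfolded tree, the index of the original variable it came from.

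The plan is to first \emph{construct $S=S^{(L)}$ by recursion on depth}. Set $S^{(1)}$ to be the identification of the single leaf of $\tilde\kr^{(1)}$ with the single variable $\kr^{(1)}$ depends on. For $l>1$, split the $q^l$ leaves of $\tilde\kr^{(l)}$ into $q$ consecutive blocks of size $q^{l-1}$, the $r$-th block ($r=0,\dots,q-1$) being the leaves of the $r$-th inner copy $\tilde\kr^{(l-1)}$, and define $S^{(l)}$ on the $r$-th block to be $S^{(l-1)}$ followed by the offset $+r$ coming from the shift $s_r$ applied to the $r$-th argument of $\kr^{(l)}$. One must check that $S^{(l)}$ indeed lands in the receptive field of $\kr^{(l)}$, i.e.\ in $[d]$: since the receptive field of $\kr^{(l-1)}$ shifted by $r\le q-1$ stays inside the receptive field of $\kr^{(l)}$ (which grows by $q-1$ per layer), and $d$ is assumed at least as large as the receptive field so the cyclic shifts do not wrap around, this holds. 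Tracking this index arithmetic through the $q$ overlapping windows at every level is the part I expect to be the main obstacle, though it is bookkeeping rather than a conceptual difficulty. As a by-product one gets $|(S^{(L)})^{-1}(i)|=p_i^{(L)}$ by exactly the recursion $|(S^{(l)})^{-1}(i)|=\sum_r |(S^{(l-1)})^{-1}(i-r)|$ that also defines the path counts, which is what later ties this lemma to Theorem~\ref{thm:hierarchy}.

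Next I would prove the functional identity $\tilde\kr^{(L)}(t_{S(0)},\dots,t_{S(q^L-1)})\equiv\kr^{(L)}(t_0,\dots,t_{d-1})$ by induction on $L$. The base case is immediate since $\kr^{(1)}$ and $\tilde\kr^{(1)}$ are both the identity in their single variable. For the step, expand the left side by applying $f$ to the $q$ inner copies of $\tilde\kr^{(L-1)}$, one per block; after substituting $t_{S(j)}$ for the $j$-th argument, the $r$-th block of arguments becomes $(t_{S^{(L-1)}(0)+r},\dots,t_{S^{(L-1)}(q^{L-1}-1)+r})$, so by the induction hypothesis applied at the shifted point $t_i\mapsto t_{i+r}$, each inner term equals $\kr^{(L-1)}$ evaluated on the window $s_r\tbf$. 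Applying $f=f^{(L)}$ then reproduces exactly the stride-1 recursion for $\kr^{(L)}(\tbf)$, closing the induction. (The shift-of-variables step is legitimate because the identity is an analytic identity in the $t_i$, hence stable under any linear relabeling of the variables.)

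Finally I would \emph{read off the coefficients}. Substituting $t_{S(j)}$ for the $j$-th argument of $\tilde\kr^{(L)}(\tilde\tbf)=\sum_{\tilde\n\ge 0}\tilde b_{\tilde\n}\tilde\tbf^{\tilde\n}$ turns the monomial $\prod_j \tilde t_j^{\tilde n_j}$ into $\prod_i t_i^{\sum_{j:S(j)=i}\tilde n_j}$, so after grouping terms the coefficient of $\tbf^\n$ in the resulting series is the sum of $\tilde b_{\tilde\n}$ over all $\tilde\n$ with $\sum_{j:S(j)=i}\tilde n_j=n_i$ for every $i$, i.e.\ over the set $\mathcal S$. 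By the functional identity just proved, this series is the Taylor series of $\kr^{(L)}$, and since both kernels' power series converge absolutely on a neighbourhood of the origin (an assumption of the paper), the rearrangement is justified and the coefficients may be equated, yielding $b_\n=\sum_{\mathcal S}\tilde b_{\tilde\n}$ as claimed.
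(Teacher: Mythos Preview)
Your proposal is correct and follows essentially the same approach as the paper. The paper gives a closed-form description of the substitution map $S$ via base-$q$ digits---writing $j=a_{L-1}q^{L-1}+\cdots+a_1q+a_0$ and setting $S(j)=a_{L-1}+\cdots+a_0$---which is exactly the map produced by your recursive construction (offset by block index $r$ at each level), and then proves the functional identity by the same induction on $L$ and reads off the coefficient relation from uniqueness of power series, just as you do.
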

\begin{proof}

We construct the mapping $S$ and prove its correctness by induction. For any index $j=0,..,q^{L}-1$ we  write $j=a_{L-1}q^{L-1}+a_{L-2}q^{L-2}+..+a_1q+a_0$ where $a_i=0,1,..,q-1$. Then, we define $S(j):=S_{L}(j)=a_{L-1}+a_{L-2}+..+a_0$. We next prove by induction that $\tilde \kr ^{(L)}( t_{S(0)},.., t_{S(q^{(L)}-1)})\equiv \kr^{(L)}(t_0,..,t_{d-1})$. For $L=2$ we have:
\begin{align*}
    \tilde \kr^{(2)}(t_{S(0)},..,t_{S(q^2-1)})&= f\left( \kr^{(1)}(t_{S(0)},..,t_{S(q-1)}),...,\kr^{(1)}(t_{S((q-1)q)},..,t_{S(q^2-1)})\right)\\
    &=f\left( \kr^{(1)}(t_{0},..,t_{q-1}),...,\kr^{(1)}(t_{q-1},..,t_{2q-2})\right)=f\left( \kr^{(1)}(\tbf),\kr^{(1)}(s_1\tbf),...,\kr^{(1)}(s_{q-1}\tbf)\right)
\end{align*}
Where $\tbf = t_0,..,t_{q-1}$ and $s$ is the shift operator. This concludes the case of $L=2$. For $L>2$ we assume that $S_{L-1}(j)=a_{L-2}+..+a_0$ is the correct assignment for $q^{L-1}-1$ variables and get that 
\begin{align*}
    \tilde \kr^{(L)}(t_{S(0)},..,t_{S(q^{L}-1)})&= f\left( \tilde\kr^{(L-1)}\left(t_{S(0)},..,t_{S((q-1)q^{L-2})+..+q-1)}\right),...,\tilde\kr^{(L-1)}\left(t_{S((q-1)q^{L-1})},..,t_{S((q-1)q^{L-1}+..+q-1)}\right)\right)\\
    &=f\left( \tilde\kr^{(L-1)}\left(t_{0},..,t_{(L-1)(q-1)}\right),...,\tilde\kr^{(L-1)}\left(t_{(q-1)},..,t_{L(q-1))}\right)\right)\\
    &=^{(1)}f\left( \kr^{(L-1)}\left(\tbf\right),...,\kr^{(L-1)}\left(s_{q-1}\tbf\right)\right),
\end{align*}
where $^{(1)}$ holds from the induction hypothesis and $\tbf=t_0,..,t_{(L-1)(q-1)}$.  

Finally since $f$ is an analytic function it holds that:
\begin{align*}
    \kr^{(L)}(\tbf)=\tilde \kr^{(L)}(  t_{S(0)},.., t_{S(q^L-1)})=\sum_{\tilde \n\ge 0}\tilde b_{\tilde \n}  t_{S(0)}^{\tilde n_1}\cdot ..\cdot t_{S(q^{L}-1)}^{\tilde n_{q^{L}-1}}=\sum_{\n\ge 0}\tbf^{\n}\sum_{\mathcal{S}}\tilde b_{\tilde \n}
\end{align*}
where $\mathcal{S}=\{\tilde n_0,..,\tilde n_{q^{L}-1}|\forall i=0,..,d-1\sum_{i=S(j)}\tilde n_j=n_i\}$. Therefore, from the uniqueness of the power series we get that 
\begin{align*}
    b_\n=\sum_{\mathcal{S}}\tilde b_\n.
\end{align*}
\end{proof}

\begin{lemma} \label{lemma:variables_aggregation}
Let $\kk \in \N^{m}$ and consider the series $S_m(n) =\sum_{k_1+\ldots+k_m = n} \prod_{i=1}^m k_i^{-\nu} = \sum_{|\kk|=n} \kk^{-\nu}$ with $\nu>1$ and the convention $0^{-\nu}=a_0>0$. Then, for $n\ge m$, $S_m(n)$ is bounded from above and below as follows
\begin{equation}
   A^{m-1} n^{-\nu} \leq S_m(n) \leq B^{m-1} n^{-\nu},
\end{equation}
with $B>A=(a_0+1)>1$  constants.
\end{lemma}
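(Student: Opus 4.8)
I would prove the two inequalities separately, using the elementary recursion
\[
S_m(n) = \sum_{k=0}^{n} k^{-\nu}\, S_{m-1}(n-k),
\]
obtained by conditioning on the value $k_m=k$ of the last coordinate (with the conventions $0^{-\nu}=a_0$ and $S_1(n)=n^{-\nu}$, and $S_{m-1}(j)$ well defined for every $j\ge 0$).

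For the lower bound I would induct on $m$; the base case $m=1$ is the identity $S_1(n)=n^{-\nu}=A^{0}n^{-\nu}$. For the inductive step, assume $S_{m-1}(j)\ge A^{m-2}j^{-\nu}$ for all $j\ge m-1$. Discarding from the recursion the nonnegative terms with $k>n-m+1$ — precisely those for which $n-k<m-1$, where the hypothesis is unavailable — and applying the hypothesis to the remaining terms yields
\[
S_m(n)\ \ge\ A^{m-2}\sum_{k=0}^{\,n-m+1} k^{-\nu}(n-k)^{-\nu}.
\]
Since $n\ge m$, this range contains both $k=0$ and $k=1$, so the sum is at least $a_0 n^{-\nu}+(n-1)^{-\nu}\ge (a_0+1)n^{-\nu}$, using $(n-1)^{-\nu}\ge n^{-\nu}$. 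Hence $S_m(n)\ge A^{m-2}(a_0+1)n^{-\nu}=A^{m-1}n^{-\nu}$, which closes the induction and produces exactly the advertised constant $A=a_0+1$.

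For the upper bound I would avoid induction and instead use a ``largest-coordinate'' decomposition. For a composition $\kk$ with $|\kk|=n$, let $j$ be the smallest index with $k_j=\max_i k_i$; then $k_j\ge n/m\ge 1$. Grouping the compositions according to this index $j$, bounding $k_j^{-\nu}\le (n/m)^{-\nu}=m^{\nu}n^{-\nu}$, and then enlarging the inner sum over the remaining $m-1$ coordinates to an unconstrained sum (legitimate as all summands are nonnegative) gives
\[
S_m(n)\ \le\ m\cdot m^{\nu}n^{-\nu}\Bigl(a_0+\textstyle\sum_{k\ge 1}k^{-\nu}\Bigr)^{m-1}\ =\ m^{\nu+1}Z^{m-1}n^{-\nu},
\]
where $Z:=a_0+\sum_{k\ge 1}k^{-\nu}<\infty$ because $\nu>1$. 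It then remains only to absorb the factor $m^{\nu+1}$: the map $m\mapsto m^{(\nu+1)/(m-1)}$ is decreasing on $\{2,3,\dots\}$ with maximum $2^{\nu+1}$ at $m=2$, so setting $B:=2^{\nu+1}Z$ gives $m^{\nu+1}Z^{m-1}\le B^{m-1}$ for every $m\ge 1$, i.e. $S_m(n)\le B^{m-1}n^{-\nu}$. Finally $B=2^{\nu+1}Z>Z>a_0+1=A$ since $\sum_{k\ge1}k^{-\nu}>1$, which gives the required ordering $B>A$.

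The computations involved (the recursion, and the monotonicity of $m^{(\nu+1)/(m-1)}$) are routine. The one point that genuinely needs care is obtaining a constant $B$ that does \emph{not} depend on $m$ — this is exactly what the boundedness of $m^{(\nu+1)/(m-1)}$ supplies. Dually, the crux of the lower bound is recognizing that, after applying the inductive hypothesis, keeping only the $k=0$ and $k=1$ terms already recovers the sharp factor $a_0+1$, so that the constant $A$ does not degrade with $m$.
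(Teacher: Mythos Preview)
Your proof is correct. For the lower bound you follow essentially the paper's argument: induct on $m$, and in the recursion $S_m(n)=\sum_k k^{-\nu}S_{m-1}(n-k)$ keep only the contributions from $k=0$ and $k=1$ to recover exactly the factor $a_0+1$ at each step.

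For the upper bound you take a genuinely different route. The paper argues by induction on $m$: it first treats the base case $m=2$ in detail, obtaining $B=2a_0+2^{\nu+1}+2^{\nu+1}/(\nu-1)$ by isolating the endpoint terms, bounding the middle of the convolution via $\bigl(\tfrac{1}{k}+\tfrac{1}{n-k}\bigr)^{\nu}\le \bigl(2\max\{\tfrac1k,\tfrac1{n-k}\}\bigr)^{\nu}$, and comparing with an integral; the induction step is then only sketched (``proceeds in a similar way''), and in fact handling the terms with $n-k_1<m$, where the inductive hypothesis is unavailable, requires some additional care. Your approach sidesteps this entirely: you single out the largest coordinate, bound its factor by $(n/m)^{-\nu}$, and relax the remaining $m-1$ coordinates to an unconstrained product $(a_0+\zeta(\nu))^{m-1}$, then absorb the prefactor $m^{\nu+1}$ via the elementary inequality $m\le 2^{m-1}$. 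This is shorter, avoids the induction altogether, and directly produces an explicit $B=2^{\nu+1}(a_0+\zeta(\nu))$ that is manifestly independent of $m$; the cost is only a possibly different constant, which is immaterial here.
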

\begin{proof}
We show this by induction over $m$, i.e., the length of the vector $\kk$. We begin by showing this for $S=S_2(n)$ for any $n \geq 2$, i.e., 
$A n^{-\nu} \leq S = \sum_{k=0}^{n} k^{-\nu}(n-k)^{-\nu} \leq B n^{-\nu}$ for constants $A$ and $B$.

\textbf{Lower bound.} For  $n > 2$
it holds that 
\begin{align*}
    S = \sum_{k=0}^{n} k^{-\nu}(n-k)^{-\nu}&=2\cdot a_0\cdot n^{-\nu}+2\cdot (n-1)^{-\nu}+ \sum_{k=2}^{n-2} k^{-\nu}(n-k)^{-\nu} \\  & \geq 2\cdot a_0\cdot n^{-\nu}+2\cdot (n-1)^{-\nu} \\ 
    & \geq  2(a_0+1)n^{-\nu} \geq (2a_0 + 1) n^{-\nu} \\
    & \geq (a_0+1) n^{-\nu}.
\end{align*}
For $n=2$, we have that $S=2 a_0 n^{-\nu}+(n-1)^{-\nu} \geq (2 a_0 + 1)n^{-\nu} \geq (a_0+1) n^{-\nu}$.

Therefore, it holds for $n \geq 2$ that  $S_2(n) \geq A_ n^{-\nu}$, where $A  = a_0 + 1$.


\textbf{Upper bound.}
We show that for $n \geq 2$ it holds that  $n^\nu S_2(n)=n^\nu\sum_{k=0}^{n}k^{-\nu}(n-k)^{-\nu}\leq (2a_0+2)+ \frac{2^{(\nu+1)}}{\nu-1}$. This follows from:
\begin{align*}
    &n^\nu\sum_{k=0}^{n}k^{-\nu}(n-k)^{-\nu}\le (2a_0+2^{\nu+1})+\sum_{k=2}^{n-2}\left(\frac{n-k+k}{k(n-k)}\right)^{\nu}
    =(2a_0+2^{\nu+1})+\sum_{k=2}^{n-2}\left(\frac{n-k}{k(n-k)}+\frac{k}{k(n-k)}\right)^{\nu}\\
    &=(2a_0+2^{\nu+1})+\sum_{k=2}^{n-2}\left(\frac{1}{k}+\frac{1}{(n-k)}\right)^{\nu} 
    \leq (2a_0+2^{\nu+1})+ \sum_{k=2}^{n-2} \left(2\max\left\{\frac{1}{k},\frac{1}{n-k}\right\}\right)^{\nu}\le (2a_0+2^{\nu+1})+2^\nu 2\sum_{k=2}^{n/2}k^{-\nu}.
\end{align*}

Note that $f(k) = k^{-\nu}$ is monotonically decreasing and therefore can be bounded by the integral 
\begin{align*}
    \sum_{k=2}^{n/2}k^{-\nu}\leq \int_{1}^{n/2}\frac{1}{x^\nu} dx =\frac{1}{\nu-1}-\left(\frac{2}{n}\right)^{\nu-1}\frac{1}{\nu-1}\leq \frac{1}{\nu-1}
\end{align*}
So overall we have that $n^\nu S_2(n) \leq 2a_0+2^{\nu+1}+ \frac{2^{(\nu+1)}}{\nu-1}$ implying that $S_2(n)\leq B n^{-\nu}$ with $B=2a_0+2^{\nu+1}+ \frac{2^{(\nu+1)}}{\nu-1}$.

\textbf{Induction step.}
We next use induction to prove the lemma for $S_{m}(n)$ for $m>2$ and $n\ge m$.
Assume the lemma holds for $S_m$, i.e., $A^{m-1} n^{-\nu} \leq S_m(n) \leq B^{m-1} n^{-\nu}$ for $n \geq m$ and $A=a_0+1 > 1$, we aim to prove this for $S_{m+1}(n)$ for $n \geq m+1$.
\begin{align*}
    S_{m+1}(n) = \sum_{k_1=0}^{n} k_1^{-\nu} \sum_{k_2+...+k_{m+1} = n - k_1}  k_2^{-\nu} \cdots k_{m+1}^{-\nu}.
\end{align*}
Using the induction assumption, we obtain 
\begin{align*}
     S_{m+1}(n) &= \sum_{k_1=0}^{n} k_1^{-\nu} \sum_{k_2+...+k_{m+1} = n - k_1}  k_2^{-\nu} \cdots k_{m+1}^{-\nu}\\
     & \ge a_0 \sum_{k_2+...+k_{m+1} = n}  k_2^{-\nu} \cdots k_{m+1}^{-\nu}+\sum_{k_2+...+k_{m+1} = n - 1}  k_2^{-\nu} \cdots k_{m+1}^{-\nu} \\
     &\ge a_0 A^{m-1}n^{-\nu}+A^{m-1}(n-1)^{-\nu}\ge (a_0+1)^{m}n^{-\nu} = A^m n^{-\nu}.
\end{align*}
Note that in the two sums above the induction assumption holds since $n \geq n-1\geq m$.
This concludes the proof for the lower bound.
The proof for the upper bound proceeds in a similar way.
\end{proof}
 \begin{lemma} \label{lemma:variables_aggregation_nlem}
Let $\kk \in \N^{m}$ and consider the series $S_m(n) = \sum_{|\kk|=n} \kk^{-\nu}$ with $\nu>1$ and the convention $0^{-\nu}=a_0>0$. Then, for $2\le n \le m$, $S_m(n)$ is bounded from above and below as follows. 
\begin{equation}
   a_0^{m-n}A^{n-1} n^{-\nu} \leq S_m(n) \leq a_0^{m-n}\left(\frac{m\cdot e }{n} \right)^{n}B^{n-1} n^{-\nu},
\end{equation}
where $A,B$ are given in Lemma~\ref{lemma:variables_aggregation}. Note that for $m=n$ the lower bound boils down to the lower bound in Lemma ~\ref{lemma:variables_aggregation}. 
\end{lemma}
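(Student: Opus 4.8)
The strategy is to stratify $S_m(n)$ by the number $j$ of nonzero coordinates of $\kk$ and to reduce everything to Lemma~\ref{lemma:variables_aggregation}. Since here $n\le m$, every $\kk$ with $|\kk|=n$ has at least $m-n$ vanishing coordinates, which is exactly the source of the factor $a_0^{\,m-n}$. Grouping the tuples $\kk\in\N^m$ with $|\kk|=n$ according to their support yields the exact identity
\[
S_m(n)=\sum_{j=1}^{n}\binom{m}{j}\,a_0^{\,m-j}\,T_j(n),\qquad
T_j(n):=\sum_{\substack{l_1+\cdots+l_j=n\\ l_i\ge 1}}\;\prod_{i=1}^{j}l_i^{-\nu},
\]
where $\binom{m}{j}$ counts the choice of which $j$ of the $m$ coordinates are nonzero and $a_0^{\,m-j}$ comes from the remaining ones.

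For the lower bound I would simply restrict the defining sum of $S_m(n)$ to those tuples with $k_1=\cdots=k_{m-n}=0$; factoring out the corresponding $a_0^{\,m-n}$ leaves exactly $a_0^{\,m-n}S_n(n)$, so $S_m(n)\ge a_0^{\,m-n}S_n(n)$ because every discarded term is positive. Applying Lemma~\ref{lemma:variables_aggregation} with vector length $n$ (legitimate since $n\ge n$ and $n\ge 2$) gives $S_n(n)\ge A^{n-1}n^{-\nu}$, which is the claimed lower bound; when $m=n$ this reduces to Lemma~\ref{lemma:variables_aggregation} verbatim.

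For the upper bound I would bound $T_j(n)\le S_j(n)$ (dropping the nonnegative terms of $S_j(n)$ that have some $l_i=0$) and then invoke Lemma~\ref{lemma:variables_aggregation} — valid because $j\le n$, so the hypothesis $n\ge j$ holds, and $n\ge 2$ — to get $S_j(n)\le B^{j-1}n^{-\nu}$ (for $j=1$ one uses instead the trivial identity $S_1(n)=n^{-\nu}$). Substituting into the stratification identity gives
\[
S_m(n)\le n^{-\nu}\,\frac{a_0^{\,m}}{B}\sum_{j=1}^{n}\binom{m}{j}\Big(\frac{B}{a_0}\Big)^{j}.
\]
Since $B>a_0$ (from the explicit value of $B$ produced in the proof of Lemma~\ref{lemma:variables_aggregation}), one has $(B/a_0)^{j}\le (B/a_0)^{n}$ for $1\le j\le n$; pulling this factor out of the sum and applying the standard estimate $\sum_{j=0}^{n}\binom{m}{j}\le (em/n)^{n}$ (valid for $1\le n\le m$), then collecting the constants, produces exactly $a_0^{\,m-n}\big(\tfrac{me}{n}\big)^{n}B^{n-1}n^{-\nu}$.

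I do not anticipate a genuine obstacle: the only things requiring care are the combinatorial bookkeeping behind the stratification identity and checking, at each invocation, that the degree/length hypotheses of Lemma~\ref{lemma:variables_aggregation} are met. Everything else is a reindexing together with a textbook binomial-coefficient inequality.
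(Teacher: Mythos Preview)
Your proof is correct. The lower bound argument is exactly the paper's: restrict to tuples with the last $m-n$ coordinates equal to zero, factor out $a_0^{m-n}$, and invoke Lemma~\ref{lemma:variables_aggregation} with vector length $n$.

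For the upper bound you take a somewhat different (though closely related) route. The paper does not stratify by the exact support size; instead it overcounts directly: since $n\le m$, every $\kk$ with $|\kk|=n$ has support contained in \emph{some} size-$n$ subset of $[m]$, so
\[
S_m(n)\ \le\ \binom{m}{n}\,a_0^{\,m-n}\,S_n(n)\ \le\ \binom{m}{n}\,a_0^{\,m-n}\,B^{n-1}n^{-\nu}\ \le\ a_0^{\,m-n}\Big(\tfrac{me}{n}\Big)^{n}B^{n-1}n^{-\nu},
\]
using Lemma~\ref{lemma:variables_aggregation} once (with $m=n$) and then $\binom{m}{n}\le(em/n)^n$. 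Your stratification $S_m(n)=\sum_{j=1}^n\binom{m}{j}a_0^{m-j}T_j(n)$ is an exact identity and so is sharper at the intermediate stage, but you then lose this sharpness via $T_j(n)\le S_j(n)$, $(B/a_0)^j\le(B/a_0)^n$, and the partial-binomial-sum bound, landing on the same final expression. The paper's argument is a one-liner; yours requires an extra appeal to the explicit form of $B$ (to ensure $B>a_0$) and the slightly less common inequality $\sum_{j\le n}\binom{m}{j}\le(em/n)^n$, but is equally valid.
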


\begin{proof} We next prove the lemma for $2\le n\leq m$. 
\begin{align*}
    S_{m}(n)=\sum_{k_1+...+k_m=n}k_1^{-\nu}\cdot ... \cdot k_m^{-\nu} \ge a_0^{m-n}\sum_{k_1+...+k_n=n}k_1^{-\nu}\cdot ... \cdot k_n^{-\nu} \ge a_0^{m-n} A^{n-1}n^{-\nu},
\end{align*}
where the last inequality holds from Lemma~\ref{lemma:variables_aggregation} with $n=m$.

For the upper bound we have 
\begin{align*}
    S_{m}(n) &= \sum_{k_1+..+k_m=n}k_1^{-\nu} \cdot ... \cdot k_m^{-\nu}\le^{(1)} a_0^{m-n}{m\choose n} \sum_{k_1+..+k_n=n} k_1^{-\nu} \cdot ... \cdot k_n^{-\nu} \\
    &\le^{(2)} a_0^{m-n} {m\choose n} B^{n-1}n^{-\nu} \le a_0^{m-n} \left(\frac{m\cdot e }{n}\right)^{n}B^{n-1} n^{-\nu},
\end{align*}
where $^{(1)}$ considers subsets of size $n$ and sets the remaining orders $k_i$ to zero. Note that since $n\le m$ this covers all the options of satisfying the sum $k_1+..+k_m=n$ (with some repetitions). $^{(2)}$ uses the bound of Lemma \ref{lemma:variables_aggregation}.
\end{proof}

\begin{lemma}\label{lemma:factor_taylor_const}
Let $\kr^{(L)}$ be an hierarchical factorizable kernel of depth $L$ and filter size $q$, where $\kr^{(L)}(\tbf)=\sum_{\n \ge 0} b_\n \tbf^\n=c\sum_{\n \ge 0} a_{n_1}\cdot..\cdot a_{n_d} \tbf^\n$ with $a_0>0$ and $a_{n_i}= n_i^{-\nu}$ for $\nu>1$. Then,  the Taylor coefficients of  $\kr^{(L)}$ satisfy
\begin{align*}
    c_{A ,\n}\n^{-\nu}\leq  b_\n \leq c_{B ,\n}\n^{-\nu}
\end{align*} 
where 
\begin{align*}
    c_{A ,\n}   &= c_L \prod_{i=1}^d A^{\min(p^{(L)}_i,n_i)}
\end{align*}
and $c_{B,\n}=\bar c_L \prod_{i=1}^{d}c_B(p^{(L)}_i,n_i)$
\begin{align*}
    c_B(p^{(L)}_i,n_i)  &=  \begin{cases}
 \left(\frac{p^{(L)}_i\cdot e }{n_i}\right)^{n_i}B^{n_i}, & 1\leq n_i<p^{(L)}_i\\
B^{p^{(L)}_i}, & n_i\ge p^{(L)}_i
\end{cases}
\end{align*} 
 with  $B\ge A=1+\frac{1}{a_0}$ and $c_L,\bar c_L$ are constants. $p^{(L)}_i$ denotes the number of paths from pixel $j$ to the output of the corresponding equivariant network.
\end{lemma}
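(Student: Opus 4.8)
The idea is to reduce the statement to a pixel-wise product of the aggregation sums $S_m(n)=\sum_{\kk\in\N^m,\,|\kk|=n}\kk^{-\nu}$ (with the convention $0^{-\nu}=a_0$) estimated in Lemmas~\ref{lemma:variables_aggregation} and~\ref{lemma:variables_aggregation_nlem}, and then to bound each factor separately.

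First I would make the hierarchical structure explicit via Lemma~\ref{lemma:cgpk_with_stride}: under the variable substitution $S:[q^{L}]\to[d]$, the stride-$1$ kernel $\kr^{(L)}$ agrees with its stride-$q$ unrolling $\tilde\kr^{(L)}$, whose $q^{L}$ leaf variables are all distinct. Since the kernel is factorizable, $\tilde\kr^{(L)}$ is a product over its leaves of a single univariate factor $\tilde\kappa(t)=\sum_{n\ge 0}a_nt^n$ with $a_0>0$ and $a_n=n^{-\nu}$ for $n\ge 1$, so $\tilde b_{\tilde\n}=c\prod_{j=1}^{q^{L}}a_{\tilde n_j}$. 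Plugging this into the coefficient identity $b_\n=\sum_{\mathcal S}\tilde b_{\tilde\n}$ of Lemma~\ref{lemma:cgpk_with_stride} and observing that both $\prod_j a_{\tilde n_j}$ and the constraint set $\mathcal S=\{\tilde\n:\sum_{S(j)=i}\tilde n_j=n_i\ \text{for all }i\}$ decouple over pixels, one gets the key identity
\begin{equation*}
b_\n \;=\; c\prod_{i=1}^{d}\,S_{p_i^{(L)}}(n_i),
\end{equation*}
where $p_i^{(L)}=|S^{-1}(i)|$ is precisely the number of root-to-leaf paths reading pixel $i$; this is read off from the explicit form $S(j)=a_0+\dots+a_{L-1}$ for $j=\sum_l a_lq^l$ in Lemma~\ref{lemma:cgpk_with_stride}, so that $p_i^{(L)}$ equals the coefficient of $x^{i}$ in $(1+x+\dots+x^{q-1})^{L}$, the Irwin--Hall count used in Section~\ref{sec:spatial}.

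Next I would estimate each factor $S_{p_i^{(L)}}(n_i)$ in three regimes. For $n_i\ge p_i^{(L)}$, Lemma~\ref{lemma:variables_aggregation} gives $A^{p_i^{(L)}-1}n_i^{-\nu}\le S_{p_i^{(L)}}(n_i)\le B^{p_i^{(L)}-1}n_i^{-\nu}$; since $\min(p_i^{(L)},n_i)=p_i^{(L)}$ here, this lies between $A^{-1}A^{\min(p_i^{(L)},n_i)}n_i^{-\nu}$ and $B^{-1}c_B(p_i^{(L)},n_i)n_i^{-\nu}$ with $c_B(p_i^{(L)},n_i)=B^{p_i^{(L)}}$. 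For $2\le n_i<p_i^{(L)}$, Lemma~\ref{lemma:variables_aggregation_nlem} gives $a_0^{p_i^{(L)}-n_i}A^{n_i-1}n_i^{-\nu}\le S_{p_i^{(L)}}(n_i)\le a_0^{p_i^{(L)}-n_i}(p_i^{(L)}e/n_i)^{n_i}B^{n_i-1}n_i^{-\nu}$; here $\min(p_i^{(L)},n_i)=n_i$, and since the exponent $p_i^{(L)}-n_i$ lies in $\{1,\dots,q^{L}-1\}$ the prefactor $a_0^{p_i^{(L)}-n_i}$ is pinched between positive constants depending only on $a_0$ and $q^{L}$, so again the bounds reduce to $A^{\min(p_i^{(L)},n_i)}n_i^{-\nu}$ from below and $c_B(p_i^{(L)},n_i)n_i^{-\nu}$ from above, up to constants. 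The boundary cases $n_i\in\{0,1\}$, not covered by Lemma~\ref{lemma:variables_aggregation_nlem}, are handled directly: $S_m(0)=a_0^{m}$ and $S_m(1)=m\,a_0^{m-1}$, both trapped between positive constants times the asserted quantities (using $\min(p_i^{(L)},0)=0$ and the convention that a zero component contributes a factor $1$ to $\n^{-\nu}$). Taking the product over pixels and absorbing the remaining $p_i^{(L)}$-dependent constants --- each bounded in terms of $q^{L}$, hence of $L$, since $p_i^{(L)}\le q^{L}$ and $d=q(L-1)+1$ --- into $c_L$ and $\bar c_L$ yields $c_{A,\n}\n^{-\nu}\le b_\n\le c_{B,\n}\n^{-\nu}$ with the stated $c_{A,\n}$ and $c_{B,\n}$.

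The main obstacle is the first step: justifying $b_\n=c\prod_i S_{p_i^{(L)}}(n_i)$, i.e.\ that ``hierarchical and factorizable'' forces the stride-$q$ unrolling to be a genuine product of identical univariate factors and that the leaf multiplicities $|S^{-1}(i)|$ coincide with the path counts $p_i^{(L)}$ used throughout. Once that identity is available, the rest is bookkeeping --- matching the various appearances of $A=1+1/a_0$, tracking the binomial-type factor $(m\,e/n)^n$ coming from Lemma~\ref{lemma:variables_aggregation_nlem}, and handling the convention $0^{-\nu}=a_0$ --- layered on top of the elementary estimates of Lemmas~\ref{lemma:variables_aggregation}--\ref{lemma:variables_aggregation_nlem}. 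A secondary point requiring care is the case $n_i=1$, which lies outside the range $2\le n\le m$ treated in Lemma~\ref{lemma:variables_aggregation_nlem} and must be dispatched by hand.
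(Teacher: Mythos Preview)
Your proposal is correct and follows essentially the same route as the paper: unroll the stride-$1$ kernel to its stride-$q$ version via Lemma~\ref{lemma:cgpk_with_stride}, use the factorizable structure to write $\tilde b_{\tilde\n}=c\prod_j a_{\tilde n_j}$, decouple the constraint set $\mathcal S$ pixel-by-pixel to obtain $b_\n=c\prod_i S_{p_i^{(L)}}(n_i)$, and then apply Lemmas~\ref{lemma:variables_aggregation} and~\ref{lemma:variables_aggregation_nlem} to each factor. The paper's proof is terser --- it does not spell out the product identity $b_\n=c\prod_i S_{p_i^{(L)}}(n_i)$ explicitly, nor does it discuss the boundary cases $n_i\in\{0,1\}$ --- but the logic and the final constants (in particular $c_L=(1+a_0)^{-d}a_0^{\sum_i p_i^{(L)}}$) are exactly what your plan produces.
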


\begin{proof}
Since $\kr^{(L)}$ is factorizable we can use the hierarchical stride $q$ kernel $\tilde \kr ^{(L)}(\tilde \tbf)$ and write:
\begin{align*}
    \tilde \kr^{(L)}(\tilde \tbf)=\sum_{\tilde \n\ge 0}\tilde b_{\tilde \n}  \tbf^{\tilde \n}=\sum_{\tilde \n\ge 0}a_{\tilde n_1}\cdot ..\cdot a_{\tilde n_{q^L-1}}\tbf^{\tilde \n}
\end{align*}
with $a_{\tilde n_i}=\tilde n_i^{-\nu}$.
Moreover using the mapping $S$ from lemma \ref{lemma:cgpk_with_stride} we have that   $\kr^{(L)}(\tbf)=\sum_{\n\ge0}b_\n\tbf^\n$ with 
\begin{align*}
    b_\n=\sum_{\mathcal{S}}\tilde b_{\tilde \n}=c\sum_{\mathcal{S}}\tilde \n ^{-\nu}
\end{align*}
where $\mathcal{S}=\{\tilde n_1,..,\tilde n_{q^{L-1}}|\forall i=1,..,d, \sum_{i=S(j)}\tilde n_j=n_i\}$. Note that $|\{i|S(i)=j\}|=p^{(L)}_j$ where $p^{(L)}_j$ denotes the number of paths from the input pixel to the output, therefore by combining Lemma \ref{lemma:variables_aggregation} for the case of $p^{(L)}_j \ge n_j$ and Lemma \ref{lemma:variables_aggregation_nlem} for the case of $p^{(L)}_j \le n_j$ we have that 
\begin{align*}
    \tilde c_{A,\n}\n^{-\nu}\leq  b_\n 
\end{align*} 
where $c_{A,\n}=\prod_{i=1}^{d}c(p^{(L)}_i,n_i)$ and 
\begin{align*}
    c(p^{(L)}_i,n_i)  &=  \begin{cases}
a_0^{p^{(L)}_i-n_i}(1+a_0)^{n_i-1}, & n_i<p^{(L)}_i\\
(1+a_0)^{p^{(L)}_i-1}, & n_i\ge p^{(L)}_i
\end{cases}
\end{align*} 
So all in all we get 
\begin{align*}
    c(p^{(L)}_i,n_i) :=  (1+a_0)^{-1} a_0^{p_i^{(L)}}A^{\min(p^{(L)}_i,n_i)}
\end{align*}
with $A=1+\frac{1}{a_0}$. This leads to
\begin{align*}
    c_{A ,\n}   &= c_L \prod_{i=1}^d A^{\min(p^{(L)}_i,n_i)}
\end{align*}
where $A=1+\frac{1}{a_0}$ and  $c_L=(1+a_0)^{-d}\cdot a_0^{\sum_{i=1}^{d}p^{(L)}_i}$. The same set of steps using lemmas \ref{lemma:variables_aggregation} and \ref{lemma:variables_aggregation_nlem} leads to the results of $c_{B ,\n}$
\end{proof}

\begin{lemma}\label{thm:cgpk_eqnet_eig_coefs}
Let $\kr^{(L)}$ be a stride-1 hierarchical and factorizable of depth $L$ and filter size $q$, where $\kr^{(L)}(\tbf)=\sum_{\n \ge 0} b_\n \tbf^\n=c\sum_{\n \ge 0} a_{n_1}\cdot...\cdot a_{n_d} \tbf^\n$ with $a_0>0$, and $a_{n_i}= n_i^{-\nu}$ for $\nu>1$. Then, 
the eigenvalues $\lambda_\kk$ of $\kr^{(L)}$ satisfy
\begin{align*}
     \lambda_\kk \ge  c_{A ,\kk}  \prod_{\substack{i=1\\n_i>0}}^d k_i^{-(\qbar+2\nu-3)} 
    \end{align*}
\begin{align*}
    c_{A ,\kk}   &= c_L \prod_{i=1}^d A^{\min(p^{(L)}_i,k_i)},
\end{align*}
 with $A=1+\frac{1}{a_0}$ and $p^{(L)}_i$ denotes the number of paths from pixel $i$ to the output of the corresponding equivariant network.
\end{lemma}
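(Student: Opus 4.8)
The claim is the eigenvalue counterpart of Lemma~\ref{lemma:factor_taylor_const}, which already supplies the coefficient lower bound $b_\n \ge c_L\prod_{i=1}^d A^{\min(p_i^{(L)},n_i)}\,n_i^{-\nu}$ on the receptive field $\R=\{i:p_i^{(L)}\ge1\}$ (with $A=1+1/a_0$, the convention $0^{-\nu}=a_0$, and $b_\n=0$ as soon as $n_i>0$ for some $i\notin\R$). Since $\kr^{(L)}$ is written as a power series in the pixelwise inner products it is a multi-dot product kernel with non-negative coefficients, hence (Lemma~\ref{applemma:mercer}) has a Mercer decomposition with SH-product eigenfunctions, so it suffices to lower bound its eigenvalues $\lambda_\kk$. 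The plan is to compare $\kr^{(L)}$ with the explicit factorizable kernel $\kr^{\mathrm{low}}(\tbf)=\prod_{i\in\R}g_i(t_i)$, $g_i(t)=\sum_{n\ge0}A^{\min(p_i^{(L)},n)}\,n^{-\nu}\,t^n$. This $\kr^{\mathrm{low}}$ is a legitimate PSD multi-dot product kernel: each $A^{\min(p_i^{(L)},n)}\le A^{p_i^{(L)}}$ is bounded and $\nu>1$, so the univariate series converge on $[-1,1]$ and the finite product is PSD with non-negative coefficients. The coefficient bound above reads exactly $b_\n\ge c_L\cdot[\text{coefficient of }\kr^{\mathrm{low}}]$ for every $\n$, so Corollary~\ref{appcor:lower_upper_eigs} gives $\lambda_\kk(\kr^{(L)})\ge c_L\,\lambda_\kk(\kr^{\mathrm{low}})$ for all $\kk$ (both sides vanishing when $k_i>0$ for some $i\notin\R$), and since $\kr^{\mathrm{low}}$ is factorizable, $\lambda_\kk(\kr^{\mathrm{low}})=\prod_{i\in\R}\lambda_{k_i}(g_i)$ (Section~\ref{sec:factorizable}).

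The remaining task is the one-dimensional estimate $\lambda_{k}(g_i)\ge \eta_i\,A^{\min(p_i^{(L)},k)}\,k^{-(\qbar+2\nu-3)}$ for all $k\ge1$, with a constant $\eta_i>0$, and $\lambda_0(g_i)\ge\eta_i$. The coefficients of $g_i$ equal $A^{p_i^{(L)}}n^{-\nu}$ for all $n\ge p_i^{(L)}$, hence are $\Theta(n^{-\nu})$, so Lemma~\ref{lemma:from_taylor_to_eigs} (equivalently Theorem~\ref{thm:taylor_to_eigs} with $d=1$) gives $\lambda_k(g_i)=\Theta(k^{-(\qbar+2\nu-3)})$; for $k\ge p_i^{(L)}$ the factor $A^{\min(p_i^{(L)},k)}=A^{p_i^{(L)}}$ is a constant absorbed into $\eta_i$. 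For the finitely many remaining indices $1\le k<p_i^{(L)}$ each $\lambda_k(g_i)>0$ strictly --- by Lemma~\ref{lemma:taylor_to_eigs}, $\lambda_k(g_i)=|\Sphere^{\qbar-2}|\sum_{s\ge0}(\text{coef}_{k+2s})\,\lambda_k(t^{k+2s})>0$ since all coefficients of $g_i$ are positive and $\lambda_k(t^{k})>0$ --- so the finite family of ratios $\lambda_k(g_i)/\big(A^{k}k^{-(\qbar+2\nu-3)}\big)$ is bounded below, and shrinking $\eta_i$ covers them (and $k=0$, where $A^{\min(p_i^{(L)},0)}=1$). Multiplying over $i\in\R$ and using $A^{\min(0,\cdot)}=A^{\min(\cdot,0)}=1$ to extend the product to all of $[d]$ yields $\lambda_\kk(\kr^{(L)})\ge\big(c_L\prod_{i\in\R}\eta_i\big)\prod_{i=1}^dA^{\min(p_i^{(L)},k_i)}\prod_{i:k_i>0}k_i^{-(\qbar+2\nu-3)}$, which is the stated bound after renaming the leading constant $c_L$. (A shortcut: since $\prod_iA^{\min(p_i^{(L)},k_i)}\le\prod_iA^{p_i^{(L)}}$ is a constant, it in fact suffices to prove the bound with $c_{A,\kk}$ replaced by a single constant, e.g.\ by applying Theorem~\ref{thm:taylor_to_eigs} directly to $\prod_{i\in\R}\sum_n n^{-\nu}t_i^n$; the factorized argument above is needed only if one wants to exhibit the $A^{\min(\cdot)}$ form explicitly.)

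The only non-routine step is upgrading the asymptotic $\Theta(\cdot)$ statements, which a priori control $\lambda_k(g_i)$ only for large $k$, to a bound valid uniformly over \emph{all} frequency vectors $\kk$ --- in particular over $\kk$ with many small but positive components. This is handled by the strict positivity of the finitely many low-order univariate eigenvalues (Lemma~\ref{lemma:taylor_to_eigs}), together with the observation that the factor $A^{\min(p_i^{(L)},\cdot)}$ saturates to the constant $A^{p_i^{(L)}}$ and hence does not disturb the polynomial rate. Everything else is bookkeeping: checking that $\kr^{\mathrm{low}}$ is a genuine multi-dot product kernel so that Corollary~\ref{appcor:lower_upper_eigs} applies, and tracking the receptive field through the equivalence $i\in\R\iff p_i^{(L)}\ge1$.
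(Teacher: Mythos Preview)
Your proof is correct and follows essentially the same route as the paper: bound the coefficients via Lemma~\ref{lemma:factor_taylor_const}, pass to eigenvalues of the factorizable minorant (you via Corollary~\ref{appcor:lower_upper_eigs}, the paper by expanding with Lemma~\ref{lemma:taylor_to_eigs} and applying the distributive law --- the same computation), and then estimate each univariate factor. The only cosmetic difference is that the paper extracts $A^{\min(p_i,k_i)}$ in one line via the monotonicity $A^{\min(p_i,\,k_i+2s_i)}\ge A^{\min(p_i,k_i)}$ for $s_i\ge0$, which sidesteps your split into the asymptotic regime $k_i\ge p_i^{(L)}$ and the finite-case regime $k_i<p_i^{(L)}$.
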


\begin{proof}
  Using Lemma~\ref{lemma:factor_taylor_const} we have \begin{align*}
    b_{\n} \ge & c \prod_{i=1}^d A^{\min(p_i,n_i)} n_i^{-\nu}.
\end{align*}
Using Lemma \ref{lemma:taylor_to_eigs} we have
\begin{align*}
    \lambda_{\kk} =& |\Sphere^{ \qbar-2}|^d \sum_{\s \ge 0} b_{\kk+2\s} \lambda_{\kk}\left( \tbf^{\kk+2\s} \right),
\end{align*}
where we denote by $\lambda_{\kk}\left( \tbf^{\kk+2\s} \right)=\prod_{i=1}^d \lambda_{k_i} \left( t_i^{k_i+2s_i} \right)$. This implies that
\begin{align*}
    \lambda_{\kk} \ge & c|\Sphere^{ \qbar-2}|^d \sum_{\s \ge 0} \prod_{i=1}^d A^{\min(p_i,k_i+2s_i)} (k_i+2s_i)^{-\nu} \lambda_{k_i}\left( t_i^{k_i+2s_i} \right).
\end{align*}
Applying the distributive law 
\begin{align*}
    \lambda_{\kk} \ge & c|\Sphere^{ \qbar-2}|^d \prod_{i=1}^d \sum_{s_i \ge 0} A^{\min(p_i,k_i+2s_i)} (k_i+2s_i)^{-\nu} \lambda_{k_i}\left( t_i^{k_i+2s_i} \right) = \prod_{i=1}^d \lambda_{k_i}(\kr_i),
\end{align*}
where we define the kernel $\kr_i(t)$ by the power series 
\begin{align*}
    \kr_i(t)=\sum_{n_j=0}^\infty c^{1/d} A^{\min(p_i,n_j)} n_j^{-\nu} t^{n_j}.
\end{align*}
Therefore,
\begin{align*}
    \lambda_{\kk} &\ge c \prod_{i=1}^d \left(   \sum_{n_i=0}^\infty A^{\min(p_i,n_i)} n_i^{-\nu} \lambda_{k_i}\left(t_i^{n_i}\right)\right)= c \prod_{i=1}^d \left(   \sum_{s_i= 0}^\infty A^{\min(p_i,k_i+2s_i)} (k_i+2s_i)^{-\nu} \lambda_{k_i}\left(t^{k_i+2s_i}\right)\right)\\
    \ge & c \prod_{i=1}^d A^{\min(p_i,k_i) } \left( \sum_{s_i=0}^\infty  (k_i+2s_i)^{-\nu} \lambda_{k_i}\left(t^{k_i+2s_i}\right)\right).
\end{align*}
Therefore, using Theorem \ref{thm:taylor_to_eigs} we get that 
\begin{align*}
     \lambda_\kk \ge  c_{A ,\kk}  \prod_{\substack{i=1\\n_i>0}}^d k_i^{-(\qbar+2\nu-3)}  
    \end{align*}
\begin{align*}
    c_{A ,\kk}   &= c_L \prod_{i=1}^d A^{\min(p^{(L)}_i,k_i)}.
\end{align*}
\end{proof}

\section{Kernels associated with the equivariant network}

In this section we prove Theorem~\ref{thm:eqnet-eigs}
presented in Section~\ref{sec:equivariant}.

\begin{theorem} 
Let $\kr^{(L)}$ denote CGPK-EqNet of depth $L$, filter size $q$ and ReLU activation. Then,
\begin{enumerate}
    \item $\kr^{(L)}$ can be written as a power series, $\kr^{(L)}(\tbf)=\sum_{\n \ge 0} b_\n \tbf^\n$ with
    \begin{align*}
    c_1\n^{-2.5}\leq  b_\n \leq c_2\n^{-\left(1 + \frac{3}{2d}\right)}.
    \end{align*} 
    \item The eigenvalues of $\kr^{(L)}$ are bounded by
    \begin{align*}
        c_3 \prod_{i=1}^{R}  k_i^{-(\qbar+2)}\leq \lambda_\kk\leq c_4 \prod_{i=1}^{R}   k_i^{-\left(\qbar+\frac{3}{d}-1 \right)}.
    \end{align*}
The coefficients $c_1,c_2,c_3,c_4$ are constants that depend on $\sgn(\n)$, and they equal zero if $\n$ includes non-zero values outside of the receptive field of $\kr^{(L)}$.
\end{enumerate}
\end{theorem}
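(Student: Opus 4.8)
The plan is to reduce both claims to results already established for multi-dot product, factorizable, and hierarchical kernels, so that the only genuinely new input is a two-sided estimate of the power series coefficients $b_\n$ of $\kr^{(L)}$. For the structural reductions: by Lemma~\ref{lemma:multi_dot_product} the kernel $\kr^{(L)}=\tilde\Sigma_{1,1}^{(L)}$ is a multi-dot product kernel, so by Lemma~\ref{applemma:mercer} its eigenfunctions for the uniform measure on $\ms$ are the SH-products $Y_{\kk,\jj}$, and by Lemma~\ref{lemma:taylor_to_eigs} its eigenvalues $\lambda_\kk$ are a fixed non-negative linear image of the $b_\n$. Unfolding the stride-$1$ recursion of Table~\ref{tab:cgpk_cntk} with circular padding, $\kr^{(L)}$ is a stride-$1$ hierarchical kernel of depth $L$ and filter size $q$ built solely from $\kappa_1$, namely $\kr^{(1)}(\tbf)=\kappa_1(t_1)$ and $\kr^{(l)}(\tbf)=\kappa_1\bigl(\tfrac1q\sum_{r=0}^{q-1}\kr^{(l-1)}(s_r\tbf)\bigr)$ for $2\le l\le L$. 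In particular only the variables in the receptive field $\R$ of $\kr^{(L)}$ ever appear, so $b_\n=0$ when $\n$ has a nonzero coordinate outside $\R$, and then (since that $Y_{\kk,\jj}$ lies in the null space) $\lambda_\kk=0$ when $\kk$ does; and every $b_\n\ge0$ because $\kappa_1$ has non-negative Taylor coefficients, a property preserved under sums, products, and composition.

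The core step is the estimate $c_1\prod_{n_i>0}n_i^{-5/2}\le b_\n\le c_2\prod_{n_i>0}n_i^{-(1+3/(2d))}$ with constants depending only on $\sgn(\n)$. Its driving input is the singularity of $\kappa_1$ at $u=1$: using $\arccos u=\sqrt{2(1-u)}\,(1+O(1-u))$ and $\sqrt{1-u^2}=\sqrt2\,(1-u)^{1/2}(1+O(1-u))$ one finds $\kappa_1(u)=u+\tfrac{\sqrt2}{\pi}(1-u)^{3/2}+O((1-u)^{5/2})$, a dominant singularity of type $(1-u)^{3/2}$, whence by the transfer theorem (as used in the proof of Lemma~\ref{lemma:from_taylor_to_eigs}) $a^{\kappa_1}_n=\Theta(n^{-5/2})$. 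For the lower bound I would show by induction on $L$, using coordinatewise monotonicity of composition with non-negative-coefficient series (the linear term $u$ propagates the identity and the $\Theta(n^{-5/2})$ tail of $\kappa_1$ injects polynomial decay per coordinate, at the cost of an $L$-dependent constant coming from the multinomial weight of the ``diagonal'' monomial in each $\kappa_1$), that the coefficients of $\kr^{(L)}$ dominate, coordinatewise and up to a constant, those of a stride-$1$ hierarchical \emph{factorizable} kernel of the same depth and filter size with single-pixel base $\hat\kappa(t)=\hat a_0+\sum_{n\ge1}n^{-5/2}t^n$; Lemma~\ref{lemma:factor_taylor_const} applied to this surrogate with $\nu=5/2$ then gives $b_\n\ge c_L\prod_i A^{\min(p_i^{(L)},n_i)}n_i^{-5/2}$ with $A>1$, hence $b_\n\ge c_1\prod_{n_i>0}n_i^{-5/2}$. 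For the upper bound I would expand each outer $\kappa_1$ by the multivariate Faà di Bruno formula, expressing the coefficients of $\kr^{(l)}$ through Bell polynomials in $\{a^{\kappa_1}_n\}$ and in the coefficients of $\kr^{(l-1)}$ (equivalently working with the stride-$q$ unfolding of Lemma~\ref{lemma:cgpk_with_stride}), and bound the resulting sums with the aggregation lemmas (Lemmas~\ref{lemma:variables_aggregation} and~\ref{lemma:variables_aggregation_nlem}), which say that an additive convolution of $m$ sequences each decaying like $n^{-\nu}$ still decays like $n^{-\nu}$ up to an $m$-dependent constant, together with Stirling's formula for the multinomial factors produced by composition; the $[0,1]$-normalization of the kernel (so $\kappa_1(1)=1$ and every argument stays in $[-1,1]$) keeps all constants finite. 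Tracking how the total singularity budget $3/2$ of $\kappa_1$ is split among the at most $d$ active coordinates, the symmetric (dominant) split yields exponent $1+3/(2d)$ per active coordinate, uniformly in $L$ and $q$, giving $b_\n\le c_2\prod_{n_i>0}n_i^{-(1+3/(2d))}$.

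The eigenvalue bounds then follow mechanically, noting that $\nu_a=5/2$ and $\nu_b=1+3/(2d)$ are non-integer for every integer $d\ge1$, so the transfer-theorem hypotheses of Theorem~\ref{thmapp:taylor_to_eigs} hold. For the lower bound, inserting $b_\n\ge c_L\prod_i A^{\min(p_i^{(L)},n_i)}n_i^{-5/2}$ into Lemma~\ref{lemma:taylor_to_eigs} and combining with Theorem~\ref{thmapp:taylor_to_eigs} — precisely the computation carried out in Lemma~\ref{thm:cgpk_eqnet_eig_coefs} — gives $\lambda_\kk\ge c_3\prod_{k_i>0}k_i^{-(\qbar+2\cdot2.5-3)}=c_3\prod_{k_i>0}k_i^{-(\qbar+2)}$. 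For the upper bound, let $\kr^{upper}(\tbf)=\prod_{i\in\R}\bigl(\hat a_0+\sum_{n\ge1}n^{-(1+3/(2d))}t_i^n\bigr)$, a PSD factorizable multi-dot product kernel whose coefficients dominate those of $\kr^{(L)}$ up to the factor $c_2$; Corollary~\ref{appcor:lower_upper_eigs} gives $\lambda_\kk(\kr^{(L)})\le c_2\,\lambda_\kk(\kr^{upper})$, and Theorem~\ref{thmapp:taylor_to_eigs} evaluates $\lambda_\kk(\kr^{upper})=\Theta\bigl(\prod_{k_i>0}k_i^{-(\qbar+2(1+3/(2d))-3)}\bigr)=\Theta\bigl(\prod_{k_i>0}k_i^{-(\qbar+3/d-1)}\bigr)$, whence $\lambda_\kk\le c_4\prod_{k_i>0}k_i^{-(\qbar+3/d-1)}$; outside $\R$ all eigenvalues vanish as observed above.

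The main obstacle is the upper bound on the coefficients: one must show that, despite the number of paths $p_i^{(L)}$ growing rapidly with depth and the combinatorial blow-up of the multivariate Bell-polynomial expansion, no $b_\n$ decays slower than $\prod_{n_i>0}n_i^{-(1+3/(2d))}$ — equivalently, the $(1-u)^{3/2}$ singularity of $\kappa_1$ cannot be forced to concentrate in a strict subset of the active coordinates beyond what the symmetric split allows. Keeping the bookkeeping of the shifts $s_r$, the averaging factor $1/q$, and the depth-$L$ recursion compatible with the aggregation lemmas — and absorbing the unavoidable $L$-dependent constants into $c_1,\dots,c_4$ — is where the care is needed.
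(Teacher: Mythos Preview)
Your structural reductions and the passage from coefficient bounds to eigenvalue bounds are correct and match the paper. The lower bound on $b_\n$ is also fine: the paper does essentially the induction you describe (Lemma~\ref{lemma:taylor_lower_bound}), picking out a single term of the Fa\`a di Bruno expansion at each step rather than going through a full comparison with a factorizable surrogate, but either route works.

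The genuine gap is the upper bound on $b_\n$. You propose to expand $\kr^{(L)}$ via multivariate Bell polynomials, bound the resulting sums with the aggregation lemmas, and argue that the ``singularity budget $3/2$'' of $\kappa_1$ is at worst split symmetrically among the $d$ active coordinates. But Lemmas~\ref{lemma:variables_aggregation} and~\ref{lemma:variables_aggregation_nlem} control a single additive convolution of sequences with a common exponent $\nu$; they say nothing about the iterated, nested Bell-polynomial structure arising from depth-$L$ composition, and they do not single out the exponent $1+3/(2d)$. Your claim that the symmetric split is dominant is precisely the content of the upper bound, and you offer no mechanism for it beyond a heuristic. With the number of terms in the expansion growing rapidly in $L$, this route does not obviously close.

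The paper bypasses all of this with a short trick you did not find. Substituting $t_1=\cdots=t_d=u$ collapses $\kr^{(L)}(\tbf)$ to the depth-$L$ fully connected GPK $K_L^{FC}(u)$ (Lemma~\ref{lemma:CNTK_to_NTK}), whose Taylor coefficients are already known to be $\Theta(k^{-5/2})$. Uniqueness of the univariate power series then gives the single global constraint $\sum_{|\n|=k}b_\n=\Theta(k^{-5/2})$ (Lemma~\ref{lemma:sum_b_i}). From this, together with non-negativity of all $b_\n$, a box-counting argument over $\{n_i/2\le m_i\le n_i\}$ (Lemma~\ref{lemma:indices}) combined with the AM--GM inequality $(n_1+\cdots+n_d)/d\ge(n_1\cdots n_d)^{1/d}$ yields $b_\n\le c\,\n^{-(1+(\nu-1)/d)}$ directly (Lemma~\ref{lemma:b_upper_bound}), with no Bell polynomials, no multivariate singularity analysis, and no $L$- or $q$-dependent bookkeeping to manage. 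This diagonal-substitution idea is what your proposal is missing.
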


\begin{theorem} 
Let $\kr^{(L)}$ denote CNTK-EqNet of depth $L$, filter size $q$ and ReLU activation. Then,
\begin{enumerate}
    \item $\kr^{(L)}$ can be written as a power series, $\kr^{(L)}(\tbf)=\sum_{\n \ge 0} b_\n \tbf^\n$ with
    \begin{align*}
    \tilde c_1\n^{-2.5}\leq  b_\n \leq \tilde c_2\n^{-\left(1 + \frac{1}{2d}\right)}.
    \end{align*} 
    \item The eigenvalues of $\kr^{(L)}$ are bounded by
    \begin{align*}
        \sum_{j=1}^d \tilde c_3 \prod_{i=j}^{R+j}  k_i^{-(\qbar+2)}\leq \lambda_\kk\leq  \sum_{j=1}^d \tilde c_4 \prod_{i=j}^{R+j}   k_i^{-\left(\qbar+\frac{1}{d}-1 \right)}.
    \end{align*}
The coefficients $\tilde c_1,\tilde c_2,\tilde c_3,\tilde c_4$ are constants that depend on $\sgn(\n)$, and they equal zero if $\n$ includes non-zero values outside of the receptive field of $\kr^{(L)}$.
\end{enumerate}
\end{theorem}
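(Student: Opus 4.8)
The plan is to reduce both parts to estimating the Taylor coefficients $b_\n$ of $\kr^{(L)}$ --- which equals $\tilde\Sigma^{(L)}_{1,1}$ for CGPK-EqNet and $\tilde\Theta^{(L)}_{1,1}$ for CNTK-EqNet --- and to sandwich them between the coefficients of a factorizable hierarchical kernel (from below) and those of a factorizable kernel (from above); the eigenvalue bounds then follow from Corollary~\ref{appcor:lower_upper_eigs} together with Theorems~\ref{thm:taylor_to_eigs} and~\ref{thm:hierarchy}. Three facts set this up. First, by Lemma~\ref{lemma:multi_dot_product} both kernels are multi-dot product with non-negative power series coefficients, so coefficient-wise domination transfers to the eigenvalues via Corollary~\ref{appcor:lower_upper_eigs}. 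Second, by the recursion of Table~\ref{tab:cgpk_cntk}, $\tilde\Sigma^{(L)}_{1,1}$ is stride-$1$ hierarchical of depth $L$ and filter size $q$ with level-$l$ combining map $f^{(l)}(u_1,\dots,u_q)=\kappa_1\!\left(\tfrac1q\sum_r u_r\right)$, while $\tilde\Theta^{(L)}_{1,1}$ obeys the coupled recursion $\Theta^{(l)}_{i,i}=\tfrac1{2q}\sum_r\big(\kappa_0(\tfrac1q\sum_{r'}\Sigma^{(l-1)})\,\Theta^{(l-1)}_{i+r,i+r}+\Sigma^{(l)}_{i+r,i+r}\big)$. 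Third, the univariate building blocks have explicit tails: near $u=1$, $\kappa_1(u)=u+c_1(1-u)^{3/2}+o((1-u)^{3/2})$ and $\kappa_0(u)=1-c_0(1-u)^{1/2}+o((1-u)^{1/2})$ with $c_0,c_1>0$, so by singularity analysis (as in Lemma~\ref{lemma:from_taylor_to_eigs}, via \cite{flajolet2009analytic}) their Taylor coefficients are non-negative with $a^{\kappa_1}_n\sim c\,n^{-5/2}$ and $a^{\kappa_0}_n\sim c\,n^{-3/2}$, both with strictly positive constant term.

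\textbf{Lower bound.} Since all coefficients involved are non-negative, I would work with the stride-$q$ unrolling of Lemma~\ref{lemma:cgpk_with_stride}, in which pixel $i$ contributes $p^{(L)}_i$ virtual copies, and at every level discard the mixed monomials of $\left(\tfrac1q\sum_r u_r\right)^m$ while retaining a ``diagonal'' subfamily; this produces a factorizable, stride-$1$ hierarchical kernel $\kr^{\mathrm{low}}\le\kr^{(L)}$ coefficient-wise whose univariate building block still has non-negative coefficients $\asymp n^{-\nu_a}$ with $\nu_a=5/2$ and positive constant term. Theorem~\ref{thm:hierarchy} (equivalently Lemmas~\ref{lemma:factor_taylor_const} and~\ref{thm:cgpk_eqnet_eig_coefs}), applied to $\kr^{\mathrm{low}}$, then yields $b_\n\ge c_1\prod_{i\in\R,\,n_i>0}\tilde A^{\min(p^{(L)}_i,n_i)}n_i^{-5/2}$ with $\tilde A>1$, and since $\qbar+2\nu_a-3=\qbar+2$, the stated eigenvalue lower bound with the same path factor. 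For CNTK-EqNet the additive $\Sigma^{(l)}$ terms in the $\Theta$-recursion give $\tilde\Theta^{(L)}_{1,1}\ge\tfrac1{2q}\sum_r\Sigma^{(L)}_{1+r,1+r}$ coefficient-wise, and collecting the analogous $\Sigma^{(l)}$-contributions across levels $l$ exhibits a non-negative stride-$1$ hierarchical kernel carrying the CNTK-EqNet path counts, to which the same comparison ($\nu_a=5/2$) applies.

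\textbf{Upper bound.} Here the path factor is dropped and a crude factorizable majorant suffices; the point is to propagate coefficient bounds through the $L$ compositions. I would use the multivariate Fa\`a di Bruno formula (\cite{schumann2019multivariate}): the coefficient of $\tbf^\n$ in $\kappa_1(h^{(l)})$, with $h^{(l)}=\tfrac1q\sum_r\kr^{(l-1)}(s_r\tbf)$, is a finite sum over set partitions weighted by Bell polynomials in the coefficients of $h^{(l)}$, which are in turn controlled inductively by those of $\kr^{(l-1)}$. Each partition block is a convolution of non-negative, polynomially-decaying sequences, and Lemmas~\ref{lemma:variables_aggregation} and~\ref{lemma:variables_aggregation_nlem} show that such a convolution of $m$ terms at argument $n$ is $\asymp n^{-\nu}$ up to a factor geometric in $m$; together with the $q^{-m}$ damping from the $\tfrac1q$-averaging and the multinomial weights, the sum over partitions is absorbed and the composition degrades the effective per-pixel exponent by only a total of $O(1/d)$, leaving it just above the critical value $1$. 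Keeping track of the constants pins the exponent at $\nu_b=1+\tfrac3{2d}$ for CGPK-EqNet; for CNTK-EqNet the extra summand $\kappa_0(\cdot)\,\Theta^{(l-1)}$ carries a factor with the slower tail $n^{-3/2}$ of $\kappa_0$, and propagating it through the coupled recursion replaces $\tfrac3{2d}$ by $\tfrac1{2d}$, i.e.\ $\nu_b=1+\tfrac1{2d}$. Theorem~\ref{thm:taylor_to_eigs}, via Corollary~\ref{appcor:lower_upper_eigs}, then gives $\lambda_\kk\le c_4\prod_{i\in\R,\,k_i>0}k_i^{-(\qbar+2\nu_b-3)}$ (for CNTK-EqNet up to an extra sum over at most $d$ shifted receptive-field positions, absorbed into $c_4$). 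Finally, all coefficients $b_\n$ and eigenvalues $\lambda_\kk$ with $n_i>0$ (resp.\ $k_i>0$) for some $i\notin\R$ vanish, since no Taylor monomial of $\kr^{(L)}$ involves pixels outside the receptive field.

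\textbf{Main obstacle.} The crux is the upper bound: making the Fa\`a di Bruno / Bell-polynomial expansion quantitative enough to show that the geometric blow-up from summing over set partitions is \emph{exactly} compensated by the $\tfrac1q$-normalizations and the multinomial weights, so that the accumulated exponent is only $1+O(1/d)$ rather than a lossy bound, and pinning the constant to $\tfrac3{2d}$ (resp.\ $\tfrac1{2d}$). A secondary difficulty is verifying that the factorizable hierarchical kernel used in the lower bound carries exactly the path-count profile $p^{(L)}_i$ of $\kr^{(L)}$ so that Theorem~\ref{thm:hierarchy} applies verbatim --- in particular, correctly identifying for CNTK-EqNet the hierarchical sub-kernel hidden in the $\Theta$-recursion and checking that the $\dot\Sigma$-weighted branches contribute the remaining paths.
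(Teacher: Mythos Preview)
Your lower-bound plan is essentially the paper's: the paper also exploits non-negativity, picks out a factorizable sub-kernel (via Fa\`a di Bruno with the choice $\kk=(1,\dots,1)$ and disjoint $\n_1,\n_q$) to get $b_\n\ge c\,\n^{-5/2}$, and for CNTK-EqNet reduces to CGPK-EqNet by the trivial coefficient-wise inequality $b_\n(\Theta^{(L)}_{1,1})\ge\tfrac{1}{2q}\,b_\n(\Sigma^{(L)}_{1,1})$. Your version passes through the stride-$q$ unrolling and Theorem~\ref{thm:hierarchy}, which additionally captures the path factor; that is fine and slightly stronger than what the stated form of the theorem asks.

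The upper bound is where you diverge, and where there is a real gap. You propose to push the Fa\`a di Bruno/Bell-polynomial expansion quantitatively through $L$ compositions and argue that the exponent degrades by only $O(1/d)$, landing exactly at $1+\tfrac{3}{2d}$ (resp.\ $1+\tfrac{1}{2d}$). You correctly flag this as the ``main obstacle'', and indeed I do not see how this route pins the constants: the geometric blow-up from set partitions is not obviously \emph{exactly} cancelled by the $q^{-m}$ and multinomial weights, and there is no mechanism in your sketch that would produce the specific numbers $\tfrac{3}{2d}$ and $\tfrac{1}{2d}$.

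The paper's upper bound bypasses all of this with a single-line identity: set $t_1=\dots=t_d=u$. By induction on the recursion (Lemma~\ref{lemma:CNTK_to_NTK}), the equivariant kernel collapses to the \emph{fully connected} GPK/NTK of depth $L$, whose univariate Taylor coefficients satisfy $\tilde a_k\asymp k^{-\nu}$ with $\nu=2.5$ (GPK) and $\nu=1.5$ (NTK) by known results. Uniqueness of power series then gives the exact constraint $\sum_{|\n|=k}b_\n=\tilde a_k$. From this, a short box-counting argument (sum $b_\n$ over the box $\prod_i[n_i/2,n_i]$, compare with $\sum_{k=\bar n/2}^{\bar n}\tilde a_k\asymp\bar n^{-(\nu-1)}$, divide by the box volume $2^{-d}\prod_i n_i$, and apply AM--GM to $\bar n$) yields $b_\n\le C\,\n^{-\left(1+\frac{\nu-1}{d}\right)}$, i.e.\ exactly $1+\tfrac{3}{2d}$ for CGPK-EqNet and $1+\tfrac{1}{2d}$ for CNTK-EqNet. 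The sharp exponents thus come \emph{for free} from the FC kernel's univariate tail, not from any delicate bookkeeping through compositions. Your Fa\`a di Bruno program is not wrong in spirit, but it is the hard way, and as stated it does not close.
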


We begin by proving the lower bound for $b_{\n}$ of CGPK-EqNet.
\begin{lemma}\label{lemma:taylor_lower_bound}
Let $\kr^{(L)}$ be a CGPK-EqNet  of depth $L$, filter size $q$ with ReLU activation Then,
$\kr^{(L)}$ can be written as a power series, $\kr^{(L)}(\tbf)=\sum_{\n \ge 0} b_\n \tbf^\n$ with
\begin{align*}
    c_1(\sgn(\n))\n^{-\nu}\leq  b_\n,
    \end{align*} 
    where  $c_1(\sgn(\n))$ is constant if the receptive field of $\kr^{(L)}$ includes $\n$ and zero otherwise and $\nu=2.5$.
\end{lemma}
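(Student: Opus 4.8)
The plan is to compare $\kr^{(L)}=\Sigma^{(L)}_{1,1}$, coefficient by coefficient, with the \emph{stride-$q$} version of the same hierarchy, lower-bound that by a \emph{factorizable} kernel, and transfer the bound back to the stride-$1$ coefficients via Lemma~\ref{lemma:cgpk_with_stride} and Lemmas~\ref{lemma:variables_aggregation}--\ref{lemma:variables_aggregation_nlem}. Write $A\succeq B$ for two power series with nonnegative coefficients in the same variables when every coefficient of $A$ is $\ge$ the corresponding one of $B$; this order is preserved under products and under substitution into any power series with nonnegative coefficients. By Lemma~\ref{lemma:multi_dot_product}, $\kr^{(L)}$ is a multi-dot product kernel, and being built from $\kappa_1$ by compositions, averages and products it has nonnegative coefficients; its recursion $\kr^{(1)}(\tbf)=\kappa_1(t_1)$, $\kr^{(l)}(\tbf)=\kappa_1\!\bigl(\tfrac1q\sum_{r=0}^{q-1}\kr^{(l-1)}(s_r\tbf)\bigr)$ makes it a stride-$1$ hierarchical kernel. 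If $\mathrm{supp}(\n)\not\subseteq\R$ some pixel in $\n$ never enters the recursion, so $b_\n=0$; this is the last assertion. Otherwise Lemma~\ref{lemma:cgpk_with_stride} gives $b_\n=\sum_{\mathcal S}\tilde b_{\tilde\n}$, where the $\tilde b$ are the coefficients of the stride-$q$ kernel $\tilde\kr^{(L)}$ on its leaves, $S$ maps leaves to pixels with $|S^{-1}(i)|=p_i^{(L)}$, and $\mathcal S$ runs over the leaf multi-indices summing through $S$ to $\n$; by nonnegativity it suffices to lower-bound the $\tilde b_{\tilde\n}$.

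The core step is a ``merge'' lower bound. Near $u=1$, $\kappa_1(u)=u+\tfrac{2\sqrt2}{3\pi}(1-u)^{3/2}+(\text{terms of order }(1-u)^{5/2}\text{ and higher})$, so by the transfer theorem $\kappa_1$ has nonnegative Taylor coefficients with $[u^m]\kappa_1=\Theta(m^{-5/2})$ along even $m$ (the odd ones with $m\ge3$ vanish). This dominant singularity is stable under the operations we use, namely $g\mapsto g\bigl(\tfrac1q\kappa_1(\cdot)\bigr)$, so every ``outer'' univariate series produced below has nonnegative coefficients of order $m^{-5/2}$, and a genuine composition (available because $L-1\ge1$) is absolutely monotone on $[0,1)$, hence has \emph{strictly positive} coefficients, thus of order $m^{-5/2}$ for \emph{all} $m\ge1$. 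For any such $g$, with $g_m=[u^m]g$, and any power series $y_0,\dots,y_{q-1}$ in disjoint variable blocks with nonnegative coefficients, expanding $g(\tfrac1q\sum_r y_r)=\sum_m g_m q^{-m}(\sum_r y_r)^m$, using multinomial coefficients $\ge1$ and the inequality $g_{\sum_r m_r}\ge c_q\prod_{r:m_r>0}g_{m_r}$ (valid since $g_m=\Theta(m^{-5/2})$ and $\max_r m_r\ge\tfrac1q\sum_r m_r$ — the same mechanism as Lemma~\ref{lemma:variables_aggregation} with $\nu=\tfrac52$), yields the coefficient-wise bound
\begin{align*}
    g\Bigl(\tfrac1q\textstyle\sum_{r=0}^{q-1} y_r\Bigr)\ \succeq\ \hat c\,\prod_{r=0}^{q-1}g\bigl(y_r/q\bigr),\qquad \hat c=\hat c(q)>0 .
\end{align*}

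Applying this merge bound at every internal node of the stride-$q$ tree, top to bottom (each step replacing the outer series $g$ by $g(\tfrac1q\kappa_1(\cdot))$), yields a factorizable lower bound
\begin{align*}
    \tilde\kr^{(L)}\ \succeq\ C_L\!\!\prod_{\ell\ \mathrm{leaf}}\!\psi(\tilde t_\ell),\qquad C_L>0,\quad [\tilde t^m]\psi\ge c_0\,m^{-5/2}\ \ (m\ge1),\quad [\tilde t^0]\psi>0 ,
\end{align*}
with $\psi$ a genuine composition of the type above. Hence $b_\n=\sum_{\mathcal S}\tilde b_{\tilde\n}\ge C_L\sum_{\mathcal S}\prod_\ell[\tilde t_\ell^{\tilde n_\ell}]\psi=C_L\prod_{i\in\R}\bigl(\sum_{|\kk|=n_i,\ \kk\in\N^{p_i^{(L)}}}\prod_j[\tilde t^{k_j}]\psi\bigr)$. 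For $n_i>0$, Lemma~\ref{lemma:variables_aggregation} (if $n_i\ge p_i^{(L)}$) and Lemma~\ref{lemma:variables_aggregation_nlem} (if $1\le n_i<p_i^{(L)}$) bound the $i$-th factor below by $\gamma_{p_i^{(L)}}\,n_i^{-5/2}$; for $n_i=0$ it equals $([\tilde t^0]\psi)^{p_i^{(L)}}$. As $p_i^{(L)}\le|\R|$, all these constants are uniformly positive, so
\begin{align*}
    b_\n\ \ge\ c_1(\sgn(\n))\prod_{i:n_i>0}n_i^{-5/2},\qquad c_1(\sgn(\n))=C_L\!\!\prod_{i\in\R,\,n_i=0}\!\!([\tilde t^0]\psi)^{p_i^{(L)}}\!\!\prod_{i\in\R,\,n_i>0}\!\!\gamma_{p_i^{(L)}}>0 ,
\end{align*}
a quantity depending only on $\sgn(\n)$ (and $=0$ once a nonzero entry of $\n$ lies outside $\R$). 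This is the assertion with $\nu=2.5$.

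The main obstacle is the merge lower bound together with its bookkeeping: one has to check that peeling layers off the stride-$q$ tree produces univariate ``outer'' kernels with a clean two-sided $m^{-5/2}$ decay (a singularity-analysis fact about $\kappa_1$ and its compositions) and, crucially, with \emph{strictly positive} coefficients for \emph{all} $m$ — not merely even $m$, as is the case for a single $\kappa_1$ — which is precisely where the hypothesis $L-1\ge1$ (at least one convolution layer, hence a true composition) enters.
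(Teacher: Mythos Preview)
Your approach is genuinely different from the paper's. The paper argues by a short direct induction on $L$: one writes
\[
\kr^{(L)}(\tbf)=\kappa_1\!\Bigl(\tfrac1q\sum_{j=0}^{q-1}\kr^{(L-1)}(s_j\tbf)\Bigr),
\]
expands via the multivariate Fa\`a di Bruno formula, and keeps a \emph{single} nonnegative summand --- the one with outer multi-index $\kk=(1,1,\dots,1)\in\N^q$ (so $|\kk|=q$) together with a disjoint-support split $\n=\n_1+\n_q$ of the target multi-index --- obtaining in one line
\[
b_\n\ \ge\ \frac{a_q\,q!}{q^q}\,\tilde b_0^{\,q-2}\,\tilde b_{\n_1}\tilde b_{\n_q},
\]
where $\tilde b=b(\kr^{(L-1)})$, and the induction hypothesis closes immediately. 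No passage to a stride-$q$ tree, no coefficient-wise ``merge'' inequality, and no aggregation via Lemmas~\ref{lemma:variables_aggregation}--\ref{lemma:variables_aggregation_nlem} are used.

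Your route is heavier and, as you yourself flag in the last paragraph, carries a real obstruction that is not actually resolved. The merge inequality $g(\tfrac1q\sum_r y_r)\succeq\hat c\prod_r g(y_r/q)$ needs $g_{|m|}>0$ whenever $\prod_{r:m_r>0}g_{m_r}>0$. At the \emph{root} of the stride-$q$ tree the outer series is $g=\kappa_1$ itself, whose Taylor coefficients vanish at every odd order $\ge3$; for instance the coefficient of $y_0^2y_1$ on the left is $q^{-3}\binom{3}{2,1}[\kappa_1]_3=0$, while on the right it is $\hat c\,q^{-3}[\kappa_1]_2[\kappa_1]_1[\kappa_1]_0^{\,q-2}>0$. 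So the ``top-to-bottom'' peeling never gets started. Your remark that $L-1\ge1$ supplies a genuine composition is correct for the \emph{deeper} levels (there $g$ becomes $\kappa_1(\tfrac1q\kappa_1(\cdot))$ and so on, with strictly positive coefficients), but it does nothing at the root, where $g=\kappa_1$ regardless of $L$. The gap is patchable --- e.g.\ at the root keep only the single Fa\`a di Bruno term with $|\kk|=2$ and one factor from each of two branches, which is precisely the paper's move, then run your merge below --- but as written the argument is incomplete, and even once patched it is substantially more involved than the paper's direct induction.
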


\begin{proof}
We prove the lemma by induction on $L$. For $L=1$ 
\begin{align*}
    \kr^{(1)}(\tbf)=\kappa_1(t_1)=\sum_{n=0}^\infty a_nt_{1}^{n},
\end{align*}
where the equality on the right provides the power series of $\kappa_1$.
Consequently, for $\n=(n,0,..,0)$, $b_{\n}=a_{n}\sim n ^{-\nu}$, and the receptive field contains only one pixel. Therefore, $c_1(\sgn(\n))$ is constant if $\n=(n,0,..,0)$ and zero otherwise. 
For $L>1$ we denote $\kappa_1(u)=\sum_{n=0}^{\infty}a_nu^{n}$ and $g(\tbf)=\kr^{L-1}(\tbf)=\sum_{\n\geq 0}\tilde b_\n\tbf^\n$ with the induction assumption that $\tilde b_\n\geq c\n^{-\nu}$. Then we have that
\begin{align*}
    \kr^{L}(\tbf)&=\kappa_1\left(\frac{1}{q}\sum_{j=0}^{q-1}g(s_j\tbf)\right)=\sum_{n=0}^\infty \frac{a_n}{q^n}\left(\sum_{j=0}^{q-1}g(s_j\tbf)\right)^n\\
    =&\sum_{n=0}^\infty \frac{a_n}{q^n}\sum_{|\kk|=n}{n \choose \kk}\prod_{i=0}^{q-1}g^{k_i}(s_i\tbf)=^{(1)}\sum_{\kk\geq 0}\frac{a_{|\kk|}}{q^{|\kk|}}{|\kk| \choose \kk}\prod_{i=0}^{q-1}\left(\sum_{\m\geq 0}\tilde b_{s_{-i}\m}\tbf^{s_{-i}\m}\right)^{k_i}:=\sum_{\n\geq 0}b_\n \tbf^\n,
\end{align*}
where $^{(1)}$ is due to the fact that $g(s_i\tbf)=\sum_{\m\geq 0} \tilde b_{\m}(s_i\tbf)^{\m}=\sum_{\m\geq 0} \tilde b_{s_{-i}\m}\tbf^{s_{-i}\m}$. Next, using a multivariate version of the Fa\'{a} di Bruno formula (see, e.g.,~\cite{schumann2019multivariate}), we have that:
\begin{align}  \label{eq:bn}
    b_\n = \sum_{\kk\geq 0}\frac{a_{|\kk|}}{q^{|\kk|}}{|\kk| \choose \kk}\sum_{\{\n_1,...,\n_q | \sum_{i=1}^{q}k_i\n_i=\n\}} \prod_{i=0}^{q-1}\hat B_{\n_i,k_i}(..,\tilde b_{s_{-i}\m},..),
\end{align}
where $\hat B_{\n,k}(\cdot)$ denote ordinary multivariate Bell polynomials defined as
\begin{align*}
    \hat B_{\n,k}(x_{\ii_1},x_{\ii_2},...) = \sum_{\bar {\cal J}_{\n,k}}
    \frac{k!}{j_{\ii_1}!,j_{\ii_2}!...} x_{\ii_1}^{j_{\ii_1}} x_{\ii_2}^{j_{\ii_2}}... 
\end{align*}
and $\bar {\cal J}_{\n,k} = \{j_{\ii_1}+j_{\ii_2}+...=k \in \Real; ~ j_{\ii_1} \ii_1 + j_{\ii_2} \ii_2 + ... =\n \in \Real^d\}$.
Since all terms in \eqref{eq:bn} are non-negative, it suffices to choose one term to get a lower bound. Specifically, we choose $\kk=(1,1..,1) \in \Real^q$ and $\n_1,\n_q$ such that $\n_1+\n_q=\n$, $\n_1^T\n_q=0$, and $\n_i=0$ for $i\notin \{1,q\}$. Noting that $|\kk|=q$, $\hat B_{\n_1,1} = \tilde b_{\n_1}$ and $\hat B_{\n_q,1} = \tilde b_{\n_q}$, and $\hat B_{\zero,1}=b_0$, we obtain
\begin{align*}
    b_{\n} \geq \frac{a_q}{q^q}q! \, \tilde b_{0}^{q-2} \tilde b_{\n_1} \tilde b_{\n_q}=C_q \tilde b_{\n_1} \tilde b_{\n_q} \geq^{(1)} C_q c^2\n^{-\nu},
\end{align*}
where $C_q=\frac{q^q}{q!} a_q \tilde b_0^{q-2}$ and $^{(1)}$ is due to the induction hypothesis.
\end{proof}

\begin{corollary}
The bound in Lemma \ref{lemma:taylor_lower_bound} holds also for CNTK-EqNet.
\end{corollary}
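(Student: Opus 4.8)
The plan is to reduce the claim for CNTK-EqNet to the one already proved for CGPK-EqNet in Lemma~\ref{lemma:taylor_lower_bound}, by noting that the CNTK recursion contains the CGPK \emph{additively}, and that every kernel appearing in Table~\ref{tab:cgpk_cntk} has a multivariate power series in $\tbf$ with non-negative coefficients. The first step is to record this non-negativity carefully. The functions $\kappa_0(u)=\tfrac12+\tfrac1\pi\arcsin(u)$ and $\kappa_1(u)$ have non-negative Taylor coefficients about $0$, and $\Sigma^{(0)}=\Theta^{(0)}=X^TZ$ has diagonal entries $\langle\x^{(i)},\z^{(i)}\rangle=t_i$, which trivially have non-negative coefficients. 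By induction on $l$, using that products, non-negatively weighted sums, and compositions of power series with non-negative coefficients again have non-negative coefficients, every entry of $\Sigma^{(l)},\dot\Sigma^{(l)},\Theta^{(l)}$ (and of their circularly padded versions $\tilde\Sigma^{(l)},\tilde{\dot\Sigma}^{(l)},\tilde\Theta^{(l)}$) has a power series in $\tbf$ with non-negative coefficients. This is the one place where an induction over $l$ is genuinely needed, and it is already asserted in the discussion preceding Lemma~\ref{lemma:taylor_to_eigs}.

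Next I would isolate the CGPK term inside the CNTK. By the recursion in Table~\ref{tab:cgpk_cntk},
\[
\Theta^{(L)}_{1,1}(\tbf)=\frac{1}{2q}\sum_{r=0}^{q-1}\left[\tilde{\dot\Sigma}^{(L)}_{1+r,1+r}(\tbf)\,\tilde\Theta^{(L-1)}_{1+r,1+r}(\tbf)+\tilde\Sigma^{(L)}_{1+r,1+r}(\tbf)\right].
\]
Since all of these summands have non-negative coefficients (by the previous step), discarding every term except $\tfrac{1}{2q}\tilde\Sigma^{(L)}_{1,1}$, i.e.\ the $r=0$ contribution of the last summand, can only decrease the coefficients. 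Hence, coefficient-wise,
\[
b_\n\bigl(\Theta^{(L)}_{1,1}\bigr)\ \ge\ \frac{1}{2q}\,b_\n\bigl(\tilde\Sigma^{(L)}_{1,1}\bigr)\ =\ \frac{1}{2q}\,b_\n\bigl(\Sigma^{(L)}_{1,1}\bigr),
\]
the last equality because position $1$ is unaffected by circular padding, and $\Sigma^{(L)}_{1,1}$ is exactly CGPK-EqNet (Table~\ref{tab:kernels}). Applying Lemma~\ref{lemma:taylor_lower_bound} to $\Sigma^{(L)}_{1,1}$ gives $b_\n\bigl(\Sigma^{(L)}_{1,1}\bigr)\ge c_1(\sgn(\n))\,\n^{-\nu}$ with $\nu=2.5$ and $c_1(\sgn(\n))>0$ precisely when the coordinates of $\n$ lie in the receptive field. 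Combining the two inequalities yields $b_\n\bigl(\Theta^{(L)}_{1,1}\bigr)\ge \tilde c_1(\sgn(\n))\,\n^{-2.5}$ with $\tilde c_1=c_1/(2q)$, which is exactly the bound of Lemma~\ref{lemma:taylor_lower_bound} for CNTK-EqNet.

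For the vanishing statement outside the receptive field, I would observe that the recursion for $\Theta^{(l)}_{i,j}$ involves exactly the same pixel neighborhoods (shifts by $0,\dots,q-1$ at each of the $L-1$ convolutional layers, preceded by the $1\times1$ layer) as the recursion for $\Sigma^{(l)}_{i,j}$, so CNTK-EqNet and CGPK-EqNet have the identical receptive field; any $\n$ with a coordinate outside it therefore has $b_\n\bigl(\Theta^{(L)}_{1,1}\bigr)=0$. The argument is essentially bookkeeping, and the only place requiring a little care is the non-negativity of the coefficients of the intermediate kernels $\dot\Sigma^{(l)}$ and $\Theta^{(l-1)}$ that enter multiplicatively (so that dropping them is legitimate); making the receptive-field equality fully airtight would require a short companion induction showing $\Theta^{(l)}_{1,1}$ and $\Sigma^{(l)}_{1,1}$ depend on the same variables, which is routine. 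I do not anticipate any real obstacle here.
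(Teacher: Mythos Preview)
Your proposal is correct and follows essentially the same approach as the paper: both exploit the non-negativity of all power series coefficients to drop every term in the $\Theta^{(L)}_{1,1}$ recursion except the additive $\tilde\Sigma^{(L)}$ contribution, yielding $b_\n(\Theta^{(L)}_{1,1})\ge c\cdot b_\n(\Sigma^{(L)}_{1,1})$ and then invoking Lemma~\ref{lemma:taylor_lower_bound}. Your version is somewhat more explicit about the inductive non-negativity and the receptive-field coincidence, but the substance is the same.
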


\begin{proof}
Let $\kr^{(L)}$ be a CNTK-EqNet. Denote by $b_\n(\kr^{(L)})$ as the power series coefficients of $\kr^{(L)}$. Then, by definition,
\begin{align*}
    \Sigma_{i,j}^{(l)}(\x,\z) &= \kappa_1\left(\frac{1}{q} \sum_{r=0}^{q-1}  \tilde\Sigma_{i+r,j+r}^{(l-1)}(\x,\z) \right)\\
    \Theta_{i,j}^{(l)}(\x,\z) &=  \frac{1}{q}\sum_{r=0}^{q-1} \left[ \kappa_0\left( \tilde\Sigma_{i+r,j+r}^{(l-1)}(\x,\z) \right) \tilde\Theta_{i+r,j+r}^{(l-1)}(\x,\z)  + \tilde\Sigma_{i+r,j+r}^{(l)}(\x,\z)\right],
\end{align*}
Since $\kappa_0$ and $\kappa_1$ have only positive power series coefficients it holds that
$b_\n(\kr^{(L)})=b_\n(\Theta_{i,i}^{(L)})\geq \frac{c_\sigma}{q}b_\n(\tilde\Sigma_{i,i}^{(L)})$. Note that $\tilde\Sigma_{i,i}^{(L)}$ is the CGPK-EqNet of $L$ layers and therefore we can apply the lower bound of Lemma \ref{lemma:taylor_lower_bound} to get $b_\n(\kr^{(L)})\geq \frac{c_\sigma}{q} c_1(\sgn(\n))\n^{-v}$.
\end{proof}

Next we give a general upper bound. We will use the following lemma:
To prove the above lemma we will use the following supporting lemma
\begin{lemma}\label{lemma:CNTK_to_NTK}
Let $\kr^{(L)}(\tbf)$ be either CGPK-EqNet or CNTK-EqNet of depth $L$ with filter size $q$. Let $K^{FC}_L(u)$ be a fully connected kernel (NTK or GPK receptively) of one variable $u$. Then, plugging $t_1=t_2..=t_i=u$ to $\kr^{(L)}(\tbf)$ gives that $\kr^{(L)}(\tbf)=K^{FC}_L(u)$, where $K^{FC}_L(u)$ denotes the corresponding CGPK or CNTK kernel of depth $L$ for a fully connected network.
\end{lemma}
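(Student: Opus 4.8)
The plan is to prove the identity by a short induction on the layer index, using the recursions of Table~\ref{tab:cgpk_cntk} together with the observation that once we set all the variables $t_1=\dots=t_d=u$, every diagonal entry of the intermediate kernel matrices becomes the \emph{same} scalar, so the convolutional recursions collapse onto the scalar recursions that define the fully connected GPK/NTK.

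First I would write the two recursions side by side. For a depth-$L$ fully connected ReLU network on unit-norm inputs, the GPK and NTK are the scalar sequences $\Sigma^{FC,(0)}(u)=u$, $\Sigma^{FC,(l)}(u)=\kappa_1(\Sigma^{FC,(l-1)}(u))$, $\dot\Sigma^{FC,(l)}(u)=\kappa_0(\Sigma^{FC,(l-1)}(u))$, $\Theta^{FC,(0)}(u)=u$, $\Theta^{FC,(l)}(u)=\dot\Sigma^{FC,(l)}(u)\,\Theta^{FC,(l-1)}(u)+\Sigma^{FC,(l)}(u)$, and $K^{FC}_L$ is $\Sigma^{FC,(L)}$ (GPK) or $\Theta^{FC,(L)}$ (NTK); I would fix the multiplicative normalization of $K^{FC}_L$ so that it agrees with the convolutional formulas at depth $1$. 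On the convolutional side the only objects that matter are the diagonal entries $\Sigma^{(l)}_{i,i}$, $\dot\Sigma^{(l)}_{i,i}$, $\Theta^{(l)}_{i,i}$, since $\kr^{(L)}$ equals $\Sigma^{(L)}_{1,1}$ (CGPK-EqNet) or $\Theta^{(L)}_{1,1}$ (CNTK-EqNet), and their recursive definition only ever refers to diagonal entries (Lemma~\ref{lemma:multi_dot_product}).

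Then I would fix $u\in[-1,1]$, pick any $\x,\z\in\ms$ with $\langle\x^{(i)},\z^{(i)}\rangle=u$ for every $i\in[d]$ (so $\tbf=(u,\dots,u)$), and prove by induction on $l$ that $\Sigma^{(l)}_{i,i}(\x,\z)=\Sigma^{FC,(l)}(u)$, $\dot\Sigma^{(l)}_{i,i}(\x,\z)=\dot\Sigma^{FC,(l)}(u)$ and $\Theta^{(l)}_{i,i}(\x,\z)=\Theta^{FC,(l)}(u)$ for all $i\in[d]$. The cases $l=0,1$ are immediate since $\Sigma^{(0)}_{i,i}=\langle\x^{(i)},\z^{(i)}\rangle=u$ and $\Theta^{(0)}_{i,i}=u$. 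For the inductive step the two points to check are: (i) circular padding is harmless on the diagonal, because $\tilde\Sigma^{(l)}_{i+r,i+r}$ is still a diagonal entry of $\Sigma^{(l)}$ (at the pixel index $i+r$ reduced modulo $d$), hence equals $\Sigma^{FC,(l)}(u)$ by the hypothesis, and similarly for $\tilde{\dot\Sigma}$ and $\tilde\Theta$; and (ii) the averages $\frac{1}{q}\sum_{r=0}^{q-1}(\cdot)$ (resp.\ $\frac{1}{2q}\sum_{r=0}^{q-1}(\cdot)$) in Table~\ref{tab:cgpk_cntk} are averages of $q$ identical terms and therefore equal that common term. Substituting these into the convolutional recursions reproduces verbatim the scalar FC recursions for $\Sigma^{(l+1)}_{i,i}$, $\dot\Sigma^{(l+1)}_{i,i}$ and $\Theta^{(l)}_{i,i}$. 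Evaluating at $l=L$, $i=1$ then yields $\kr^{(L)}(\tbf)=\Sigma^{(L)}_{1,1}=\Sigma^{FC,(L)}(u)=K^{FC}_L(u)$ for CGPK-EqNet, and likewise with $\Theta$ in place of $\Sigma$ for CNTK-EqNet.

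I do not expect a genuine obstacle: the entire argument is an equality between a single scalar and the average of $q$ copies of itself, and neither the convolution structure nor the receptive field ever intervenes. The only thing requiring care is the bookkeeping of the multiplicative prefactors (the $\frac{1}{q}$ versus $\frac{1}{2q}$, and any $c_\sigma$ factors), i.e.\ pinning down which normalization of the fully connected kernel is ``the corresponding one''; once that is fixed to match Table~\ref{tab:cgpk_cntk} at depth $1$, the induction carries it forward automatically. Intuitively the statement holds because, when all $\langle\x^{(i)},\z^{(i)}\rangle$ are equal to $u$, the pair $(\x,\z)$ is indistinguishable under the kernel from a pair of points on $\Sphere^{D-1}$ with cosine $u$, and the convolutional weight sharing contributes nothing beyond the fully connected computation on such inputs.
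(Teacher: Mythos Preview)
Your proposal is correct and follows essentially the same approach as the paper: induction on the depth, with the key observation that once all $t_i=u$ the diagonal entries at each layer are identical, so the convolutional average $\frac{1}{q}\sum_{r=0}^{q-1}(\cdot)$ collapses to a single term and the recursion reduces to the scalar FC recursion. The paper's proof is terser (it treats only the CGPK case explicitly and says the CNTK case is similar), whereas you carry $\Sigma$, $\dot\Sigma$ and $\Theta$ through the induction simultaneously and note the circular-padding and normalization bookkeeping, but the argument is the same.
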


\begin{proof}
We prove the lemma for CGPK. The proof for CNTK is similar. We perform induction on $L$ . For $L=1$ the claim is trivial. For $L>1$  plugging $t_1=...=t_i=u$ to $\kr^{(L)}(\tbf)$ together with the induction hypothesis gives us
\begin{align*}
    \kr^{(L)}(\tbf)&=\kappa_1\left(\frac{c_\sigma}{q}\sum_{j=0}^{q-1} \kr^{(L-1)}(s_j\tbf)\right)\\
    &=\kappa_1\left(\frac{c_\sigma}{q}\sum_{j=0}^{q-1} K^{FC}_{L-1}(u)\right)=\kappa_1(K^{FC}_{L-1}(u))=K^{FC}_L(u).
\end{align*}
\end{proof}

\begin{lemma}\label{lemma:sum_b_i}
Let $\kr^{(L)}$ be either CNTK-EqNet or CGPK-EqNet of depth $L$ with filter size $q$ and ReLU activation. Then,
$\kr^{(L)}$ can be written as a power series, $\kr^{(L)}(\tbf)=\sum_{\n \ge 0} b_\n \tbf^\n$, with, $\sum_{|\n|=k}b_{\n} = \Theta(a_k)$ where $a_k=k^{-\nu}$ with $\nu=2.5$ for CPGK and $\nu=1.5$ for CNTK.
\end{lemma}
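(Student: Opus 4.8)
The plan is to collapse the multivariate kernel to a univariate one via Lemma~\ref{lemma:CNTK_to_NTK} and then read off the Taylor-coefficient decay of the associated fully connected kernel by singularity analysis. Writing $\kr^{(L)}(\tbf)=\sum_{\n\ge0}b_\n\tbf^\n$ and substituting $t_1=\dots=t_d=u$ gives
\[
\kr^{(L)}(u,\dots,u)=\sum_{\n\ge0}b_\n u^{|\n|}=\sum_{k\ge0}\Bigl(\sum_{|\n|=k}b_\n\Bigr)u^k ,
\]
so by Lemma~\ref{lemma:CNTK_to_NTK} the quantity $\sum_{|\n|=k}b_\n$ equals $[u^k]K^{FC}_L(u)$, the $k$-th Taylor coefficient of the depth-$L$ fully connected GPK (for CGPK-EqNet) or NTK (for CNTK-EqNet). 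It thus suffices to prove $[u^k]K^{FC}_L(u)=\Theta(k^{-5/2})$ in the GPK case and $=\Theta(k^{-3/2})$ in the NTK case.

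First I would record the local behaviour at $u=1$. With $\epsilon=1-u$ and $\arccos(1-\epsilon)=\sqrt{2\epsilon}\,(1+\epsilon/12+\dots)$ one gets $\kappa_0(u)=1-\frac{\sqrt2}{\pi}(1-u)^{1/2}+O(1-u)$ and $\kappa_1(u)=1-(1-u)+\frac{2\sqrt2}{3\pi}(1-u)^{3/2}+O((1-u)^{5/2})$; in particular $\kappa_1$ carries no $(1-u)^{1/2}$ term, and both functions continue analytically to a $\Delta$-domain at $1$ (their only singularities are the branch cuts of $\arccos$ and $\sqrt{1-u^2}$ along $[1,\infty)$). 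For the GPK, $\Sigma^{(L)}=\kappa_1^{\circ L}$, and an induction on $l$ shows $\Sigma^{(l)}(u)=1-(1-u)+\gamma_l(1-u)^{3/2}+(\text{strictly less singular})$ with $\gamma_l=l\cdot\frac{2\sqrt2}{3\pi}>0$: since $1-\Sigma^{(l-1)}(u)=(1-u)+O((1-u)^{3/2})$ has no $(1-u)^{1/2}$ term, $(1-\Sigma^{(l-1)})^{3/2}=(1-u)^{3/2}+O((1-u)^2)$, so $\kappa_1(\Sigma^{(l-1)})$ gains exactly one further copy of $\frac{2\sqrt2}{3\pi}(1-u)^{3/2}$. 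For the NTK, writing the specialised recursion as $\Theta^{(l)}=\frac12\bigl(\dot\Sigma^{(l)}\Theta^{(l-1)}+\Sigma^{(l)}\bigr)$ with $\dot\Sigma^{(l)}=\kappa_0(\Sigma^{(l-1)})=1-\frac{\sqrt2}{\pi}(1-u)^{1/2}+O(1-u)$, an induction on $l$ shows $\Theta^{(l)}(u)=\theta_l-\beta_l(1-u)^{1/2}+(\text{terms of order}\ge(1-u))$ with $\theta_l>0$ and $\beta_l=\frac12\bigl(\beta_{l-1}+\frac{\sqrt2}{\pi}\theta_{l-1}\bigr)>0$ (base case $\Theta^{(1)}=\frac12\bigl(u\,\kappa_0(u)+\kappa_1(u)\bigr)$ gives $\beta_1=\frac{\sqrt2}{2\pi}>0$, and $\Sigma^{(l)}$ never contributes a $(1-u)^{1/2}$ term). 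At each step the composed, multiplied and summed functions stay analytic in a (possibly smaller) $\Delta$-domain at $1$.

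Then I would invoke the transfer theorem of singularity analysis (\cite{flajolet2009analytic}, Thm.~VI.1 together with the transfer lemma): if $F$ is analytic in a $\Delta$-domain at $1$ and $F(u)=P(u)+c(1-u)^{\alpha}+o((1-u)^{\alpha})$ as $u\to1$ with $P$ a polynomial and $\alpha>0$ non-integer, then $[u^k]F\sim\frac{c}{\Gamma(-\alpha)}k^{-\alpha-1}$ (the polynomial contributes $0$ for $k>\deg P$ and the remainder contributes $o(k^{-\alpha-1})$). Taking $\alpha=3/2$ gives $[u^k]\Sigma^{(L)}(u)\sim\frac{\gamma_L}{\Gamma(-3/2)}k^{-5/2}=\Theta(k^{-5/2})$ (a positive constant, as it must be since $b_\n\ge0$), and taking $\alpha=1/2$ gives $[u^k]\Theta^{(L)}(u)\sim\frac{-\beta_L}{\Gamma(-1/2)}k^{-3/2}=\Theta(k^{-3/2})$. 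This yields $\nu=2.5$ for CGPK-EqNet and $\nu=1.5$ for CNTK-EqNet. Alternatively one may simply quote the known spectra of fully connected kernels---the eigenvalues of the depth-$L$ FC-NTK, resp.\ FC-GPK, on $\Sphere^{\qbar-1}$ decay as $k^{-\qbar}$, resp.\ $k^{-\qbar-2}$---which through Lemma~\ref{lemma:from_taylor_to_eigs} correspond to $\nu=3/2$, resp.\ $\nu=5/2$.

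The main obstacle I anticipate is the bookkeeping in the two inductions: one must guarantee that the candidate leading singular term---$(1-u)^{3/2}$ for the GPK and $(1-u)^{1/2}$ for the NTK---survives with a strictly nonzero coefficient through every composition, product and sum (no cancellation), which is exactly why it is essential that $\kappa_1$ and all $\Sigma^{(l)}$ lack a $(1-u)^{1/2}$ term and why the relevant constants form monotone positive sequences $\gamma_l,\beta_l$; and one must check that the analytic continuations and the error estimates persist on a full $\Delta$-domain, not merely on the real interval, so that the transfer theorem genuinely applies. Everything else is elementary power-series algebra.
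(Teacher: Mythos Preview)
Your reduction to the univariate fully connected kernel via Lemma~\ref{lemma:CNTK_to_NTK} and the identification $\sum_{|\n|=k}b_\n=[u^k]K_L^{FC}(u)$ is exactly the paper's argument. The only difference is in establishing $[u^k]K_L^{FC}(u)=\Theta(k^{-\nu})$: the paper simply cites \cite{chen2020deep} (Theorem~8) for this fact, whereas you reconstruct it directly via singularity analysis of $\kappa_0,\kappa_1$ at $u=1$ and the Flajolet--Odlyzko transfer theorem. Your direct route is essentially how such results are proved, and the inductive bookkeeping you flag (non-cancellation of the leading singular coefficient and persistence of the $\Delta$-domain under composition) is indeed the real work; it buys self-containment at the price of that verification, while the citation buys brevity.

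One small caveat on your closing alternative: inferring $\nu$ from the known eigenvalue decay of FC kernels ``through Lemma~\ref{lemma:from_taylor_to_eigs}'' is not quite legitimate as stated, since that lemma only goes from Taylor-coefficient decay to eigenvalue decay, not back; you would need a converse to run it in that direction.
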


\begin{proof} Let $\kappa(t)=\sum_{n=0}^\infty a_nt^n$. Using results by \cite{ chen2020deep} (Theorem 8) we have that $K_L^{FC}(t)=\sum_{n=0}^\infty \tilde a_nt^n$ where $K_L^{FC}(t)$ is the NTK or GPK model for a FC network and $\tilde a_n = \Theta(n^{-\nu})$ for $\nu=2.5,\nu=1.5$ for GPK and NTK respectively. Moreover, we have that 
\begin{align*}
    \kr^{(L)}(\tbf)=\sum_{\n \ge 0} b_\n \tbf^\n.
\end{align*} 
This, together with Lemma \ref{lemma:CNTK_to_NTK} and plugging $t_1=t_2=..=t_l=u$, yields 
\begin{align*}
    \kr^{(L)}(\tbf)=\sum_{\n}b_{\n}\tbf^{\n}=\sum_{\n}b_{\n}u^{|n|}=\sum_{k=0}^{\infty }u^{k}\sum_{|\n|=k}b_{\n}.
\end{align*}
The uniqueness of power series further implies 
\begin{align*}
    \sum_{|\n|=k}b_{\n}=\tilde a_k=\Theta(n^{-\nu}),
\end{align*}
which concludes the proof.
\end{proof}

Next we upper bound $b_{\n}$ (Lemma \ref{lemma:b_upper_bound}). We begin with a simple supporting lemma
\begin{lemma}\label{lemma:indices}
For any $d \geq 1$ positive (even) numbers $c_1,..,c_d\geq 1$, denote the two set of indices
\begin{align*}
    I_1&=\{(i_1,..,i_d)\in \mathbb{N_+}\times..\times\mathbb{N_+}| c_k/2\leq i_k \leq c_k\}\\
    I_2&=\{(i_1,..,i_d)\in \mathbb{N_+}\times..\times\mathbb{N_+}| (i_1+..+i_d)\in [c_1/2+..+c_d/2,c_1+..+c_d] \}.
\end{align*}
Then $I_1\subseteq I_2$.
\end{lemma}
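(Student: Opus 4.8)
The plan is to prove the set inclusion directly by taking an arbitrary element of $I_1$ and verifying that it satisfies the defining condition of $I_2$. This is purely an exercise in summing coordinatewise inequalities, so the argument will be short.

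First I would fix a tuple $(i_1,\ldots,i_d) \in I_1$. By definition of $I_1$, for every $k \in [d]$ we have the two inequalities $c_k/2 \le i_k$ and $i_k \le c_k$, and each $i_k$ is a positive integer. Summing the left inequalities over $k = 1,\ldots,d$ gives $\sum_{k=1}^d c_k/2 \le \sum_{k=1}^d i_k$, and summing the right inequalities gives $\sum_{k=1}^d i_k \le \sum_{k=1}^d c_k$. Combining, $\sum_{k=1}^d i_k \in [\,c_1/2+\cdots+c_d/2,\ c_1+\cdots+c_d\,]$, which together with $(i_1,\ldots,i_d) \in \mathbb{N}_+ \times \cdots \times \mathbb{N}_+$ is exactly the condition defining $I_2$. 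Hence $(i_1,\ldots,i_d) \in I_2$, and since the tuple was arbitrary, $I_1 \subseteq I_2$.

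There is essentially no obstacle here: the only thing to be careful about is that the hypothesis that the $c_k$ are even (and $\ge 1$) is what guarantees the endpoints $c_k/2$ are meaningful integers so that $I_1$ is nonempty and the interval in $I_2$ has integer endpoints, but the inclusion itself does not even require this — it follows from monotonicity of summation alone. I would state the proof in two or three sentences and move on; the lemma is a bookkeeping step used later (e.g.\ in the upper bound argument of Lemma~\ref{lemma:b_upper_bound}) to pass from a box of multi-indices to a sublevel set of the total degree.
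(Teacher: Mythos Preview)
Your proof is correct and follows exactly the same direct argument as the paper: pick an arbitrary tuple in $I_1$, sum the coordinatewise bounds $c_k/2 \le i_k \le c_k$, and conclude membership in $I_2$. The paper's version is terser but identical in substance.
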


\begin{proof}
Let $(i_1,..,i_d)\in I_1$. Then,
\begin{align*}
    c_1/2+..+c_d/2\leq i_1+..+i_d \leq c_1+..+c_d,
\end{align*}
implying that $(i_1,..,i_d)\in I_2$.
\end{proof}

\begin{lemma}\label{lemma:b_upper_bound}
Let $\kr(\tbf)=\sum_{\n \ge 0} b_\n \tbf^\n$  such that $\sum_{|\n|=n} b_{\n} = a_n \sim n^{-\nu}$ with $\nu > 1$. Then, there exists $c>0$ such that $b_{\n} \leq c \n^{-\left(\frac{\nu-1}{d}+1\right)}$. The implication for CNTK-EqNet ($\nu=1.5$) and for CGPK-EqNet ($\nu=2.5$) can appear in a separate lemma.
\end{lemma}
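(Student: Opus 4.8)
The plan is a box‑averaging argument that upgrades the elementary estimate $b_\n\le a_{|\n|}$ to the sharp product bound. I use throughout that the CGPKs and CNTKs are PSD with nonnegative power‑series coefficients (obtained by closing $\kappa_0,\kappa_1$, which have nonnegative coefficients, under scaling, addition, multiplication and composition), so $b_\n\ge 0$; that $\sum_{|\m|=k}b_\m=a_k\sim k^{-\nu}$ with $\nu>1$ (Lemma~\ref{lemma:sum_b_i}), whence $\sum_{k\ge N}a_k=O(N^{1-\nu})$ (this is exactly where $\nu>1$ is used) and, trivially, $b_\n\le a_{|\n|}$; and that $b_\n=0$ whenever $\n$ has a nonzero entry outside the receptive field $\R$, so we may assume $\{i:n_i>0\}\subseteq\R$. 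Write $P=\prod_{i:n_i>0}n_i$, $n=|\n|$, $d'=|\{i:n_i>0\}|\le d$.

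\emph{Step 1 (the box).} Associate to $\n$ the box $B(\n)$ of all multi‑indices $\m$ with $m_i\in\{\lceil n_i/2\rceil,\dots,n_i\}$ for $n_i>0$ and $m_i=0$ otherwise. Then $|B(\n)|=\prod_{i:n_i>0}(\lfloor n_i/2\rfloor+1)\ge 2^{-d}P$, and Lemma~\ref{lemma:indices} gives $|\m|\in[\lceil n/2\rceil,n]$ for every $\m\in B(\n)$, so, using $b_\m\ge 0$,
\[
\sum_{\m\in B(\n)}b_\m\ \le\ \sum_{k=\lceil n/2\rceil}^{n}a_k\ \le\ \sum_{k\ge\lceil n/2\rceil}a_k\ =\ O\!\big(n^{1-\nu}\big).
\]

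\emph{Step 2 (from the box sum to $b_\n$).} The goal is $b_\n\le C\,P^{-1}\sum_{\m\in B(\n)}b_\m$ for an absolute constant $C$, which with Step~1 yields $b_\n=O\!\big(P^{-1}n^{1-\nu}\big)$. Since $\n$ is the coordinatewise‑maximal corner of $B(\n)$, this reduces to a comparability property of the coefficients: there are $N_0$ and $C$ with $b_{\m'}\le C\,b_\m$ whenever $N_0\le\m\le\m'$ coordinatewise. I expect proving this to be the main obstacle. The natural route is induction on $L$ through the multivariate Fa\'{a} di Bruno / Bell‑polynomial expansion of $b_\n$ written in the proof of Lemma~\ref{lemma:taylor_lower_bound}, exploiting that $\kappa_0,\kappa_1$ have nonnegative, eventually monotone coefficients with comparable consecutive ratios and that this feature is stable (up to constants) under the scaling, shifted‑sum and composition steps that build $\kr^{(L)}$; the finitely many $\n$ with a bounded coordinate are covered by $b_\n\le a_{|\n|}$ and absorbed into the constant. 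Note that Step~1 by itself only gives $b_\n\le\sum_{\m\in B(\n)}b_\m=O(n^{1-\nu})$, which for $\n=(n,\dots,n)$ is weaker than the target by the factor $P\asymp n^{d}$; recovering it is precisely what Step~2's near‑monotonicity buys, and it is the one place the argument must use the hierarchical build‑up of the kernel rather than only its row sums $a_k$.

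\emph{Step 3 (AM--GM finish).} By AM--GM on the positive entries of $\n$, $P^{1/d}\le P^{1/d'}\le |\n|/d'\le n$, so, since $1-\nu<0$, $n^{1-\nu}\le\big(P^{1/d}\big)^{1-\nu}=P^{-(\nu-1)/d}$. Substituting into Step~2,
\[
b_\n\ =\ O\!\big(P^{-1}P^{-(\nu-1)/d}\big)\ =\ O\!\big(P^{-\left(1+\frac{\nu-1}{d}\right)}\big)\ =\ O\!\Big(\textstyle\prod_{i:n_i>0}n_i^{-\left(\frac{\nu-1}{d}+1\right)}\Big),
\]
which is the assertion. Specializing $\nu=2.5$ gives exponent $1+\tfrac{3}{2d}$ (CGPK-EqNet) and $\nu=1.5$ gives $1+\tfrac{1}{2d}$ (CNTK-EqNet); combined with the matching lower bound of Lemma~\ref{lemma:taylor_lower_bound}, with Lemma~\ref{lemma:taylor_to_eigs} and with Corollary~\ref{appcor:lower_upper_eigs}, this delivers the power‑series and eigenvalue bounds of Theorem~\ref{thm:eqnet-eigs}.
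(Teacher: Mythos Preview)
Your box-averaging plus AM--GM argument is exactly the paper's approach: it defines the same box $I_1=\{\m:\lceil n_i/2\rceil\le m_i\le n_i\}$, uses Lemma~\ref{lemma:indices} to embed it in the level sets $\lceil \bar n/2\rceil\le|\m|\le \bar n$, bounds $\sum_{I_1}b_\m$ by $\sum_{k=\bar n/2}^{\bar n}a_k=O(\bar n^{1-\nu})$, divides by $|I_1|\asymp P$, and finishes with AM--GM. The one place you flag as the main obstacle---Step~2, the comparability $b_\n\le C\,b_\m$ for $\m\in B(\n)$---is precisely the point the paper handles by the bare assertion ``the smallest element in the sum is $\min_{\n\in I_1}\{b_\n\}=b_{n_1,\dots,n_d}$,'' without further justification; so your instinct that this monotonicity is the genuine content, and needs the hierarchical structure (not merely the row sums $a_k$) to establish, is well placed, and your write-up is in fact more careful than the paper's at this step.
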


\begin{proof}
 
Let $n_1,..,n_d \gg 1$ be large enough and denote by $  \bar n=\sum_{j=1}^d n_j$. Denote by $a_k=c\cdot k^{-\nu}$. By Lemma \ref{lemma:sum_b_i} we have that $\sum_{|\n|=k}b_{\n}\leq C a_k$ . Therefore,
\begin{align*}
    \sum_{k=\bar n/2}^{\bar n} \left(\sum_{|\n|=k}b_{\n}\right) = \sum_{|\n|=\bar n/2}^{\bar n}b_{\n} \leq  C \sum_{k=\bar n/2}^{\bar n}a_k.
\end{align*}
Here we can estimate the RHS using an integral and get  
\begin{align*}
    \sum_{k=\bar n/2}^{\bar n}a_k &\approx \int_{\bar n/2}^{\bar n} \frac{1}{x^{\nu}}dx 
    =(\nu-1)(2^{(\nu-1)}-1)\bar n^{-(\nu-1)}.
\end{align*}
on the other hand, by denoting 
\begin{align*}
    I_1&=\{\n\in \mathbb{N_+}\times..\times\mathbb{N_+}| n_j/2\leq n_j \leq n_j\}\\
    I_2&=\{\n\in \mathbb{N_+}\times..\times\mathbb{N_+}| |\n|\in [n_1/2+..+n_d/2,n_1+..+n_d]\},
\end{align*}
by Lemma \ref{lemma:indices} and because $b_{\n}\geq 0$ we have that
\begin{align*}
    \sum_{\n\in I_1}b_{\n}\leq \sum_{\n\in I_2}b_{\n}=\sum_{|\n|=\bar n/2}^{|\n|=\bar n}b_{\n}\leq  C \sum_{k=\bar n/2}^{k=\bar n}a_k
\end{align*}
Moreover $|I_1|=\frac{1}{2^d} n_1\cdot ... \cdot n_d\cdot$  and the smallest element in the sum is $\min_{\n\in I_1}\{b_{\n}\}=b_{n_1,..,n_d}$. Therefore,
\begin{align*}
    \frac{1}{2^d}n_1\cdot ..\cdot n_d b_{n_1,..,n_d}\leq\sum_{\n\in I_1}b_{\n}\leq (\nu-1)(2^{(\nu-1)}-1)\bar n^{-(\nu-1)},
\end{align*}
implying that
\begin{align*}
b_{n_1,..,n_d} \leq \frac{(\nu-1)(2^{(\nu-1)}-1)(n_1+..+n_d)^{-(\nu-1)}}{\frac{1}{2^d}(n_1\cdot ..\cdot n_d)}.
\end{align*}
Now applying the inequality of means we obtain $(n_1+..+n_d)/d\geq (n_1\cdot...\cdot n_d)^\frac{1}{d}$, and we finally get that
\begin{align*}
    b_{n_1,..,n_d} \leq d 2^d(\nu-1)(2^{(\nu-1)}-1)(n_1\cdot ..\cdot n_d)^{-\left(\frac{\nu-1}{d}+1\right)}.
\end{align*}
\end{proof}

\section{Trace and GAP kernels}

In this section we prove results presented in Section~\ref{sec:trace}. We prove Theorem \ref{thm:GAP}.

\begin{theorem}
Let $\kr$ be a multi-dot-product kernel with Mercer's decomposition as in \eqref{eq:mercer}, and let $\krtrace$ and $\krgap$ respectively be its trace and GAP versions. Then,
\begin{enumerate}
    \item $\krtrace(\x,\z)=\sum_{\kk,\jj} \lambtr_{\kk}Y_{\kk,\jj}(\x)Y_{\kk,\jj}(\z)$ with
    \begin{align}
        \lambtr_{\kk} = \frac{1}{d}\sum_{i=0}^{d-1} \lambda_{s_i\kk} 
    \end{align}
    Where $\lambda_{\kk}$ denote the eigenvalues of $\kr$.
    \item $\krgap(\x,\z)=\sum_{\kk,\jj} \lambtr_{\kk} \tilde Y_{\kk,\jj}(\x) \tilde Y_{\kk,\jj}(\z)$ with 
    \begin{align*}
        \tilde Y_{\kk,\jj}(\x) = \frac{1}{\sqrt{d}} \sum_{i=0}^{{\dbar-1}}Y_{s_i\kk,s_i\jj}(\x).
    \end{align*}
\end{enumerate}
\end{theorem}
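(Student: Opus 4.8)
The plan is to substitute the Mercer decomposition \eqref{eq:mercer} of $\kr$ into the definitions of $\krtrace$ and $\krgap$ and then re-index. The structural fact I need first is how the SH-product eigenfunctions transform under a cyclic pixel shift: since $(s_i\x)^{(r)}=\x^{(r+i)}$ (indices mod $d$) and $Y_{\kk,\jj}(\x)=\prod_{r=1}^{d}Y_{k_rj_r}(\x^{(r)})$, a change of the product index gives $Y_{\kk,\jj}(s_i\x)=Y_{s_{-i}\kk,\,s_{-i}\jj}(\x)$. In particular $\tilde Y_{\kk,\jj}$ is invariant under a simultaneous cyclic shift of the index pair, $\tilde Y_{\kk,\jj}=\tilde Y_{s_i\kk,\,s_i\jj}$, and (using Lemma~\ref{lemma:orth_multi}) each distinct $\tilde Y_{\kk,\jj}$ is a unit vector orthogonal to the $\tilde Y$'s arising from other shift-orbits, so the distinct $\tilde Y_{\kk,\jj}$ form an orthonormal system spanning the shift-invariant subspace.

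For the trace kernel, plugging \eqref{eq:mercer} into $\krtrace(\x,\z)=\frac1d\sum_i\kr(s_i\x,s_i\z)$ and applying the transformation law gives $\krtrace(\x,\z)=\frac1d\sum_i\sum_{\kk,\jj}\lambda_\kk\,Y_{s_{-i}\kk,\,s_{-i}\jj}(\x)\,Y_{s_{-i}\kk,\,s_{-i}\jj}(\z)$. For each fixed $i$ the substitution $(\kk,\jj)\mapsto(s_i\kk,s_i\jj)$ is a bijection of the index set, turning the inner sum into $\sum_{\kk,\jj}\lambda_{s_i\kk}\,Y_{\kk,\jj}(\x)Y_{\kk,\jj}(\z)$; averaging over $i$ collects the coefficient $\lambtr_\kk=\frac1d\sum_i\lambda_{s_i\kk}$, which depends only on $\kk$. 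Since $\krtrace$ is again a multi-dot product kernel, Lemma~\ref{applemma:mercer} together with Lemma~\ref{lemma:orth_multi} identifies the SH-products as its eigenfunctions, so by uniqueness the expansion just obtained is its Mercer decomposition and the $\lambtr_\kk$ are its eigenvalues. (Unlike $\lambda_\kk$, the $\lambtr_\kk$ are invariant under cyclic shifts of $\kk$.)

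For the GAP kernel I would perform the same substitution in $\krgap(\x,\z)=\frac1{d^2}\sum_{i,j}\kr(s_i\x,s_j\z)$; here the double sum factorizes, $\krgap(\x,\z)=\frac1{d^2}\sum_{\kk,\jj}\lambda_\kk\Bigl(\sum_i Y_{s_{-i}\kk,\,s_{-i}\jj}(\x)\Bigr)\Bigl(\sum_j Y_{s_{-j}\kk,\,s_{-j}\jj}(\z)\Bigr)$. Recognizing $\sum_i Y_{s_{-i}\kk,\,s_{-i}\jj}(\x)=\sum_i Y_{s_i\kk,\,s_i\jj}(\x)=\sqrt d\,\tilde Y_{\kk,\jj}(\x)$ collapses this to $\frac1d\sum_{\kk,\jj}\lambda_\kk\,\tilde Y_{\kk,\jj}(\x)\tilde Y_{\kk,\jj}(\z)$. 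Finally the same reindexing trick used in the trace case---now legitimate precisely because $\tilde Y_{\kk,\jj}$ is shift-invariant in its index---replaces $\lambda_\kk$ by $\lambtr_\kk$, and re-expressing the sum over the \emph{distinct} $\tilde Y$'s (equivalently, over shift-orbits of index pairs) yields $\krgap(\x,\z)=\sum_{\kk,\jj}\lambtr_\kk\,\tilde Y_{\kk,\jj}(\x)\tilde Y_{\kk,\jj}(\z)$, exhibiting the $\tilde Y$'s as eigenfunctions with eigenvalues $\lambtr_\kk$.

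The main obstacle is the bookkeeping in the GAP case: keeping the normalization constants straight and, more subtly, handling the fact that all index pairs within one shift-orbit produce the same $\tilde Y_{\kk,\jj}$ (and that an index pair with a nontrivial cyclic stabilizer yields a $\tilde Y$ that the stated formula does not automatically unit-normalize). Making the statement fully precise therefore requires either restricting the sum to orbit representatives or absorbing the orbit size into the normalization; once the correct index set is fixed, orthonormality of the $\tilde Y$'s and uniqueness of Mercer's decomposition close the argument. For the trace kernel no such issue arises and the proof is a short direct calculation.
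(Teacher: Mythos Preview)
Your argument is essentially the same as the paper's: substitute the Mercer decomposition into the definitions, use the transformation law $Y_{\kk,\jj}(s_i\x)=Y_{s_{-i}\kk,\,s_{-i}\jj}(\x)$, factorize (in the GAP case), and re-index over shift-orbits to collect the coefficient $\lambtr_\kk$. Your treatment is in fact slightly more careful than the paper's on the bookkeeping you flag---the paper passes to orbit representatives $\kk/S,\jj/S$ and asserts orthonormality of the $\tilde Y$'s without addressing index pairs with nontrivial cyclic stabilizer---so your concern is well placed but does not represent a divergence in approach.
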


\begin{proof}
(1) Let $\krtrace(\x,\z)$ be a trace kernel. By definition
\begin{align*}
    \krtrace(\x,\z)=\frac{1}{d}\sum_{i=0}^{d-1}\kr(s_i\x,s_i\z),
\end{align*}
where $\kr$ is a multi-dot-product kernel, with Mercer's decomposition
\begin{align*}
    \kr(\x,\z)=\sum_{\kk,\jj} \lambda_{\kk}Y_{\kk,\jj}(\x)Y_{\kk,\jj}(\z).
\end{align*}
Note that $\kr(s_i\x,s_i\z)$ has the same eigenfunctions as $\kr(\x,\z)$ with eigenvalues $\lambda_{s_i\kk}$.
So we get
\begin{align*}
    \krtrace(\x,\z)&=\frac{1}{d}\sum_{i=0}^{d-1}\kr(s_i\x,s_i\z)=\frac{1}{d}\sum_{i=0}^{d-1}\sum_{\kk,\jj} \lambda_{s_i\kk}Y_{\kk,\jj}(\x)Y_{\kk,\jj}(\z)\\
    &=\sum_{\kk,\jj}\frac{1}{d}\sum_{i=0}^{d-1} \lambda_{s_i\kk}Y_{\kk,\jj}(\x)Y_{\kk,\jj}(\z)=\sum_{\kk,\jj}Y_{\kk,\jj}(\x)Y_{\kk,\jj}(\z)\frac{1}{d}\sum_{i=0}^{d-1} \lambda_{s_i\kk}.
\end{align*}
Therefore, we have 
\begin{align*}
        \lambtr_{\kk} = \frac{1}{d}\sum_{i=0}^{d-1} \lambda_{s_i\kk}.
\end{align*}
(2) Let $\krgap(\x,\z)$ be GAP kernel. By definition we have that 
\begin{align*}
    \krgap(\x,\z)=\frac{1}{d^2}\sum_{i=0}^{\dbar-1} \sum_{j=0}^{\dbar-1} \kr(s_i\x,s_j\z)
\end{align*}
Where $\kr$ is a multi-dot-product kernel. Using Mercer's decomposition \eqref{eq:mercer}, we have 
\begin{align*}
    \kr(s_i\x,s_j\z)=\sum_{\kk,\jj} \lambda_{\kk}Y_{\kk,\jj}(s_i\x)Y_{\kk,\jj}(s_j\z)=\sum_{\kk,\jj} \lambda_{\kk}Y_{s_{-i}\kk,s_{-i}\jj}(\x)Y_{s_{-j}\kk,s_{-j}\jj}(\z).
\end{align*}
Therefore,
\begin{align*}
    \krgap(\x,\z)=&\frac{1}{d^2}\sum_{i=0}^{\dbar-1} \sum_{j=0}^{\dbar-1} \kr(s_i\x,s_j\z)=\frac{1}{d^2}\sum_{i=0}^{\dbar-1} \sum_{j=0}^{\dbar-1}\sum_{\kk,\jj} \lambda_{\kk}Y_{s_{-i}\kk,s_{-i}\jj}(\x)Y_{s_{-j}\kk,s_{-j}\jj}(\z)\\
    =&\sum_{\kk,\jj}\frac{1}{d^2}\lambda_{\kk}\sum_{i=0}^{\dbar-1} \sum_{j=0}^{\dbar-1} Y_{s_{-i}\kk,s_{-i}\jj}(\x)Y_{s_{-j}\kk,s_{-j}\jj}(\z)\\
    =&\sum_{\kk,\jj}\frac{1}{d^2}\lambda_{\kk}\left(\sum_{i=0}^{\dbar-1}Y_{s_{-i}\kk,s_{-i}\jj}(\x)\right) \left(\sum_{j=0}^{\dbar-1} Y_{s_{-j}\kk,s_{-j}\jj}(\z)\right).
\end{align*}
We can denote $\tilde Y_{\kk,\jj}(\x) = \frac{1}{\sqrt{d}} \sum_{i=0}^{\dbar-1} Y_{s_{i}\kk, s_{i}\jj}(\x)$. Note that $\tilde Y_{\kk,\jj}(\x)$ is invariant to all circular shifts of indices.   So we further denote by $\kk/S$ the set of indices $\kk$ modulu the set of circular shifts $s_0,s_1,..,s_{d-1}$ and write the last expression as
\begin{align*}
    \krgap(\x,\z)=&\sum_{\kk} \sum_{\jj}\frac{1}{d}\lambda_{\kk}\tilde Y_{\kk,\jj}(\x) \tilde Y_{\kk,\jj}(\z)=\sum_{\kk/S}\sum_{\jj/S} \left( \frac{1}{d}\sum_{i=0}^{d-1}\lambda_{s_i\kk} \right) \tilde Y_{\kk,\jj}(\x) \tilde Y_{\kk,\jj}(\z)\\
    =&\sum_{\kk/S}\sum_{\jj/S} \lambtr_{\kk}\tilde Y_{\kk,\jj}(\x) \tilde Y_{\kk,\jj}(\z).
\end{align*}
We conclude that the eigenfunctions are $\tilde Y_{\kk,\jj}(\x)$, and the eigenvalues are the same as $\lambtr$. Moreover, note that for any $\kk,\kk'$ such that $\forall i, ~\kk\neq s_i\kk'$ it holds that $\forall i~ Y_{\kk,\jj}(\x) \bot Y_{s_i\kk',\jj}(\x)$. Therefore, $\tilde Y_{\kk,\jj}(\x)\bot \tilde Y_{\kk',\jj}(\x)$, implying that $\{\tilde Y_{\kk,\jj}(\x)\}$ form an orthonormal basis.
\end{proof}


\end{document}